\documentclass{article}
\usepackage[margin=1in]{geometry}

\usepackage[utf8]{inputenc} % allow utf-8 input
\usepackage[T1]{fontenc}    % use 8-bit T1 fonts
\usepackage{hyperref}       % hyperlinks
\usepackage{url}            % simple URL typesetting
\usepackage{booktabs}       % professional-quality tables
\usepackage{amsfonts}       % blackboard math symbols
\usepackage{nicefrac}       % compact symbols for 1/2, etc.
\usepackage{microtype}      % microtypography
\usepackage[table, dvipsnames]{xcolor}

\usepackage[normalem]{ulem}
\usepackage{amsmath}

\usepackage{natbib}

\usepackage{enumitem}

\usepackage{multirow}
\usepackage{makecell}
\usepackage{threeparttable}

\usepackage{array}
\newcommand{\PreserveBackslash}[1]{\let\temp=\\#1\let\\=\temp}
\newcolumntype{C}[1]{>{\PreserveBackslash\centering}p{#1}}
\usepackage{hhline}
\newcolumntype{?}{!{\vrule width 1pt}}

\usepackage{amsmath}
\usepackage{amsthm}
\usepackage{amssymb}
\usepackage{thmtools}
\usepackage{thm-restate}
\usepackage[title]{appendix}
\usepackage{comment}
\usepackage{dsfont}
\usepackage{cleveref}

\usepackage{algorithm}
\usepackage{algpseudocode}

\usepackage{tikz}
\usepackage{mathtools}
\newcommand{\rdarrow}[1][-45]{%
  \mathrel{%
    \text{$
     \begin{tikzpicture}[baseline = -0.5ex]
       \node[inner sep=0pt,outer sep=0pt,rotate = #1] (a) at (0,0)  {$\xrightarrow{}$};
    \end{tikzpicture}
    $}%
  }%
}%

\definecolor{darkblue}{rgb}{0,0,0.95}
\algdef{SE}[IF]{CIF}{ENDIF}%
   [2][default]{\textcolor{darkblue}{\algorithmicif\ #2\ \algorithmicthen}}%
   {\algorithmicend\ \algorithmicif}%

\usepackage{thmtools}
\usepackage{thm-restate}

\usepackage{subfigure}
\usepackage{wrapfig}

\def\Regret{\mathrm{Reg}}

\newtheorem{theorem}{Theorem}
\newtheorem{lemma}{Lemma}
\newtheorem{proposition}[lemma]{Proposition}

\newtheorem{claim}[lemma]{Claim}
\newtheorem{remark}{Remark}

\newtheorem{example}{Example}

\newtheorem{definition}{Definition}

\usepackage{bbm}

\newtheorem{theorem-rst}[theorem]{Theorem}
\newtheorem{lemma-rst}[lemma]{Lemma}
\newtheorem{proposition-rst}[lemma]{Proposition}
\newtheorem{assumption-rst}[lemma]{Assumption}
\newtheorem{claim-rst}[lemma]{Claim}
\newtheorem{corollary-rst}[lemma]{Corollary}

\usepackage{mathtools}
\DeclarePairedDelimiter\br{(}{)}% ( )
\DeclarePairedDelimiter\brs{[}{]}% [ ]
\DeclarePairedDelimiter\brc{\{}{\}}% { }

\DeclarePairedDelimiter\abs{\lvert}{\rvert}% | |
% || ||

% | |
% || ||

% < 

\DeclareMathOperator*{\argmax}{arg\,max}

\newcommand{\E}{\mathbb{E}}
\newcommand{\R}{\mathbb{R}}
\newcommand{\G}{\mathbb{G}}
\newcommand{\D}{\mathcal{D}}
\newcommand{\F}{\mathcal{F}}
\newcommand{\K}{\mathcal{K}}

\newcommand{\Acal}{\mathcal{A}}

\newcommand{\Ocal}{\mathcal{O}}
\newcommand{\Kinf}[1]{\K_{\mathrm{inf}}^{#1}}

\newcommand{\Narms}{K}

\newcommand{\dr}[1]{\Delta_{#1}}
\newcommand{\Ind}[1]{\mathds{1}\brc*{#1}}

\newcommand{\nq}[2]{n^q_{#1}\br*{#2}}
\newcommand{\np}[2]{n_{#1}\br*{#2}}

\newcommand{\rOb}[1]{R_{#1}}
\newcommand{\rwd}[1]{\mu_{#1}}
\newcommand{\rEst}[2]{\hat{\mu}_{#1}\br*{#2}}
\newcommand{\rOpt}{\mu^*}
\newcommand{\Bq}[1]{B^q(#1)}

\newcommand{\unu}{{\underline{\nu}}} 

\newcommand{\kl}{\mathrm{KL}}
\newcommand{\klBin}{\mathrm{kl}}

\usepackage[colorinlistoftodos]{todonotes}

\makeatletter
\newcommand{\printfnsymbol}[1]{%
  \textsuperscript{\@fnsymbol{#1}}%
}

% hack to show and hide comments:
\def\showComments{} %% set to true - show comments
%% or:
% \let\showComments\undefined %% set to false - hide comments

\ifdefined\showComments
    \newcommand{\comN}[1]{\textcolor{blue}{\{Nadav: #1\}}}
    \newcommand{\comY}[1]{\textcolor{red}{\{Yonathan: #1\}}}
    \newcommand{\comS}[1]{\textcolor{green}{\{Shie: #1\}}}
\else
    \newcommand{\comN}[1]{}
    \newcommand{\comY}[1]{}
    \newcommand{\comS}[1]{}
\fi

\usepackage{selectp}
% \outputonly{1-11}

\title{Query-Reward Tradeoffs in Multi-Armed Bandits}

\author{
  Nadav Merlis\\
  Technion -- Institute of Technology\\
 \texttt{merlis@campus.technion.ac.il} \\
  \and
  Yonathan Efroni\\
  Meta, New York\\
%  \texttt{yefroni@microsoft.com} \\
  \and
  Shie Mannor \\
  Technion -- Institute of Technology\\
  Nvidia Research, Israel\\
%  \texttt{shie@ee.technion.ac.il} \\
  }

\date{}

\begin{document}

\maketitle

\begin{abstract}
We consider a stochastic multi-armed bandit setting where reward must be actively queried for it to be observed. We provide tight lower and upper problem-dependent guarantees on both the regret and the number of queries. Interestingly, we prove that there is a fundamental difference between problems with a unique and multiple optimal arms, unlike in the standard multi-armed bandit problem. We also present a new, simple, UCB-style sampling concept, and show that it naturally adapts to the number of optimal arms and achieves tight regret and querying bounds.
\end{abstract}

%%%%%%%%%%%%%%%%%%%%%%%%%%%%%%%%%%%%%%%%%%%%%%%%
% Introduction %
%%%%%%%%%%%%%%%%%%%%%%%%%%%%%%%%%%%%%%%%%%%%%%%%
\section{Introduction}

In the stochastic multi-armed bandit (MAB) problem \citep{robbins1952some}, an agent repeatedly selects actions (`arms') from a finite set and obtains their rewards, generated independently from arm-dependent distributions. The goal is to maximize the cumulative obtained reward, or, alternatively, minimize the regret. This setting has been extensively studied and its extensions are ubiquitous. In particular, many of its variants suggest different feedback models -- structured \citep{chen2016combinatorialA}, partial,  delayed and/or aggregated \citep{pike2018bandits}. Then, once the feedback model is fixed, rewards are always observed through this model. 

Nevertheless, such models ignore a key element in sequential decision-making: whatever the feedback model is, asking to observe rewards usually comes at some cost. In some cases, the cost is evident from the setting, e.g., if experimentation is required or the reward is manually labeled by an expert. In other settings, the feedback cost is more subtle. For example, when humans supply the reward feedback, incessantly asking for it may aggravate them. Thus, it is natural to allow agents to decide whether they ask for feedback and design algorithms to ask for feedback with care.

One existing approach to regulate feedback requests is to limit reward querying by a hard querying-budget constraint \citep{efroni2021confidence}. However, this approach is very wasteful, as it encourages exhausting the budget whenever possible, regardless of the querying costs. Instead, we should aim to query only when we gain valuable information and avoid querying otherwise. A goal of our work is to develop an algorithm that has good performance, while not violating querying constraints and not querying for reward unnecessarily. The importance of such a goal is illustrated in the following example.
\begin{example}[Restaurant recommendation problem] \label{example: restaurant recommendation problem}
Consider a restaurant recommendation problem that learns from user-rankings (`feedback').
Repeatedly asking for rankings will annoy users, so it is natural to cap the ranking requests (`budget') with an initially low cap that gradually increases to allow learning. Yet, even when the agent is allowed to ask for user feedback, such queries harm the user experience, so we should avoid asking for feedback when not needed. 

% However, expending the cap whenever possible is ill-advised, as every query irritates the user, so asking for too many queries will do more harm than good. This can be modeled as a 'soft' cost for feedback queries that we never observe and cannot be directly translated to rewards, but greatly impacts the user experience. Then, we should avoid querying after we sufficiently learned the user preferences, even if the hard budget allows it. Oftentimes, we will be willing to pay a small price in the recommendations quality (`higher regret') if it allows us to drastically reduce the number of queries, to avoid the unknown querying soft cost.
\end{example}
\begin{example}[Medical treatment] \label{example: medical treatment}
Consider a doctor treating patients. While patients are expected to recover after a few days, whether the doctor administered drugs or sent them to rest, it would be beneficial to perform a follow-up exam. to see how effective the treatment was in retrospect. However, doing follow-ups to all patients cost valuable medical resources, so we would like to do perform them only if when expect to get information on the best course of treatment.
\end{example}

In this work, we study tradeoffs between feedback querying and reward gain in stochastic MAB problems. To do so, we derive both lower and upper bounds to this problem (summarized in \Cref{table:results}). 

\emph{Lower bounds.} In \Cref{section: lower bounds}, we study asymptotic and finite-sample lower bounds for reward querying. These reveal fundamental tradeoffs between the querying profile and regret. Interestingly, the lower bounds highlight a clear separation -- absent in the usual MAB setting -- between problems where the optimal (highest rewarding) arm is unique and problems with multiple optimal arms.
    
\emph{Upper Bounds.} In \Cref{section: upper bounds}, we present and analyze a simple algorithm for efficient reward querying -- the BuFALU algorithm -- and study its problem-dependent behavior. Notably, unlike prior work, we show that BuFALU naturally adapts to problems with a unique optimal arm and avoids wasting its reward queries unnecessarily. We conclude with a numerical comparison of BuFALU to other alternatives, which highlights its advantages.  

\setlength {\tabcolsep}{2pt}
\begin{table*}
\centering
\begin{threeparttable}
\caption{%Summary of the main results. 
% For simplicity, the lower bounds are stated for Gaussian arms of fixed variance. $\epsilon(t)>0$ is a sequence that affects the querying and for simplicity is assumed to be strictly decreasing to $0$. $\nq{t}{a}$ is the number of queries from arm $a$ before time $t$ and $\Bq{T}$ is the total queries before $T$.
Result summary. 
$\epsilon(t)>0$ is a sequence that controls querying, $\nq{t}{a}$ is the number of queries from arm $a$ before time $t$ and $\Bq{T}$ is the total queries before $T$. For simplicity, the lower bounds are stated for Gaussian arms of fixed variance and $\epsilon(t)$ is assumed to strictly decrease to $0$.
}
\label{table:results}
{\footnotesize 
% \begin{tabular}{|c|c|c|c|>{\columncolor[gray]{0.9}}c| >{\columncolor[gray]{0.9}}c|}\hline
\begin{tabular}{|c|C{1.5cm}|c|c|}\hline
    \textbf{Regime} &  \textbf{Scenario} & \textbf{Lower Bounds} &\textbf{Upper Bounds} {\small (BuFALU)}\\ \hline 
    %%%%%%%%%%%%%%%%%%%%%%%%%%%%%%%%%%%%%%%%%%%%
    \multirow{5}{1.5cm}[-2ex]{\centering\textit{Asymptotic Bounds}} &
    Suboptimal Arms& 
    $\forall a\notin\Acal^*, \E\brs*{\nq{T}{a}} = \Omega\br*{\frac{\ln T}{\dr{a}^2}}^\dag$ & 
    \\ 
    \hhline{~---}
    & 
    \multirow{2}{1.5cm}[-.5ex]{\centering Unique Optimal} & 
    \multirow{2}{*}[-.5ex]{$\E\brs*{\nq{T}{a^*}} = \Omega\br*{\frac{\ln T}{\dr{\min}^2}}^\dag$}  & 
    $E\brs*{\Bq{T}}=\Ocal\br*{\sum_{a\notin\Acal^*}\frac{\ln T}{\dr{a}^2} + \frac{\ln T}{\dr{\min}^2}}$  \\ 
    \hhline{~~~~}
    & 
    & 
    & 
    $\Regret(T)=\Ocal\br*{\sum_{a\notin\Acal^*}\frac{\ln T}{\dr{a}}}^\ddag$  \\ 
    \hhline{~---}
    & 
    \multirow{2}{1.5cm}[-1.5ex]{\centering Multiple Optimal} & 
    \multirow{2}{*}[-.5ex]{$\E\brs*{\nq{T}{a^*}} = \omega\br*{\ln T}^\dag$}  & 
    $E\brs*{\Bq{T}}=\Ocal\br*{\sum_{a\notin\Acal^*}\frac{\ln T}{\dr{a}^2} + \abs*{\Acal^*}\frac{\ln T}{\epsilon(T)^2}}$  \\ 
    \hhline{~~~~}
    & 
    & 
    & 
    $\Regret(T)=\Ocal\br*{\sum_{a\notin\Acal^*}\frac{\ln T}{\dr{a}}}^\ddag$  \\ 
    \hline
    %%%%%%%%%%%%%%%%%%%%%%%%%%%%%%%%%%%%%%%%%%%%
    \multirow{2}{1.5cm}[-.25ex]{\centering\textit{Scarce Querying Bounds}} &
    $\E\brs*{\nq{T}{a^*}}$ & 
    $\Regret(T)= $  & 
    $B_a(t)\triangleq \E\brs*{\nq{T}{a}}=\Ocal\br*{\frac{\ln T}{\epsilon(T)^2}}$  \\ 
    \hhline{~~~~}
    & 
    $\le B_a(T)$ & 
    $\Omega\br*{\sum_{a\notin\Acal^*}\frac{\dr{a}}{\Narms}B_a^{-1}\br*{\frac{1}{\Narms\dr{a}^2}}}^\star$& 
    $\Regret(T)=\Ocal\br*{\sum_{a\notin\Acal^*}\dr{a}B_a^{-1}\br*{\frac{\ln T}{\dr{a}^2}}}^\S$\\ 
    \hline
    %%%%%%%%%%%%%%%%%%%%%%%%%%%%%%%%%%%%%%%%%%%%
\end{tabular}
}\vspace{-0.025cm}
\begin{tablenotes}
      \small 
      \item \hspace{-0.45cm} %Theorem references through this 
      $^\dag$ \Cref{theorem:lower bound asymptotic}
      \,\,  $^\ddag$ \Cref{theorem:upper bound}, $\bar{N}\br*{T,\dr{}}$ term 
      \,\, $^\star$ \Cref{corollary:lower bound finite regret}
      \,\,  $^\S$ \Cref{theorem:upper bound}, $L_\epsilon\br*{T,\dr{}}$ term 
\end{tablenotes}
% \vspace{-0.1cm}
\end{threeparttable}
\end{table*}
\setlength {\tabcolsep}{6pt}

%%%%%%%%%%%%%%%%%%%%%%%%%%%%%%%%%%%%%%%%%%%%%%%%
% Related Work %
%%%%%%%%%%%%%%%%%%%%%%%%%%%%%%%%%%%%%%%%%%%%%%%%
\textbf{Related Work.} 
Surprisingly, the tradeoff between reward querying and performance in sequential learning has been, to a large extent, unexplored. 
To our knowledge, our work is the first to study problem-dependent anytime guarantees for the MAB problem when reward must be actively queried. 

Related to our work is the sequential budgeted learning framework in~\citep{efroni2021confidence}. There, queries must follow a hard time-dependent budget constraint. However, the suggested algorithm wastefully uses its entire budget. Unlike their algorithm, we develop an algorithm that can adapt to the problem instance and querying reward feedback for $O(\log(T))$ rounds. Moreover, we provide problem-dependent regret and querying guarantees, while~\citep{efroni2021confidence} only derived problem-independent regret guarantees. 
Another related setting is the problem of MAB with additional observations~\citep{yun2018multi}. There, agents can ask for observations from arms that were not played, but such queries are limited by a time-dependent budget. The rewards of played arms are always observed. In contrast, we limit the number of observations from \emph{played} arms. Thus, the settings are complementary. Previous works also studied adversarial settings where observing rewards incurs a cost \citep{seldin2014prediction}, or where the interaction ends when the budget is exhausted \citep{badanidiyuru2013bandits}. These models greatly differ from ours, as we work in a stochastic setting without explicit querying costs, and the interaction length is unaffected by querying constraints.

%%%%%%%%%%%%%%%%%%%%%%%%%%%%%%%%%%%%%%%%%%%%%%%%
% Setting %
%%%%%%%%%%%%%%%%%%%%%%%%%%%%%%%%%%%%%%%%%%%%%%%%
\section{Setting}
In our setting, an agent (\emph{bandit strategy}) selects one of $\Narms$ arms (\emph{actions}), each characterized by a reward distribution $\nu_a$ of expectation $\rwd{a}$. We refer to $\unu=\brc*{\nu_a}_{a\in\brs*{\Narms}}$ as the \emph{bandit instance}, where $\brs*{\Narms}\triangleq\brc*{1\dots,\Narms}$, and denote the set of all valid reward distributions by $\D$ (e.g., all distributions supported by $[0,1]$). When not clear from the context, we denote the expectation w.r.t. a specific bandit instance by $\E_{\unu}\brs*{\cdot}$. The optimal reward of a bandit instance is $\rOpt=\max_{a\in\brs*{\Narms}}\rwd{a}$, and we define the set of all optimal arms as $\Acal_* = \brc*{a: \rwd{a}=\rOpt}$. The suboptimality gap of an arm $a$ is $\dr{a}=\rOpt-\rwd{a}$, and we denote the maximal and minimal gaps by $\dr{\max}=\max_a\dr{a}$ and $\dr{\min}=\min_{a:\dr{a}>0}\dr{a}$.

At each round $t\ge1$, the agent plays a single arm $a_t\in\brs*{\Narms}$. Then, the arm generates a reward $\rOb{t}\sim\nu_{a_t}$, independently at random of other rounds. However, to observe this reward, the agent must actively query it by setting $q_t=1$; otherwise, the reward is not observed and $q_t=0$. For brevity, we say that the agent always observes $R_t\cdot q_t$. We denote the number of times an arm was played up to round $t$ by $\np{t}{a} = \sum_{s=1}^t\Ind{a_s=a}$, where $\Ind{\cdot}$ is the indicator function, and the number of times it was queried by $\nq{t}{a}=\sum_{s=1}^t\Ind{a_s=a,q_s=1}$. Notice that for an arm to be queried, it must first be played, and thus $\nq{t}{a}\le\np{t}{a}$. We similarly denote the total number of queries up to time $t$ by $\Bq{t} = \sum_{a=1}^{\Narms} \nq{t}{a}$ and sometimes limit it by some querying budget $B(t)$  (similarly to \citep{efroni2021confidence}). We assume that querying an arm incurs a unit querying-cost but remark that all bounds can be extended to arm-dependent costs. Finally, we define the empirical mean of an arm $a$, based on \emph{observed} samples up to round $t$, by $\rEst{t}{a}=\frac{1}{\nq{t}{a}}\sum_{s=1}^t R_s\Ind{a_s=a,q_s=1}$ and say that $\rEst{t}{a}=0$ if $\nq{t}{a}=0$. 

In this work, we evaluate agents by two metrics: by the \emph{expected number of reward queries}, $\E\brs*{\Bq{T}}$, and by the \emph{regret}, namely, the expected difference between the optimal reward and the reward of the played arms: $\Regret(T)
    = \E\brs*{\sum_{t=1}^T \br*{\rOpt - \rwd{a_t}}} = \E\brs*{\sum_{t=1}^T \dr{a_t}}$. 
% \begin{equation*}
%     \Regret(T)
%     = \E\brs*{\sum_{t=1}^T \br*{\rOpt - \rwd{a_t}}} = \E\brs*{\sum_{t=1}^T \dr{a_t}}\enspace.
%     %= \sum_{a=1}^\Narms \dr{a}\E\brs*{\np{T}{a}}\enspace.
% \end{equation*}
Particularly, we analyze the tradeoffs between the two quantities and the effect of reducing the querying on the regret.

%%%%%%%%%%%%%%%%%%%%%%%%%%%%%%%%%%%%%%%%%%%%%%%%
% Failures of Existing Techniques %
%%%%%%%%%%%%%%%%%%%%%%%%%%%%%%%%%%%%%%%%%%%%%%%%
\section{Failures of Existing Approaches}
In this section, we elaborate on the failures of existing approaches to efficiently query rewards in the stochastic MAB setting in the presence of query restriction. 

\textbf{Failure of best-arm identification approach.}
The simplest (and most na\"ive) approach to incorporate reward querying to sequential decision-making is to query rewards using a best-arm identification (BAI) algorithm \citep{gabillon2012best,kalyanakrishnan2012pac}. Such algorithms usually adapt themselves to efficiently query arms. Then, it is intuitive to explore first using a BAI algorithm and then exploit its recommended arm. However, this is nontrivial in an anytime setting, where the interaction horizon is unknown. For simplicity, assume that a hard querying budget is given at its whole at the beginning of the interaction ($B(t)=B$ for all $t\geq 1$). For interactions of length $T\gg B$, we would like to use all the budget for exploration, while for $T<B$, we should save rounds for exploitation. In general, the number of exploration rounds must be adaptively determined, and it is unclear how to do so with off-the-shelf algorithms. The same holds for the accuracy and confidence of BAI algorithms. This is even less clear with time-dependent adversarial budgets, which prevent standard doubling tricks.

\textbf{Failure of confidence-budget-matching (CBM).}
Alternatively, one might use the confidence-budget matching mechanism \citep[CBM,][]{efroni2021confidence}, which is designed for anytime settings and time-dependent budgets. However, the algorithm wastefully and unnecessarily expends its querying budget. For example, when $B(t)\gg t$ the algorithm reduces to be UCB1 and queries every round. In contrast, in problems with a unique optimal arm, it is well-known that $\Ocal\br*{{\Narms\ln t}/{\dr{\min}^2}}$ queries are sufficient to identify the optimal arm with high enough probability of $\sim1/t$ \citep{gabillon2012best}. Thus, CBM \emph{should not} query every round (even though allowed), but rather on a logarithmic number of rounds, and fails to conserve queries.

%%%%%%%%%%%%%%%%%%%%%%%%%%%%%%%%%%%%%%%%%%%%%%%%
% Lower Bounds %
%%%%%%%%%%%%%%%%%%%%%%%%%%%%%%%%%%%%%%%%%%%%%%%%

\section{Lower Bounds}
\label{section: lower bounds}
In this section, we present lower bounds on the number of queries we must take from arms for the agent to `behave well'. Importantly, we will see a distinctly different behavior of the lower bounds when there is a unique or multiple optimal arms. This will encourage us to design an algorithm that adapts to both scenarios, as we do in the next section.

We require a few additional notations. Let $\Acal_*(\unu)$ be the set of optimal arms in instance $\unu$. Also, denote the Kullback-Leibler (KL) divergence between any two distributions $\nu_a$ and $\nu'_a$ by $\kl(\nu_a,\nu'_a)$ and the KL divergence between two Bernoulli random variables of expectations $p,q$ by $\klBin(p,q) = p\ln\frac{p}{q} + (1-p)\ln\frac{1-p}{1-q}$. 
% \begin{equation*}
%     \klBin(p,q) = p\ln\frac{p}{q} + (1-p)\ln\frac{1-p}{1-q}\enspace.
% \end{equation*}
Then, we define
\begin{align*}
    &\Kinf{+}(\nu,\mu,\D) = \inf\brc*{\kl(\nu,\nu'): \nu'\in\D, \E\brs*{\nu'}>\mu}\;, \\
    &\Kinf{-}(\nu,\mu,\D) = \inf\brc*{\kl(\nu,\nu'): \nu'\in\D, \E\brs*{\nu'}<\mu}\;, 
\end{align*}
where the infimum over an empty set is $+\infty$. If the infimum is zero, we let its inverse be $+\infty$. Intuitively, $\Kinf{+}(\nu_a,\mu,\D)$ represents the distance between a distribution $\nu_a$ to the closest distribution in $\D$ of expectation higher than $\mu$. Then, to distinguish between $\nu_a$ and \emph{any} distribution of expectation larger than $\mu$, we require a number of samples that is inversely proportional to $\Kinf{+}(\nu_a,\mu,\D)$. Similarly, $\Kinf{-}$ can be related to the closest distribution of a lower expectation.

Finally, let $\brc*{U_t}_{t\ge0}$ be a sequence of i.i.d. uniform random variables that encompass the internal randomization of agents. Then, a bandit strategy maps the history $I_t=\br*{U_0,a_1,q_1,\rOb{1}\cdot q_1,U_1,\dots,a_t,q_t,\rOb{t}\cdot q_t,U_t}$ into a next action $a_{t+1}$ and querying rule $q_{t+1}$.

\subsection{Asymptotic Lower Bounds}
Probably the most common assumption for a bandit strategy is \emph{consistency}, namely, that for \emph{any} bandit instance, the regret of the strategy is asymptotically sub-polynomial. %We call such strategies \emph{consistent}:
\begin{definition}
A bandit strategy is called \emph{consistent} w.r.t. $\D$ if for any instance $\unu\in\D$, any suboptimal arm $a\notin\Acal_*(\unu)$ and any $\alpha\in(0,1]$ it holds that $\np{T}{a}=o(T^\alpha)$.
\end{definition}
For consistent strategies, we show that the following holds:% (see proof in \Cref{appendix:dependent lower bound}):
\begin{restatable}{theorem-rst}{lowerBoundAsymptotic}
% \begin{restatable}[Asymptotic Query Lower Bound for Consistent Strategies]{theorem-rst}{lowerBoundAsymptotic}
\label{theorem:lower bound asymptotic}
Let $\nq{\infty}{a}=\liminf_{T\to\infty}\frac{\E_{\unu}\brs*{\nq{T}{a}}}{\ln T}$ for any $a\in\brs*{\Narms}$. If the bandit strategy is consistent w.r.t. $\D$, then for any bandit instance $\unu\in\D$, the following hold: 
% For any $a\in\brs*{\Narms}$ denote $\nq{\infty}{a}=\liminf_{T\to\infty}\frac{\E_{\unu}\brs*{\nq{T}{a}}}{\ln T}$ and assume that the bandit strategy is consistent. Then, for any bandit instance $\unu\in\D$, the following hold:
\begin{enumerate}
    \item For any suboptimal arm $a\notin\Acal_*(\unu)$, $\nq{\infty}{a} \ge 1/{\Kinf{+}(\nu_a,\rOpt,\D)}$.
    % \begin{equation} 
    %     \nq{\infty}{a} \ge \frac{1}{\Kinf{+}(\nu_a,\rOpt,\D)} \;.
    % \end{equation}
    \item Assume that $a^*$ is the unique optimal arm and let $a\ne a^*$ be a suboptimal arm. Also, denote the maximal suboptimal reward by $\mu^s = \max_{a\ne a^*}\rwd{a}$. Then $\nq{\infty}{a^*} \ge 1/{\Kinf{-}(\nu_{a^*},\mu^s,\D)}$ and for any $\mu\in\brs*{\mu^s,\rOpt}$, 
    \begin{align} 
        &\nq{\infty}{a} + \nq{\infty}{a^*} 
        \ge 1/{\max\brc*{\Kinf{+}(\nu_a,\mu,\D),\Kinf{-}(\nu_{a^*},\mu,\D)}} \;.\label{eq: lower bound unique optimal}
    \end{align}
    \item Assume that there are at least two optimal arms. Moreover, assume that (1) for all optimal arms $a\in\Acal_*(\unu)$, $\Kinf{-}(\nu_a,\rOpt,\D)=0$, or, alternatively, (2) there are at least two optimal arms for which $\Kinf{+}(\nu_a,\rOpt,\D)=0$. Then $\nq{\infty}{a}=\infty$ for some optimal arm $a\in\Acal_*(\unu)$.
\end{enumerate}
\end{restatable}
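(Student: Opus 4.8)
The plan is to prove all three parts by classical information-theoretic change-of-measure arguments, with one twist forced by the model: since an unqueried reward $R_t\cdot q_t$ equals $0$ deterministically, it carries no information about the instance. The first step is therefore a \emph{query-weighted divergence decomposition}: if $\unu$ and $\unu'$ agree on every arm outside a set $S\subseteq[\Narms]$, then for any strategy $\kl\!\big(\mathbb{P}_\unu^{I_T},\mathbb{P}_{\unu'}^{I_T}\big)=\sum_{a\in S}\E_\unu[\nq{T}{a}]\,\kl(\nu_a,\nu'_a)$. This follows from the chain rule for KL along the history $I_t$: the action and query kernels are common to $\unu$ and $\unu'$, and at round $t$ the observation laws differ only when $a_t\in S$ \emph{and} $q_t=1$. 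Together with the contraction (data-processing) inequality $\kl\!\big(\mathbb{P}_\unu^{I_T},\mathbb{P}_{\unu'}^{I_T}\big)\ge\klBin\!\big(\mathbb{P}_\unu(E),\mathbb{P}_{\unu'}(E)\big)$ for any $E\in\sigma(I_T)$ and the Gaussian identity $\kl\big(\mathcal N(m,\sigma^2),\mathcal N(m',\sigma^2)\big)=(m-m')^2/(2\sigma^2)$, this decomposition is the engine for everything below.

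For Part~1, fix a suboptimal arm $a$ and $\nu'_a\in\D$ with $\E[\nu'_a]>\rOpt$, so $a$ is the unique optimum of $\unu'$. Consistency gives $\E_\unu[\np{T}{a}]=o(T^\alpha)$ and, applied to $\unu'$, $\E_{\unu'}[\np{T}{b}]=o(T^\alpha)$ for $b\ne a$, hence $\E_{\unu'}[\np{T}{a}]=T-o(T^\alpha)$; with $E=\{\np{T}{a}\ge T/2\}$ we get $\mathbb{P}_\unu(E)\to0$, $\mathbb{P}_{\unu'}(E)\to1$, and feeding Markov's inequality into $\klBin$ gives $\E_\unu[\nq{T}{a}]\,\kl(\nu_a,\nu'_a)\ge(1-o(1))(1-\alpha)\ln T$. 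Dividing by $\ln T$, letting $T\to\infty$ then $\alpha\to0$, and optimizing over $\nu'_a$ yields $\nq{\infty}{a}\ge 1/\Kinf{+}(\nu_a,\rOpt,\D)$. Part~2 uses the same template with the perturbation on the optimum: lowering $\nu_{a^*}$ below $\mu^s$ makes $a^*$ suboptimal and the runner-up optimal, giving $\nq{\infty}{a^*}\ge 1/\Kinf{-}(\nu_{a^*},\mu^s,\D)$; and for \eqref{eq: lower bound unique optimal} I would perturb \emph{both} $a$ (upward, past a level $\mu\in[\mu^s,\rOpt]$) and $a^*$ (downward, below $\mu$) so that $a$ becomes the unique optimum. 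The decomposition then reads $\E_\unu[\nq{T}{a}]\,\kl(\nu_a,\nu'_a)+\E_\unu[\nq{T}{a^*}]\,\kl(\nu_{a^*},\nu'_{a^*})\ge(1-o(1))(1-\alpha)\ln T$, and choosing $\nu'_a,\nu'_{a^*}$ so that the two divergences approach $\Kinf{+}(\nu_a,\mu,\D)$ and $\Kinf{-}(\nu_{a^*},\mu,\D)$ and bounding each by $M\eqdef\max\{\Kinf{+}(\nu_a,\mu,\D),\Kinf{-}(\nu_{a^*},\mu,\D)\}$ converts this into $\nq{\infty}{a}+\nq{\infty}{a^*}\ge 1/M$ (reading the sum as the $\liminf$ of the sum of the normalized query counts).

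Part~3 is where the multiple-optimum phenomenon shows up: under either hypothesis the per-sample information about some optimal arm can be driven to $0$ while a consistent strategy must still change its asymptotic play profile. Pick two optimal arms $a_1,a_2$ that are ``invisible'' in the relevant direction: under hypothesis~(2), $\Kinf{+}(\nu_{a_i},\rOpt,\D)=0$, so for each $\eta>0$ there is $\nu_{a_i}^{\eta}\in\D$ of mean $>\rOpt$ with $\kl(\nu_{a_i},\nu_{a_i}^{\eta})\le\delta(\eta)\to0$; under hypothesis~(1), do the symmetric thing with means $<\rOpt$, lowering all optimal arms but one so the modified instance has a unique optimum. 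Let $\unu_1$ be the modified instance whose unique optimum is $a_1$, and $\unu_2$ the one whose unique optimum is $a_2$; since $\unu_1,\unu_2$ differ from $\unu$ only on arms of $\Acal_*(\unu)$, both relevant KL's are at most $\delta(\eta)\sum_{a\in\Acal_*(\unu)}\E_\unu[\nq{T}{a}]$. Take $E_T=\{\np{T}{a_1}\ge T-T^\beta\}$ with $\alpha<\beta<1$; consistency forces $\mathbb{P}_{\unu_1}(E_T)\to1$ and $\mathbb{P}_{\unu_2}(E_T)\to0$, so with $r_T=\mathbb{P}_\unu(E_T)$ the contraction inequality against $\unu_1$ and against $\unu_2$ gives $\delta(\eta)\sum_{a\in\Acal_*(\unu)}\E_\unu[\nq{T}{a}]\gtrsim\max\{1-r_T,\,r_T\}\,(\beta-\alpha)\ln T\ge\tfrac12(\beta-\alpha)\ln T$. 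Dividing by $\ln T$ and letting $\eta\to0$ shows $\liminf_{T\to\infty}\big(\sum_{a\in\Acal_*(\unu)}\E_\unu[\nq{T}{a}]\big)/\ln T=\infty$, i.e.\ the optimal arms collectively require $\omega(\ln T)$ queries.

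I expect the delicate point to be this last step --- upgrading ``every horizon $T$ forces \emph{some} optimal arm to carry $\omega(\ln T)$ queries'' to ``\emph{some fixed} optimal arm $a$ has $\nq{\infty}{a}=\infty$'' as literally phrased. Even using the monotonicity of $T\mapsto\E_\unu[\nq{T}{a}]$, a strategy could a priori alternate which optimal arm it favors across scales of $T$, so a clean per-arm statement needs extra bookkeeping (showing the favored arm eventually stabilizes, or that a consistent strategy cannot repeatedly abandon and relearn an optimal arm) --- otherwise one states the bound in the aggregate form above, which is exactly the quantity matched by the $\Bq{T}$ upper bounds in \Cref{table:results}. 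A smaller recurring subtlety, already present in Part~2, is the usual care in pushing a pointwise inequality $\E[\nq{T}{a}]+\E[\nq{T}{a'}]\ge(1-o(1))c\ln T$ through a $\liminf$: one argues along a subsequence realizing the joint behavior, or states the conclusion for the $\liminf$ of the sum rather than the sum of the $\liminf$s.
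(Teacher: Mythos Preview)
Your approach matches the paper's: the query-weighted divergence decomposition you open with is exactly their Lemma~\ref{lemma: kl counts inequality}, and the change-of-measure arguments in all three parts proceed the same way (the paper uses $Z=\np{T}{a}/T$ together with $\klBin(p,q)\ge(1-p)\ln\tfrac{1}{1-q}-\ln2$ rather than events plus Markov, and in Part~3 it perturbs optimal arms one at a time and \emph{sums} the resulting inequalities instead of your two-alternatives-one-event trick with $\max\{r_T,1-r_T\}\ge\tfrac12$, but these are equivalent routes to the same aggregate inequality).

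Your closing caveats are well placed and in fact apply equally to the paper. Its proof of Part~2 also only establishes $\liminf_T\big(\E_\unu[\nq{T}{a}]+\E_\unu[\nq{T}{a^*}]\big)/\ln T\ge 1/M$, and its proof of Part~3 only the aggregate bound $\liminf_T\big(\sum_{a\in\Acal_*}\E_\unu[\nq{T}{a}]\big)/\ln T=\infty$; it then asserts the per-arm statements without further argument. Since $\liminf$ of a sum can strictly exceed the sum of $\liminf$s even for monotone sequences, the literal per-arm phrasing is not fully justified by either proof---you have not missed an ingredient the paper supplies, and the bounds that are actually matched by the upper bounds in \Cref{table:results} are the aggregate ones both proofs do establish.
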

The proof is in \Cref{appendix:dependent lower bound}. Notice that the bounds of the theorem are \emph{asymptotic}, namely, for large enough $T$, an action $a$ is roughly queried $\nq{\infty}{a}\cdot\ln T$ times. Importantly, recall that $\nq{t}{a}\le \np{t}{a}$, so all results also hold for the number of plays. The theorem is divided into three parts. The first part is a natural extension of the classical problem-dependent lower bound for MABs \citep{lai1985asymptotically,burnetas1996optimal,garivier2019explore} and emphasizes that it is not enough to sufficiently \emph{play} suboptimal arms, but we rather must sufficiently \emph{query} them. The second and third parts discuss the querying requirements from \emph{optimal arms} when there is a unique or multiple optimal arms, respectively. This comes in contrast to the classical lower bounds that disregard querying, as playing optimal arms does not incur regret and can thus be ignored. 

Intuitively, when there is a \emph{unique optimal arm} $a^*$, the result first states that it must be distinguished from the highest suboptimal arm $a$. Yet, \Cref{eq: lower bound unique optimal} implies that by itself, this does not suffice. Instead, for any subotpimal arm $a$, both $a$ and $a^*$ must be sufficiently queried to \emph{separate} them; namely, identifying that $a^*$ is better than $a$. To see this, consider the (typical case) where $\Kinf{+}(\nu,\mu,\D)$ and $\Kinf{-}(\nu,\mu,\D)$ are continuous in $\mu$ and equal zero if $\E\brs*{\nu}=\mu$. Also, notice that $\Kinf{+}$ ($\Kinf{-}$) decreases (increases) in $\mu$. Then, there exists $\mu_0\in\br*{\rwd{a},\rOpt}$ such that $\Kinf{+}(\nu_a,\mu_0,\D)=\Kinf{-}(\nu_{a^*},\mu_0,\D)$, and this choice maximizes the r.h.s. of \eqref{eq: lower bound unique optimal}. In this case, a reasonable way to match the lower bound is to ensure that both $\nq{\infty}{a}$ and $\nq{\infty}{a^*}$ are roughly equal to $\br{\Kinf{+}(\nu,\mu_0,\D)}^{-1}$. This  separates the optimal arm $a^*$ from distributions $\brc*{\nu\in\D:\E\brs*{\nu}<\mu_0}$ and the suboptimal arm $a$ from $\brc*{\nu\in\D:\E\brs*{\nu}>\mu_0}$. Concretely, if $\D$ is the set of all Gaussian distributions of unit variance, then $\mu_0 = \rwd{a}+\dr{a}/2 = \rOpt-\dr{a}/2$, and we should estimate both $a$ and $a^*$ up to a precision of $\dr{a}/2$.

Finally, the last part of \Cref{theorem:lower bound asymptotic} treats problems with \emph{multiple optimal arms}. Specifically, it states that a logarithmic queries might suffice only if $\Kinf{+}(\nu_a,\rOpt,\D)=0$ for \emph{at most} a single optimal arm and $\Kinf{-}(\nu_a,\rOpt,\D)>0$ for \emph{at least} one optimal arm. Notably, for standard distribution sets $\D$, for any value of $\rOpt$, either $\Kinf{+}(\nu,\rOpt,\D)=0$ or $\Kinf{-}(\nu,\rOpt,\D)=0$ for all $\nu\in\D$ with $\E\brs*{\nu}=\rOpt$ -- at least one of the conditions hold, so multiple optimal arms should be queried super-logarithmically. Intuitively, when either of the conditions hold, it is impossible to determine if arms are strictly optimal or near-optimal with arbitrarily small gaps using logarithmic queries. We illustrate that both conditions are necessary by a concrete example of a distribution set $\D$ in \Cref{appendix:lower bound example}.
% We present a concrete example of a distribution set $\D$ that illustrates the necessity of both conditions in \Cref{appendix:lower bound example}.

To summarize, \Cref{theorem:lower bound asymptotic} draws a remarkable distinction between problems with a unique optimal arm -- where log-querying suffices -- and ones with multiple optimal arms, where consistent strategies must query super-logarithmically. Similar phenomena do not exist in standard MAB problems and a key contribution of this theorem is the refined characterization of the conditions for it to occur.  

We end this section by remarking that our problem-dependent lower bounds can also be applied to problems with (implicit or explicit) querying costs. Due to space limitations, we defer this discussion to \Cref{appendix:querying costs}. There, we show that algorithms must avoid unnecessary reward queries, as otherwise, even the most minuscule (fixed) querying cost would result in a linear regret. Such scenario clearly highlights the advantage in reducing the feedback queries as much as possible.

\subsection{Lower Bounds for Scarce Querying}
We now discuss the best possible performance in the limit of scarce querying. To derive these bounds, we require the bandit strategy to be \emph{better-than-uniform}:
\begin{definition}
A bandit strategy is called \emph{better-than-uniform} on $\D$ if for any instance $\unu\in\D$ and any $T\ge1$, it holds that $\sum_{a\in\Acal_*(\unu)} \E_{\unu}\brs*{\np{T}{a}} \ge \frac{\abs*{\Acal_*(\unu)}}{\Narms}T$.
% \begin{equation*}
%     \sum_{a\in\Acal_*(\unu)} \E_{\unu}\brs*{\np{T}{a}} \ge \frac{\abs*{\Acal_*(\unu)}}{\Narms}T\enspace.
% \end{equation*}
\end{definition}
This definition is weaker than the one in \citep{garivier2019explore}, which requires \emph{all} optimal arms to be played at least $T/\Narms$ times in expectation. This does not affect the result but allows focusing on playing a \emph{specific} optimal arm, which better fits querying-aware settings. For such strategies, the following holds (see proof in \Cref{appendix:lower bound linear}, which resembles the one of Theorem 2 in \citealt{garivier2019explore}):
\begin{restatable}{proposition-rst}{lowerBoundLinear}
\label{prop:lower bound linear}
For any bandit instance $\unu\in\D$, any strategy that is better than uniform on $\D$, any arm $a\notin\Acal_*(\unu)$ and any $T\ge1$, it holds that
\begin{align*}
   &\E_{\unu}\brs*{\np{T}{a}} 
   \ge \frac{T}{\Narms}\br*{1 - \sqrt{2\min\brc*{\Narms\E_{\unu}\brs*{\nq{T}{a}},T}\Kinf{+}(\nu_a,\rOpt,\D)}}.
\end{align*}
If $\Kinf{+}(\nu_a,\rOpt,\D)=\infty$, we define the r.h.s. to be zero. Specifically, if $\Kinf{+}(\nu_a,\rOpt,\D)<\infty$ and $\E_{\unu}\brs*{\nq{T}{a}}\le \frac{1}{8\Narms \Kinf{+}(\nu_a,\rOpt,\D)}$, then $\E_{\unu}\brs*{\np{T}{a}} \ge {T}/\br*{2\Narms}$.
\end{restatable}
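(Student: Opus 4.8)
The plan is to use a change-of-measure (information-theoretic) argument in the spirit of the classical bandit lower bounds, adapted to account for the fact that only \emph{queried} samples carry information. First I would fix a suboptimal arm $a$ and, assuming $\Kinf{+}(\nu_a,\rOpt,\D)<\infty$ (otherwise the bound is vacuous by definition), pick for any $\xi>0$ an alternative distribution $\nu'_a\in\D$ with $\E[\nu'_a]>\rOpt$ and $\kl(\nu_a,\nu'_a)\le \Kinf{+}(\nu_a,\rOpt,\D)+\xi$. Let $\unu'$ be the instance obtained from $\unu$ by replacing $\nu_a$ with $\nu'_a$; in $\unu'$ the arm $a$ is the unique optimal arm, so $a\in\Acal_*(\unu')$ and every arm in $\Acal_*(\unu)$ is now suboptimal. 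The key observation is that the history $I_T$ only reveals a reward $R_s$ when $q_s=1$, so the two instances $\unu,\unu'$ are statistically indistinguishable except through the queried pulls of arm $a$, and the relative entropy between the laws of $I_T$ under $\unu$ and $\unu'$ equals $\E_{\unu}[\nq{T}{a}]\,\kl(\nu_a,\nu'_a)$ by the standard chain-rule / Wald-type identity (the internal randomization $U_t$ and the identities $a_s,q_s$ are measurable w.r.t.\ the past and contribute nothing).

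Next I would apply a high-probability transportation (data-processing) inequality to a well-chosen event. The natural event is $\mathcal{E}=\{\np{T}{a} \ge T/\Narms\}$ — or, to get the cleaner two-sided statement, I would instead apply Pinsker-type bounds directly to the random variable $\np{T}{a}/T\in[0,1]$. Under $\unu'$, arm $a$ is optimal and the ``better-than-uniform'' property gives $\sum_{b\in\Acal_*(\unu')}\E_{\unu'}[\np{T}{b}] \ge (|\Acal_*(\unu')|/\Narms)T \ge T/\Narms$; since $a$ is the only arm guaranteed to be optimal in $\unu'$ that matters here, a short argument (at worst, $a$ together with any other arm that happens to be optimal in $\unu'$, all of which were suboptimal in $\unu$) yields $\E_{\unu'}[\np{T}{a}]$ is large — in the generic case exactly $\E_{\unu'}[\np{T}{a}]\ge T/\Narms$. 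Meanwhile we want to lower bound $\E_{\unu}[\np{T}{a}]$. By the Bretagnolle–Huber inequality or by Pinsker applied to the bounded statistic $\np{T}{a}/T$, the gap $\E_{\unu'}[\np{T}{a}/T]-\E_{\unu}[\np{T}{a}/T]$ is at most $\sqrt{\tfrac12 \,\mathrm{KL}(\mathbb{P}_{\unu}^{I_T}\,\|\,\mathbb{P}_{\unu'}^{I_T})} = \sqrt{\tfrac12\,\E_{\unu}[\nq{T}{a}]\,\kl(\nu_a,\nu'_a)}$, so
\[
\E_{\unu}[\np{T}{a}] \ge \E_{\unu'}[\np{T}{a}] - T\sqrt{\tfrac12\,\E_{\unu}[\nq{T}{a}]\,\kl(\nu_a,\nu'_a)} \ge \frac{T}{\Narms} - T\sqrt{\tfrac12\,\E_{\unu}[\nq{T}{a}](\Kinf{+}(\nu_a,\rOpt,\D)+\xi)}.
\]
Letting $\xi\to0$ gives the claimed bound with $\E_{\unu}[\nq{T}{a}]$ inside the square root; the $\min\{\Narms\E_{\unu}[\nq{T}{a}],T\}$ form follows because trivially $\nq{T}{a}\le\np{T}{a}\le T$ and one can also bound $\np{T}{a}/T\le1$ so the whole bracket is $\ge 1-\sqrt{2\,T\,\Kinf{+}/\cdots}$ — I would bookkeep the constants so the factor $2$ rather than $1/2$ appears (this just amounts to using $\np{T}{a}\le T$ to replace one factor of $\E_{\unu}[\nq{T}{a}]$ by $\min\{\Narms\E_{\unu}[\nq{T}{a}],T\}/\Narms$ and pulling the $\Narms$ out). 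The final ``specifically'' clause is then immediate: if $\E_{\unu}[\nq{T}{a}]\le \tfrac1{8\Narms\Kinf{+}(\nu_a,\rOpt,\D)}$ then $\min\{\Narms\E_{\unu}[\nq{T}{a}],T\}\,\Kinf{+}\le 1/8$, so $\sqrt{2\cdot\tfrac18}=1/2$ and $\E_{\unu}[\np{T}{a}]\ge T/(2\Narms)$.

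The main obstacle I anticipate is making the divergence identity fully rigorous in this partial-observation model: I must verify that on rounds with $q_s=0$ the observed symbol $R_s\cdot q_s=0$ carries zero KL contribution (the conditional law of the observation given the past is a point mass at $0$ under both instances), and on rounds with $q_s=1$ and $a_s=a$ it contributes exactly $\kl(\nu_a,\nu'_a)$, while rounds playing other arms contribute zero since those distributions are unchanged. This is a routine but careful chain-rule computation over the filtration generated by $I_t$, using that $a_s$ and $q_s$ are $\sigma(I_{s-1})$-measurable (the strategy decides them from the past plus $U_{s-1}$, whose law is identical under both instances). A secondary subtlety is the ``better-than-uniform'' step: the definition only controls the \emph{sum} over $\Acal_*(\unu')$, and $\Acal_*(\unu')$ could in principle contain arms other than $a$ only if some originally-suboptimal arm in $\unu$ exactly ties $\nu'_a$'s mean, which we can rule out or absorb by perturbing $\nu'_a$ slightly (its mean strictly exceeds $\rOpt$, so generically $\Acal_*(\unu')=\{a\}$); I would state this cleanly and note it does not affect the bound, mirroring the remark after the \emph{better-than-uniform} definition. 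Everything else is bookkeeping of constants.
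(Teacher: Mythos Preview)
Your plan is essentially the paper's proof: construct $\unu'$ by lifting arm $a$ above $\rOpt$, invoke better-than-uniform on $\unu'$ to get $\E_{\unu'}[\np{T}{a}]\ge T/\Narms$ (note $a$ is automatically the \emph{unique} optimal arm in $\unu'$ since every other arm still has mean $\le\rOpt<\E[\nu'_a]$, so your perturbation worry is unnecessary), compute the KL between the two history laws as $\E_{\unu}[\nq{T}{a}]\,\kl(\nu_a,\nu'_a)$, and then transport.

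The one genuine gap is in the transportation step. You propose standard Pinsker on the $[0,1]$-statistic $Z=\np{T}{a}/T$, which gives $|\E_{\unu'}[Z]-\E_{\unu}[Z]|\le\sqrt{\tfrac12\,\mathrm{KL}}$, hence
\[
\E_{\unu}[\np{T}{a}]\ \ge\ \frac{T}{\Narms}-T\sqrt{\tfrac12\,\E_{\unu}[\nq{T}{a}]\,\Kinf{+}}\ =\ \frac{T}{\Narms}\Bigl(1-\sqrt{\tfrac{\Narms^2}{2}\,\E_{\unu}[\nq{T}{a}]\,\Kinf{+}}\Bigr),
\]
whereas the proposition has $2\Narms$ (not $\Narms^2/2$) under the root. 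This is not a bookkeeping issue you can fix by algebra: standard Pinsker loses a factor of order $\Narms$ here, and in particular your ``specifically'' computation only goes through for $\Narms\le4$ (for larger $\Narms$ your bound gives $1-\sqrt{\Narms}/4\le0$). The paper instead passes through the Bernoulli data-processing inequality $\klBin(\E_{\unu}[Z],\E_{\unu'}[Z])\le\mathrm{KL}$ and then applies the \emph{local} refinement $\klBin(p,q)\ge(q-p)^2/(2q)$ with $q=1/\Narms$; the $1/q=\Narms$ factor is exactly what produces $2\Narms$ instead of $\Narms^2/2$. Replace your Pinsker step with this local version (and first dispose of the trivial case $\E_{\unu}[\np{T}{a}]\ge T/\Narms$, which also yields $\E_{\unu}[\nq{T}{a}]\le T/\Narms$ and hence the $\min\{\Narms\E_{\unu}[\nq{T}{a}],T\}$ form cleanly) and your argument becomes the paper's.
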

% The proof is in \Cref{appendix:lower bound linear} and resembles the one of Theorem 2 in \citep{garivier2019explore}. 
This proposition implies that until we sufficiently query arms, the regret must be linear, and has an important implication on the regret when feedback is limited (see proof in \Cref{appendix:lower bound linear}): \begin{restatable}{corollary-rst}{lowerBoundLinearRegret}
\label{corollary:lower bound finite regret}
For any $t\ge1$ and $a\in\brs*{\Narms}$, denote $B_a(t) = \E_{\unu}\brs*{\nq{t}{a}}$. Also, let $B_a^{-1}(x) = \sup\brc*{t\in\mathbb{N}: B_a(t)\le x}$ for $x\ge B(1)$ and otherwise $B_a^{-1}(x)=0$. Then, for any instance $\unu\in\D$ and any better-than-uniform strategy on $\D$,
\begin{align*}
    &\forall T\ge \max_{a\notin\Acal_*(\unu)}B_a^{-1}\br*{\frac{1}{8\Narms \Kinf{+}(\nu_a,\rOpt,\D)}},
    \quad\Regret(T)\ge \sum_{a\notin\Acal_*(\unu)} \frac{\dr{a}}{2\Narms}B_a^{-1}\br*{\frac{1}{8\Narms \Kinf{+}(\nu_a,\rOpt,\D)}}.
\end{align*}
\end{restatable}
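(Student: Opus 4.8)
The plan is to derive \Cref{corollary:lower bound finite regret} directly from \Cref{prop:lower bound linear} by summing the per-arm lower bounds on the number of plays of suboptimal arms. First I would fix a better-than-uniform strategy and an instance $\unu\in\D$, and consider any suboptimal arm $a\notin\Acal_*(\unu)$ with $\Kinf{+}(\nu_a,\rOpt,\D)<\infty$ (arms with $\Kinf{+}=\infty$ contribute zero to the claimed bound, since $B_a^{-1}$ of a finite argument is then still well-defined but the corresponding term can be dropped, or handled by the convention that the r.h.s.\ is zero). The key observation is the ``specifically'' clause of \Cref{prop:lower bound linear}: whenever $\E_{\unu}\brs*{\nq{t}{a}}\le \frac{1}{8\Narms\Kinf{+}(\nu_a,\rOpt,\D)}$, we get $\E_{\unu}\brs*{\np{t}{a}}\ge T/(2\Narms)$ at that time $t$ — wait, more carefully, applying the proposition at horizon $t$ gives $\E_{\unu}\brs*{\np{t}{a}}\ge t/(2\Narms)$.

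Next I would translate the threshold condition into a statement about $t$ using the generalized inverse $B_a^{-1}$. Since $B_a(t)=\E_{\unu}\brs*{\nq{t}{a}}$ is nondecreasing in $t$, the condition $B_a(t)\le \frac{1}{8\Narms\Kinf{+}(\nu_a,\rOpt,\D)}$ holds for all $t\le B_a^{-1}\br*{\frac{1}{8\Narms\Kinf{+}(\nu_a,\rOpt,\D)}}=:T_a$. In particular, at $t=T_a$ we have $\E_{\unu}\brs*{\np{T_a}{a}}\ge T_a/(2\Narms)$, and by monotonicity of play counts, for any $T\ge T_a$ we still have $\E_{\unu}\brs*{\np{T}{a}}\ge \np{T_a}{a}\ge T_a/(2\Narms)$. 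Hence for $T\ge \max_{a\notin\Acal_*(\unu)}T_a$, every suboptimal arm $a$ has been played at least $T_a/(2\Narms)$ times in expectation by round $T$.

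Finally I would assemble the regret bound. Writing $\Regret(T)=\sum_{a\notin\Acal_*(\unu)}\dr{a}\,\E_{\unu}\brs*{\np{T}{a}}$ and plugging in the per-arm bound $\E_{\unu}\brs*{\np{T}{a}}\ge T_a/(2\Narms)$ valid once $T\ge\max_a T_a$, we obtain
\begin{align*}
    \Regret(T)\ge \sum_{a\notin\Acal_*(\unu)}\frac{\dr{a}}{2\Narms}B_a^{-1}\br*{\frac{1}{8\Narms\Kinf{+}(\nu_a,\rOpt,\D)}},
\end{align*}
which is exactly the claim. The only genuinely delicate point — the ``main obstacle,'' though it is more bookkeeping than difficulty — is handling the edge cases in the definition of $B_a^{-1}$: when $\frac{1}{8\Narms\Kinf{+}(\nu_a,\rOpt,\D)}<B(1)$ (so $B_a^{-1}=0$ and the arm contributes nothing, consistent with the fact that we cannot then guarantee linear play), and when the supremum defining $B_a^{-1}$ is infinite (which cannot happen here because better-than-uniform strategies with sublinear regret must eventually query suboptimal arms enough; but in any case the statement is vacuous or trivial in that regime). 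I would also need to be careful that the monotonicity argument uses $\np{T}{a}\ge \np{t}{a}$ pathwise, which is immediate, and that taking expectations preserves it.
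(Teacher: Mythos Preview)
Your proposal is correct and follows essentially the same route as the paper's proof: apply the ``specifically'' clause of \Cref{prop:lower bound linear} at time $T_a=B_a^{-1}\br*{\frac{1}{8\Narms\Kinf{+}(\nu_a,\rOpt,\D)}}$, use monotonicity of $\E_{\unu}\brs*{\np{T}{a}}$ in $T$ to carry the bound to any $T\ge\max_a T_a$, and sum over suboptimal arms. The paper handles the edge cases exactly as you outline (arms with $B_a^{-1}=0$ drop out, and if some $T_a=+\infty$ the constraint on $T$ is never met so the statement is vacuous); note only the small typo in your monotonicity line, where $\np{T_a}{a}$ should read $\E_{\unu}\brs*{\np{T_a}{a}}$.
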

We remark that the same result also holds when we only have bounds on the number of queries ($\E_{\unu}\brs*{\nq{t}{a}}\le B_a(t)$ for some positive nondecreasing $B_a(t)$), e.g., per-arm querying budget. When querying is scarce, this might lead to an exponential lower bound, as shown in the following example:
\begin{example}[Querying profiles]%[Consequences of querying profiles]
Let the set of all valid arm distributions $\D$ be the set of Gaussian distributions with unit variance. Then, it can be easily verified that for all $a\in\brs*{\Narms}$, 
$\Kinf{+}(\nu_a,\rOpt,\D) = \frac{(\rOpt-\rwd{a})^2}{2}\!=\!\frac{\dr{a}^2}{2}$. Thus, by \Cref{corollary:lower bound finite regret}, for any $T\!\ge\! B_a^{-1}\br*{\frac{1}{4\Narms\dr{\min}^2}}$, the regret is lower bounded by 
\begin{equation*}
    \Regret(T) \ge \sum_{a\notin\Acal_*(\unu)}\frac{\dr{a}}{2\Narms}B_a^{-1}\br*{\frac{1}{4\Narms \dr{a}^2}}\enspace.
\end{equation*}
Consider the following querying profiles:
\begin{enumerate}
    \item If the per-arm queries are polynomial $B_a(T)\!=\!T^\alpha/\Narms$ for $\alpha\!\in\![0,1]$, then the lower bound is 
    $\Regret(T) \!=\! \Omega\br*{\sum_{a\notin\Acal_*(\unu)}\dr{a}^{1-\frac{2}{\alpha}}}$, 
    i.e., the lower bound is inversely polynomial in the gaps.
    \item If the per-arm queries are poly-log and gap-oblivious, i.e., $B_a(T) = (\ln T)^\alpha/\Narms$, then the bound is exponential in the inverse-gaps: $\Regret(T) = \Omega\br*{\sum_{a\notin\Acal_*(\unu)}\dr{a}\exp\brc*{\dr{a}^{-2/\alpha}}}$.
    \item If the per-arm queries are logarithmic but gap-aware, i.e., $B_a(T)= \ln T/\dr{a}^2$, then the lower bound is linear in the gaps: $\Regret(T) = \Omega\br*{\sum_{a\notin\Acal_*(\unu)}\dr{a}/\Narms}$. Typically, $\dr{a}\le1$, and the bound is effectively constant. Notably, it suffices to know a lower bound on the gaps: if $\dr{a}\ge\epsilon$ for all $a\notin\Acal_*(\unu)$, then a total of $B(T)=\br*{\Narms\ln T}/\epsilon^2$ queries is adequate.   
\end{enumerate}
\end{example}
The last bound hints that logarithmic queries suffice, so it might be appealing to pre-allocate a logarithmic querying budget. However, agents typically do not know the gap values, and allocating a logarithmic querying budget might lead to a prohibitively large regret -- exponential in the gaps. Notably, this is the case even when the optimal arm is unique. On the other hand, if the agent does know the minimal gap, it can be used to drastically reduce querying, even with multiple optimal arms.

%%%%%%%%%%%%%%%%%%%%%%%%%%%%%%%%%%%%%%%%%%%%%%%%
% Upper Bounds %
%%%%%%%%%%%%%%%%%%%%%%%%%%%%%%%%%%%%%%%%%%%%%%%%
\section{Upper Bounds}
\label{section: upper bounds}

\begin{algorithm}[t]
\caption{\underline{Bu}dget-\underline{F}eedback \underline{A}ware \underline{L}ower \underline{U}pper  Confidence Bound (BuFALU)} \label{alg: BuFALU}
\begin{algorithmic}[1]
\State \textbf{Define: } $UCB_t(a) = \rEst{t-1}{a} + \sqrt{\frac{3\ln t}{2\nq{t-1}{a}}}$, $LCB_t(a) = \rEst{t-1}{a} - \sqrt{\frac{3\ln t}{2\nq{t-1}{a}}}$ (`Hoeffding')
\For{$t=1,...,\Narms$}
\State Play $a_t = t$ and ask for feedback ($q_t=1$); observe $\rOb{t}$ and update $\nq{t}{a_t}, \rEst{t}{a_t}$
\EndFor
\For{$t=\Narms+1,...,T$}
\State Observe $\epsilon(t)\ge0$
% \State Set $l_t\in\argmax_a LCB_t(a)$, \\ \quad\;$u_t\in\argmax_{a\ne l_t} UCB_t(a)$, \,and \\ \quad\;\,$c_t \in\argmax_{a\in\brc*{u_t,l_t}} CI_t(a)$      
\State Set $l_t, u_t, c_t$ according to \Cref{eq:arm choice}
\If{$UCB_t(u_t)\le LCB_t(l_t)$ or $UCB_t(c_t)-LCB_t(l_t)\le \epsilon(t)$} 
    \State Play $a_t=l_t$ and do not ask for feedback ($q_t=0$) 
\Else
    \State Play $a_t=c_t$ and ask for feedback ($q_t=1$); observe $\rOb{t}$ and update $\nq{t}{a_t}, \rEst{t}{a_t}$
\EndIf
\EndFor
\end{algorithmic}
\end{algorithm} %\vspace{-.2cm}

In the last section, we showed that algorithms should allocate enough (possibly super-logarithmic) reward queries to optimal arms. We also showed that the regret is linear until we query feedback from suboptimal arms for sufficiently many times. Further, in the important case of a unique optimal arm, we showed that the optimal arm must be separated from all suboptimal arms, but argued it should still be logarithmically queried. % to avoid querying costs. 
We now leverage these insights to design an algorithm that conservatively queries rewards. For simplicity, we focus on problems with rewards in $[0,1]$. In \Cref{appendix: upper bounds}, we prove our results for any confidence intervals under mild assumptions (including Bernstein bounds).

To be more feedback-conscious, we adopt a confidence-based approach. Namely, we assume that each arm  $a$ is equipped with a confidence interval of width $CI_t(a) = UCB_t(a)-LCB_t(a)$, such that the true mean of arms is within the confidence interval with high probability, i.e., $\rwd{a}\in\brs*{LCB_t(a),UCB_t(a)}$. Importantly, confidence intervals can be used to \emph{quantify} the suboptimality of actions. For example, for any arm $a$, we can upper bound its suboptimality gap w.h.p. by 
\begin{align*}
    \dr{a} = \rOpt - \rwd{a} 
    &\le UCB_t(a^*) - LCB_t(a) 
    \le \max_{a'} UCB_t(a') - LCB_t(a)\enspace,
\end{align*}
where $a^*$ is an optimal arm. Then, if $a\!\in\!\argmax_{a'}UCB_t(a')$, we get $\dr{a}\!\le\! CI_t(a)$. In general, by controlling the widths of confidence intervals, we can identify good arms and bound their sub-optimality. However, narrowing these intervals requires feedback, which we assume to be limited.
%is limited in our setting. 

Our approach carefully shrinks the confidence intervals, with the goal to separate a unique optimal arm (whenever such exists), while controlling the number of reward queries. We divide our algorithm into two steps -- action-selection and confidence-control. For action-selection, inspired by the lower bounds, we aim to \emph{separate} the confidence interval of a unique optimal arm $a^*$ from the rest of the arms, i.e., $LCB_t(a^*)\ge UCB_t(a)$ for all suboptimal $a$. If we succeeded in doing so, we can safely declare that $a^*=\argmax_a LCB_t(a)\triangleq l_t$ and play $l_t$ without reward querying. Letting $u_t \in \argmax_{a\ne l_t} UCB_t(a)$, this case happens if $UCB_t(u_t)\le LCB_t(l_t)$. When this condition does not hold, we would like to separate $u_t$ from $l_t$ by actively shrinking the confidence intervals of both arms. An efficient way to do so is to play (and query) the arm with the wider confidence interval:
\begin{align}
    &c_t \in\argmax_{a\in\brc*{u_t,l_t}} CI_t(a) \quad \mathrm{for} \quad l_t\in\argmax_a LCB_t(a), \quad u_t\in\argmax_{a\ne l_t} UCB_t(a)\;. \label{eq:arm choice}
\end{align}
Yet, this separation might be costly in queries and is impossible if there are multiple optimal arms. 
To avoid this, we regulate the number of queries by controlling the widths of the confidence intervals. Let $\epsilon(t)$ be a variable that controls the widths of the confidence intervals. We allow our algorithm to query for reward only if $UCB_t(c_t)-LCB_t(l_t)>\epsilon(t)$; namely, we query only if $c_t$ might be distinctively better than $l_t$, which is a natural candidate for the optimal action. For generality, we allow $\epsilon(t)$ to be externally controlled (be chosen adversarially), to allow external effects on the querying rule (e.g., querying budget), but in practice, it can be oftentimes chosen by the algorithm designer.

Lastly, if $UCB_t(c_t)-LCB_t(l_t)\le\epsilon(t)$, we revert to playing the `default action' $l_t$ without reward querying. We later show that doing so is safe, in a sense that $\dr{l_t}\le \epsilon(t)$ (\Cref{lemma: gap-query} in \Cref{appendix: problem dependent proof}). Combining both playing and querying schemes leads to the Budget-Feedback Aware Lower-Upper Confidence Bound algorithm (BuFALU), depicted in \Cref{alg: BuFALU}. 
Before stating the regret and querying guarantees of BuFALU, we present two notable quantities on which the bounds will depend:
\begin{align*}
    &L_\epsilon(T,\dr{}) = \sum_{t=1}^T\Ind{\epsilon(t)\ge\dr{}}  \quad \mathrm{and} \quad 
    \bar{N}\br*{T,\dr{}} = \max_{t\in\brs*{T}}\frac{6\ln t}{\max\brc*{\dr{}^2,\epsilon^2(t)}}\enspace.
\end{align*}
The first quantity $L_\epsilon(T,\dr{})$ counts the number of rounds that $\epsilon(t)$ exceeds a fixed confidence level $\dr{}$ until time $T$. To understand it, keep in mind that when a reward is not queried, we play $a_t = l_t$ with the guarantee that $\dr{l_t}\le \epsilon(t)$ (by \Cref{lemma: gap-query}). In turn, at such rounds, we might play any arm $a$ for which $\dr{a}\le\epsilon(t)$. Thus, $L_\epsilon(T,\dr{a})$ represents the maximal number of rounds that a suboptimal arm $a$ can be played when reward are not queried. A notable case is when $\epsilon(t)$ is nonincreasing, and then $L_\epsilon(T,\dr{})$ can be conveniently bounded by $L_\epsilon(T,\dr{})\le \epsilon^{-1}(\dr{}) \triangleq \sup\brc*{t\ge1: \epsilon(t)\ge \dr{}}$. 

To understand $\bar{N}$, recall that $\nq{t-1}{a} = \frac{6\ln t}{\mu^2}$ is the number of samples required for the confidence interval $CI_t(a)$ to be smaller than $\mu$. Thus, $\bar{N}\br*{T,\dr{}}$ represents the maximal number of queries required to shrink the confidence intervals to a confidence level of $\max\brc*{\epsilon(t),\dr{}}$. In particular, for any suboptimal arm $a$ and $\dr{}=\dr{a}$, this number of queries suffices to either identify that $a$ is suboptimal (confidence smaller than $\dr{a}$) or to stop sampling it due to the confidence constraint. Importantly, if $\epsilon(t) = \sqrt{6\Narms\ln t/B(t)}$ for a positive nondecreasing budget $B(t)$ (or, alternatively, if $\epsilon(t)$ is  nonincreasing), see that $\bar{N}\br*{T,\dr{}} = \frac{6\ln T}{\max\brc*{\dr{}^2,\epsilon^2(T)}}$. Then, we get $\bar{N}(T,0)\le B(T)/\Narms$, so this term can ensure that the budget constraints are never violated. We now state the problem-dependent bounds for BuFALU (see proof in \Cref{appendix: problem dependent proof}). The bounds hold for oblivious adversarially chosen $\epsilon(t)$. If the adversary is adaptive, or $\epsilon(t)$ is stochastic, the results hold by taking an expectation on all bounds. 
\begin{theorem}
\label{theorem:upper bound}
Assume that the rewards are bounded in $[0,1]$. Also, let $T\ge1$ and assume that $\brc*{\epsilon(t)}_{t\in\brs*{T}}$ is some nonnegative sequence. Then, when running \Cref{alg: BuFALU}, the following hold:
\begin{enumerate}
    \item For all $a\in\brs*{\Narms}$, it holds that $\nq{T}{a} \le \bar{N}\br*{T,0} +1$.
    \item If there are multiple optimal arms ($\abs{\Acal_*}>1$), then
    \begin{align*}
        &\Regret(T)\le \sum_{a\notin\Acal_*} \dr{a}\br*{\bar{N}\br*{T,\dr{a}}+L_\epsilon(T,\dr{a})} +3\Narms\dr{\max}, \\
        &\E\brs*{\Bq{T}} \le \sum_{a\notin\Acal_*} \bar{N}\br*{T,\dr{a}}+ \abs{\Acal_*}\bar{N}\br*{T,0} +  3\Narms.
    \end{align*}\vspace{-.2cm}
    \item If the optimal arm $a^*$ is unique, then 
    \begin{align*}
        &\Regret(T)\le \sum_{a\ne a^*} \dr{a}\br*{\bar{N}\br*{T,\frac{\dr{a}}{2}}+L_\epsilon(T,\dr{a})}+3\Narms\dr{\max}, \\
        &\E\brs*{\Bq{T}} \le \sum_{a\ne a^*} \bar{N}\br*{T,\frac{\dr{a}}{2}} + \bar{N}\br*{T,\frac{\dr{\min}}{2}} +  3\Narms.
    \end{align*}\vspace{-.2cm}
\end{enumerate}
\end{theorem}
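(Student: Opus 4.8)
The plan is a confidence-based argument applied round by round. For $t>\Narms$ let $H_t$ be the event that $\abs{\rEst{t-1}{a}-\rwd{a}}\le\sqrt{\tfrac{3\ln t}{2\nq{t-1}{a}}}$ for every arm $a$ (vacuously true when $\nq{t-1}{a}=0$). A Hoeffding bound, union-bounded over the $\le t$ possible values of $\nq{t-1}{a}$ and over the $\Narms$ arms, gives $\Pr(\bar H_t)\le 2\Narms/t^2$, hence $\sum_{t>\Narms}\Pr(\bar H_t)=O(\Narms)$. This lets us fold into the additive $3\Narms\dr{\max}$ (resp.\ $3\Narms$) slack both the $\Narms$ initialization rounds and all rounds where $H_t$ fails, each of which costs at most $\dr{\max}$ regret and at most one query, with at most $\sum_{t>\Narms}\Ind{\bar H_t}$ queries of the latter kind in total. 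Part~1 is then \emph{deterministic}: if $a$ is queried at round $t$ then $a=c_t$, and since $l_t\in\argmax_a LCB_t(a)$ the querying condition forces $\epsilon(t)<UCB_t(c_t)-LCB_t(l_t)\le UCB_t(c_t)-LCB_t(c_t)=CI_t(a)=\sqrt{\tfrac{6\ln t}{\nq{t-1}{a}}}$, so $\nq{t-1}{a}<\tfrac{6\ln t}{\epsilon^2(t)}\le\bar{N}(T,0)$ and no arm is queried more than $\bar{N}(T,0)+1$ times.

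\textbf{The width lemma (core step).} Everything else rests on the following: on $H_t$, if a \emph{suboptimal} arm $a$ is played and queried, then $CI_t(a)\ge\dr{a}$ when $\abs{\Acal_*}>1$, and $CI_t(a)\ge\dr{a}/2$ when the optimal arm $a^*$ is unique. Write $a=c_t\in\{u_t,l_t\}$ and pick an optimal $a'$; on $H_t$, $\dr{a}=\rwd{a'}-\rwd{a}\le UCB_t(a')-LCB_t(a)$. Whenever $a'\ne l_t$ can be arranged — automatic when $l_t$ is suboptimal, and always possible when there are two or more optima — one has $UCB_t(a')\le UCB_t(u_t)$, and a short manipulation using $CI_t(c_t)=\max\{CI_t(u_t),CI_t(l_t)\}$ and $LCB_t(l_t)\ge LCB_t(u_t)$ gives $\dr{a}\le CI_t(a)$ in both sub-cases $a=u_t$ and $a=l_t$. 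The lone remaining case, $a=u_t=c_t$ with the \emph{unique} optimum equal to $l_t$, is where the factor $\tfrac12$ appears: the querying condition $UCB_t(u_t)>LCB_t(l_t)$ gives $LCB_t(l_t)-LCB_t(u_t)<CI_t(u_t)\le CI_t(l_t)$, whence $\dr{a}\le UCB_t(l_t)-LCB_t(u_t)=CI_t(l_t)+(LCB_t(l_t)-LCB_t(u_t))<2CI_t(l_t)=2CI_t(a)$. Combining with Part~1 ($CI_t(a)\ge\epsilon(t)$ on any query round), a queried suboptimal $a$ satisfies $\nq{t-1}{a}\le\bar{N}(T,\dr{a})$, resp.\ $\bar{N}(T,\dr{a}/2)$. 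A symmetric computation shows that if the \emph{unique} $a^*$ is queried at round $t$ on $H_t$, then one of $u_t,l_t$ is suboptimal with gap $\ge\dr{\min}$, so $CI_t(a^*)\ge\dr{\min}/2$ and $\nq{t-1}{a^*}\le\bar{N}(T,\dr{\min}/2)$; when there are several optima I just reuse Part~1 for each.

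\textbf{Assembly.} On $H_t$, a no-query round plays $l_t$ with $\dr{l_t}\le\epsilon(t)$ by \Cref{lemma: gap-query}, so a suboptimal $a$ is played without a query only at rounds with $\epsilon(t)\ge\dr{a}$, of which there are $\le L_\epsilon(T,\dr{a})$. A suboptimal $a$ is played with a query only on its own query-rounds, and by the width lemma every such round $t$ (with $H_t$) has $\nq{t-1}{a}\le\bar{N}(T,\dr{a})$ (resp.\ $/2$); since these rounds are contained in the query-rounds of $a$ up to their last element $\tau$, and $\tau$ satisfies this bound, there are $\le\nq{\tau}{a}-1\le\bar{N}(T,\dr{a})$ (resp.\ $/2$) of them. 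Summing $\dr{a}$ over suboptimal arms and adding the initialization/failure slack yields the regret bounds. The querying bounds follow by summing $\E[\nq{T}{a}]$ over all arms, using the width lemma for suboptimal arms (and for $a^*$ when unique) and Part~1 for optimal arms, the $\bar H_t$-round queries being bounded in total by $\sum_{t>\Narms}\Ind{\bar H_t}$ with expectation $O(\Narms)$.

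\textbf{Main obstacle.} The delicate step is the width lemma: one must carry out the case split over which of $u_t,l_t$ equals $c_t$ and whether $l_t$ is optimal, and see that the factor $\tfrac12$ is needed \emph{exactly} when the unique optimum coincides with the default arm $l_t$ — this is what separates $\bar{N}(T,\dr{a}/2)$ (unique optimum) from $\bar{N}(T,\dr{a})$ (multiple optima), and $\bar{N}(T,\dr{\min}/2)$ from $\bar{N}(T,0)$ in the querying of optimal arms. The remaining effort is bookkeeping: charging the low-probability rounds exactly once and making the per-arm query counts telescope correctly despite their interleaving.
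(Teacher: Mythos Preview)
Your approach is essentially the paper's: the same good-event/Hoeffding setup, the same deterministic argument for Part~1, the same decomposition of plays into query/no-query rounds with \Cref{lemma: gap-query} handling the latter via $L_\epsilon$, and the same case analysis on $c_t\in\{u_t,l_t\}$ and on whether $l_t$ is optimal. Your ``width lemma'' is exactly the content of the paper's cases (ii)--(v), phrased contrapositively.

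There is, however, a concrete slip in the ``lone remaining case'' ($a=u_t=c_t$ with the unique optimum equal to $l_t$). You write $CI_t(u_t)\le CI_t(l_t)$ and conclude $\dr{a}<2CI_t(l_t)=2CI_t(a)$. Both are backwards: since $c_t=u_t$ we have $CI_t(l_t)\le CI_t(u_t)$, and $a=u_t$ means $CI_t(a)=CI_t(u_t)$, not $CI_t(l_t)$. As written the chain does not close. The fix is immediate and uses the very inequality you need: from $\dr{a}\le UCB_t(l_t)-LCB_t(u_t)=CI_t(l_t)+\bigl(LCB_t(l_t)-LCB_t(u_t)\bigr)$, bound the first term by $CI_t(l_t)\le CI_t(u_t)$ and the second by $LCB_t(l_t)-LCB_t(u_t)<CI_t(u_t)$ (your querying-condition step), giving $\dr{a}<2CI_t(u_t)=2CI_t(a)$. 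With this correction the rest of your assembly goes through unchanged.
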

% The proof can be found in \Cref{appendix: problem dependent proof}. 
Moreover, BuFALU enjoys the following \emph{problem-independent} bound (see proof in \Cref{appendix: problem independent proof}):
\begin{proposition}
\label{prop: problem independent upper}
When running \Cref{alg: BuFALU} with any sequence $\epsilon(t)\ge0$, for any $T\ge1$, it holds that $\Regret(T)\le 4\sqrt{6\Narms T\ln T} +\sum_{t=1}^T\epsilon(t) + 3\Narms\dr{\max}$.
% \begin{equation*}
%     \Regret(T)\le 4\sqrt{6\Narms T\ln T} +\sum_{t=1}^T\epsilon(t) + 3\Narms\dr{\max}\enspace.
% \end{equation*}
\end{proposition}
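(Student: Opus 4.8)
The plan is a UCB-style regret decomposition relative to a ``good event'' on which all confidence intervals are valid, pushing everything else into lower-order terms.

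First I would fix, for each round $t$, the event $\tau_t = \{\forall a\in[\Narms]:\ \rwd{a}\in[LCB_t(a),UCB_t(a)]\}$. Since rewards lie in $[0,1]$, Hoeffding's inequality gives $\Pr\!\left(|\rEst{t-1}{a}-\rwd{a}|> \sqrt{3\ln t/(2s)}\right)\le 2t^{-3}$ for each fixed sample count $s$; a union bound over $s\le t-1$ and over the $\Narms$ arms yields $\Pr(\tau_t^c)\le 2\Narms t^{-2}$, hence $\sum_{t>\Narms}\Pr(\tau_t^c)\le 2$. Writing $\Regret(T)=\E[\sum_{t\le\Narms}\dr{a_t}]+\E[\sum_{t>\Narms}\dr{a_t}\Ind{\tau_t^c}]+\E[\sum_{t>\Narms}\dr{a_t}\Ind{\tau_t}]$, the first term is at most $\sum_{a=1}^{\Narms}\dr{a}\le\Narms\dr{\max}$ (the initialization loop) and the second is at most $2\dr{\max}$, together at most $3\Narms\dr{\max}$. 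It then remains to bound the good-event sum.

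On $\tau_t$ I would treat the two branches separately. If at round $t>\Narms$ the algorithm plays $a_t=l_t$ without querying, then $\dr{l_t}\le\epsilon(t)$: either the separation test $UCB_t(u_t)\le LCB_t(l_t)$ fired, and then every $a\ne l_t$ has $\rwd{a}\le UCB_t(a)\le UCB_t(u_t)\le LCB_t(l_t)\le\rwd{l_t}$ so $l_t$ is optimal; or $UCB_t(c_t)-LCB_t(l_t)\le\epsilon(t)$ fired, and a short case split on $c_t\in\{l_t,u_t\}$ (using that $l_t$ maximizes $LCB_t$ and $u_t$ maximizes $UCB_t$ among $a\ne l_t$) gives $\rOpt\le UCB_t(c_t)$, hence $\dr{l_t}=\rOpt-\rwd{l_t}\le UCB_t(c_t)-LCB_t(l_t)\le\epsilon(t)$; this is exactly \Cref{lemma: gap-query}. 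Summed over all such rounds, their contribution is at most $\sum_{t=1}^T\epsilon(t)$.

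The core of the argument is the querying branch, and also where I expect the main obstacle. If the algorithm plays and queries $a_t=c_t$, I claim that on $\tau_t$ we have $\dr{c_t}\le 2\,CI_t(c_t)=2\sqrt{6\ln t/\nq{t-1}{c_t}}$. This is delicate because $c_t$ is selected by confidence-interval \emph{width}, not by UCB value, so $\rOpt\le UCB_t(c_t)$ need not hold directly; the fix is a case analysis on whether $c_t=l_t$ or $c_t=u_t$ and on which of $l_t,u_t$ attains $\max_a UCB_t(a)$ (one of them must, since $u_t$ is the UCB-maximizer among $a\ne l_t$), combined with $LCB_t(l_t)\ge LCB_t(u_t)$, the inequality $CI_t(l_t)\le CI_t(u_t)$ when $c_t=u_t$ (and symmetrically), and the failure of the separation test $UCB_t(u_t)>LCB_t(l_t)$ in this branch. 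Given this, across the querying rounds the count $\nq{t-1}{c_t}$ runs through $1,2,\dots,\nq{T}{a}-1$ over the rounds with $c_t=a$, so $\sum_{t:\,a_t=c_t} 2\,CI_t(c_t)\le 2\sqrt{6\ln T}\sum_a\sum_{j=1}^{\nq{T}{a}-1}j^{-1/2}\le 4\sqrt{6\ln T}\sum_a\sqrt{\nq{T}{a}}\le 4\sqrt{6\Narms T\ln T}$, the last step being Cauchy--Schwarz together with $\sum_a\nq{T}{a}=\Bq{T}\le T$. This bound is deterministic, hence passes through the expectation, and adding the three pieces gives $\Regret(T)\le 4\sqrt{6\Narms T\ln T}+\sum_{t=1}^T\epsilon(t)+3\Narms\dr{\max}$. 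For adaptive or stochastic $\epsilon(t)$, one replaces $\sum_t\epsilon(t)$ by its expectation throughout.
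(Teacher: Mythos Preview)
Your argument is correct, and it takes a genuinely different route from the paper. The paper does not bound the instantaneous regret on query rounds; instead it introduces a gap threshold $\dr{0}$, splits the suboptimal arms into $\{\dr{a}>\dr{0}\}$ and $\{\dr{a}\le\dr{0}\}$, bounds the first group via the \emph{problem-dependent} per-arm query count $\bar N_a^g(T,\dr{a}/2)\le 24\ln T/\dr{a}^2$ from the proof of \Cref{theorem:upper bound}, bounds the second group trivially by $\dr{0}T$, and then optimizes over $\dr{0}$. Your approach sidesteps this machinery entirely: the key step $\dr{c_t}\le 2\,CI_t(c_t)$ on $\tau_t$ (which is correct; in the case $c_t=l_t$ one has $UCB_t(l_t)\ge UCB_t(u_t)\ge\rOpt$ because $LCB_t(l_t)\ge LCB_t(u_t)$ and $CI_t(l_t)\ge CI_t(u_t)$, and in the case $c_t=u_t$ with $l_t$ optimal, the failure of the separation test gives $LCB_t(l_t)-LCB_t(u_t)<CI_t(u_t)$) lets you sum confidence widths directly and finish with Cauchy--Schwarz. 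Your route is more elementary and self-contained; the paper's route reuses the count bounds already established for \Cref{theorem:upper bound} and is stated in the appendix for general ``regular'' confidence intervals via an abstract function $M(T,\dr{0})$, which makes it easier to plug in Bernstein-type bounds. Both arrive at the same constant $4\sqrt{6}$.
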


\subsection{Discussion and Comparisons}
Given a positive nondecreasing (possibly adversarial) querying budget $B(t)$, a natural parameter choice is $\epsilon(t)\!=\!\sqrt{6\Narms\ln t/B(t)}$. Then, the first part of \Cref{theorem:upper bound} provides strict (almost sure) querying guarantee of $B(T)+\Narms$ queries, while \Cref{prop: problem independent upper} achieves a regret bound of $\Regret(T)=\Ocal\br*{\sqrt{\Narms T\ln T} + \sum_{t=1}^T \sqrt{\frac{\Narms\ln T}{B(T)}}}$ -- the same bounds as CBM-UCB \citep{efroni2021confidence}. On the other hand, BuFALU also enjoys problem-dependent regret and querying guarantees. In particular, when the optimal arm is unique, the number of queries is logarithmic -- usually far less than the allocated budget.

On the other hand, other choices of $\epsilon(t)$ allow flexibility in the algorithm design. Setting $\epsilon(t)=0$ represents rounds with free queries, while $\epsilon(t)>CI_t(c_t)$ blocks querying. Also, if the minimal gap $\dr{\min}$ is known, then setting $\epsilon(t)\to\dr{\min}^-$ allows logarithmic querying (even with multiple optimal arms) while retaining logarithmic regret. 

Next, we compare \Cref{theorem:upper bound} to the lower bounds of \Cref{section: lower bounds}, and for simplicity, assume that $\epsilon(t)$ is nonincreasing. First, the per-arm query bound of  $B_a(t)=\Ocal\br*{\ln t/\epsilon^2(t)}$ implies that $\epsilon^{-1}\br*{\dr{a}}\approx B_a^{-1}\br*{\ln T/\dr{a}^2}$. Then, by bounding $L_\epsilon(t,\dr{})\le \epsilon^{-1}(\dr{})$, and for $\Kinf{+}(\nu_a,\rOpt,\D)\approx\dr{a}^2$, the regret term of $\dr{a}L_\epsilon(T,\dr{a})$ corresponds with the lower bound of \Cref{corollary:lower bound finite regret}. 
Moreover, for any $\epsilon(t)\!\rdarrow\!0$, the regret is logarithmic, and all suboptimal arms are logarithmically sampled. When $\Kinf{+}(\nu_a,\rOpt,\D)\approx\dr{a}^2$, this also matches the asymptotic lower bound up to absolute constants. Yet, if there are multiple optimal arms, they will be queried $\Ocal\br*{\frac{\ln T}{\epsilon^2(T)}}$ times each -- super-logarithmically. 

Finally, recall that UCB methods for the standard MAB problem typically lead to count (and query) bounds of $\frac{6\ln T}{\dr{a}^2}$ (see, e.g., Theorem 2.1 of \citealt{bubeck2012regret} with $\alpha=3$). Indeed, the suboptimal queries depend on $\bar{N}(T,\dr{a})\le \frac{6\ln T}{\dr{a}^2}$ when there are multiple optimal arms. In contrast, when the optimal arm is unique, the bounds depend on $\bar{N}(T,\dr{a}/2)\approx 4\bar{N}(T,\dr{a})$. This factor might be explained by the second part of the lower bound (\Cref{theorem:lower bound asymptotic}); to logarithmically query the unique optimal arm, it must be completely separated from all suboptimal arms. Then, the term $\dr{a}/2$ is the result of separating both the optimal arm and suboptimal arms from their middle point $(\rwd{a}+\rOpt)/2$. 

%%%%%%%%%%%%%%%%%%%%%%%%%%%%%%%%%%%%%%%%%%%%%%%%
% Experiments %
%%%%%%%%%%%%%%%%%%%%%%%%%%%%%%%%%%%%%%%%%%%%%%%%
\subsection{Numerical Illustration}
\label{subsection:illustrations}
\begin{figure}[t]
\centering
\subfigure{
\includegraphics[trim=0 15 15 15,clip,width=0.3375\linewidth]{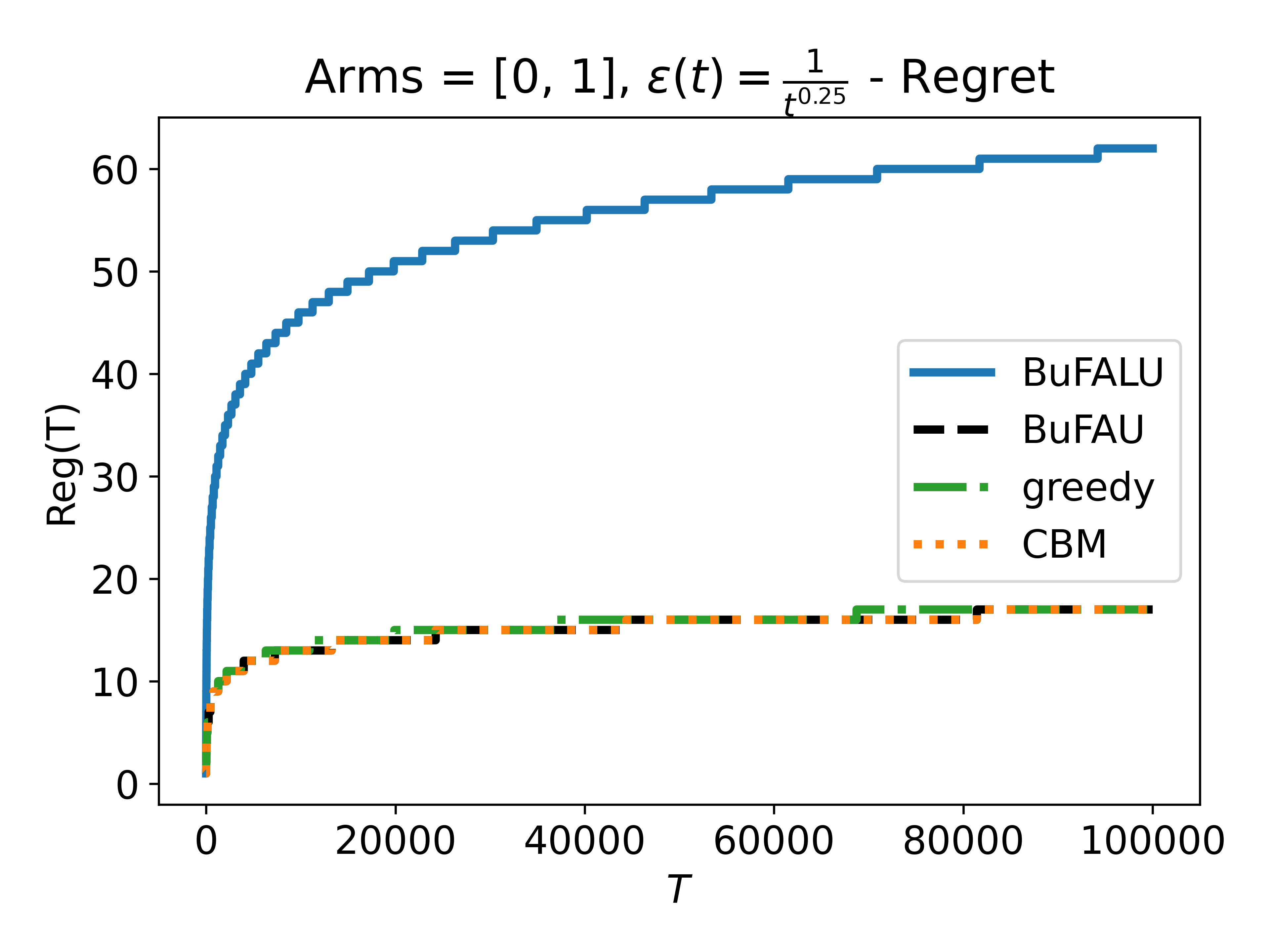}
}
\hspace{0.05\linewidth}
\subfigure{
\includegraphics[trim=0 15 15 15,clip,width=0.3375\linewidth]{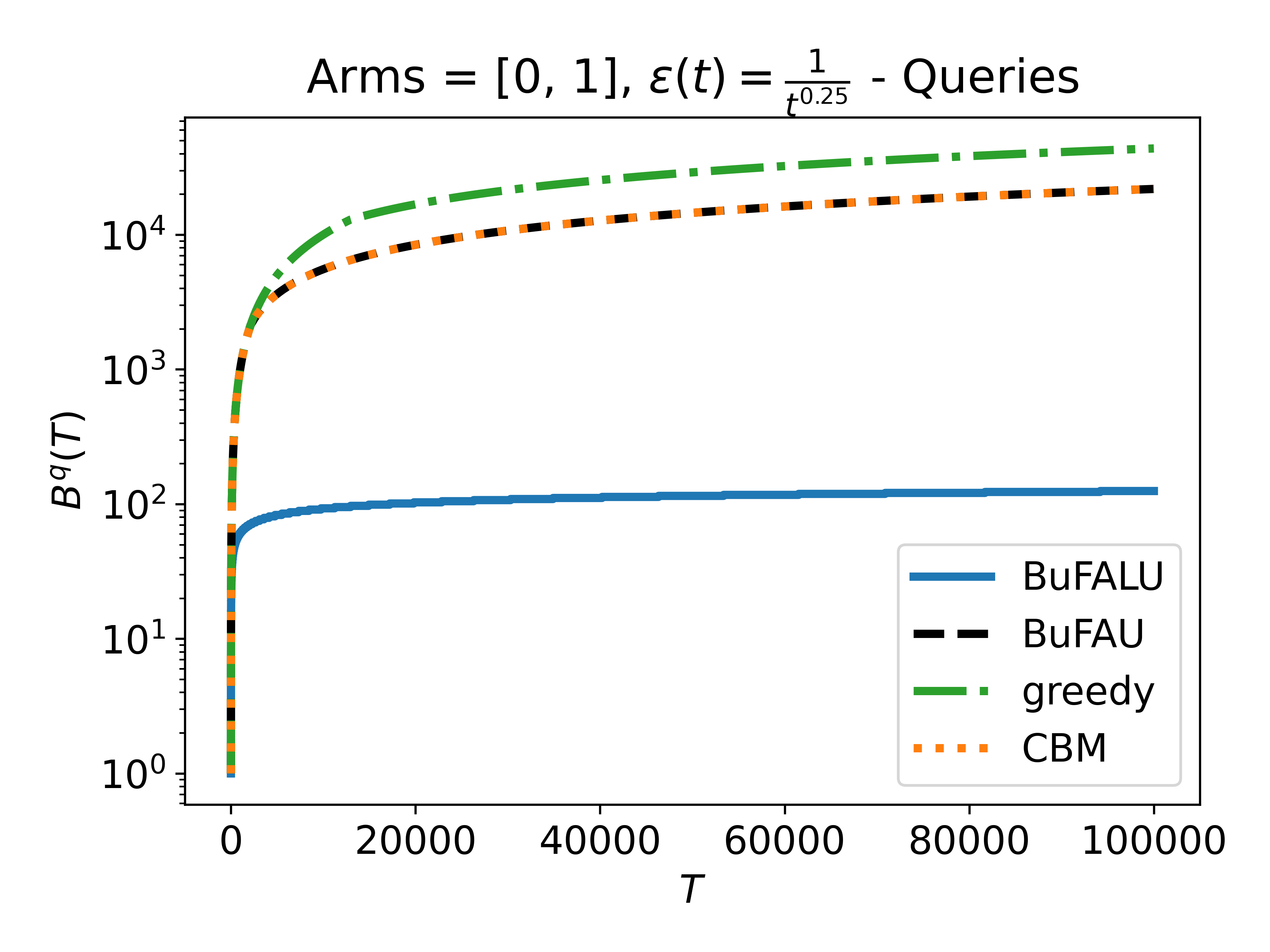}
} \\
\vspace{-0.15cm}
\subfigure{
\includegraphics[trim=0 15 15 15,clip,width=0.3375\linewidth]{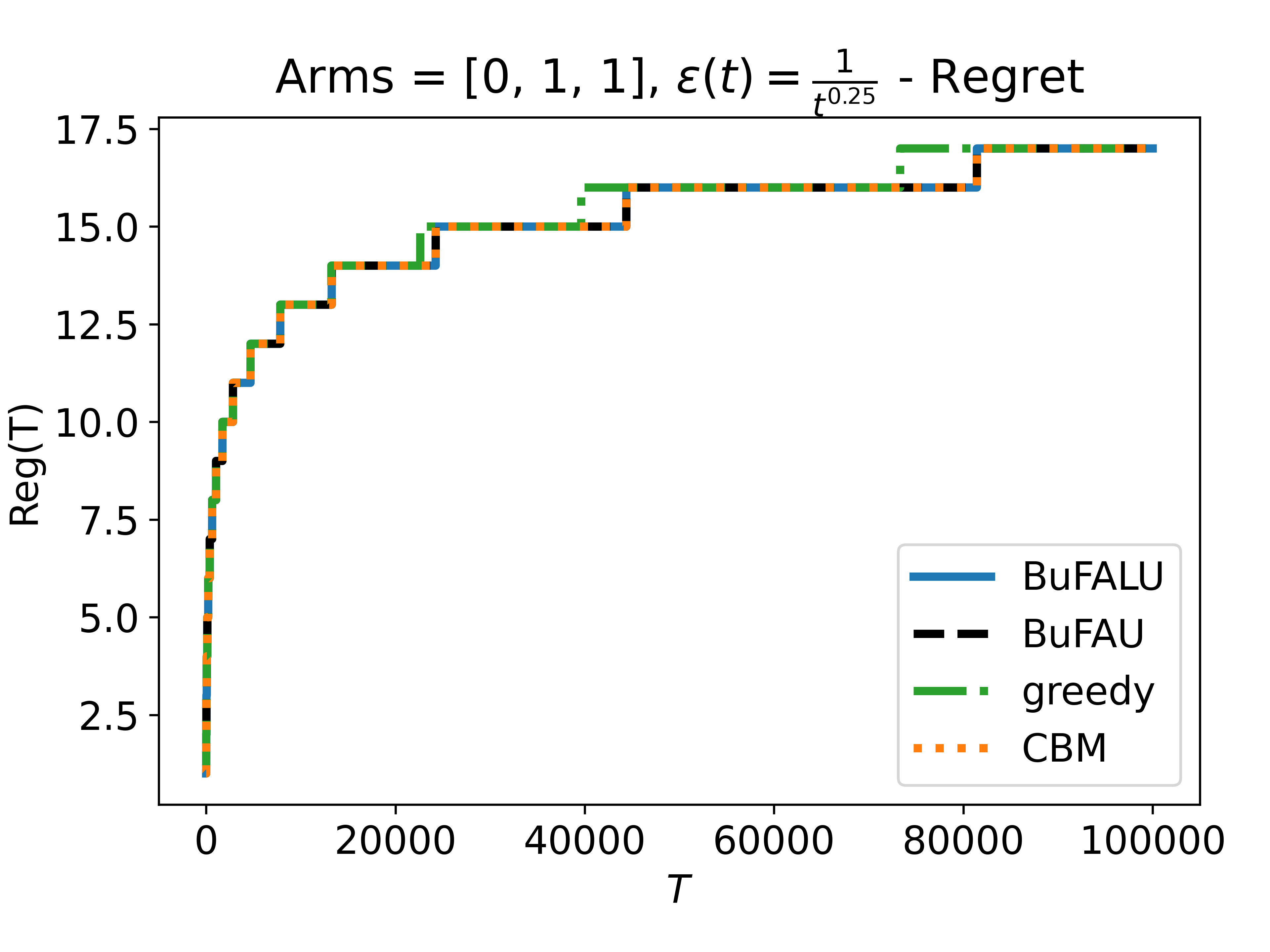}%\vspace{-0.55cm}
}
\hspace{0.05\linewidth}
\subfigure{
\includegraphics[trim=0 15 15 15,clip,width=0.3375\linewidth]{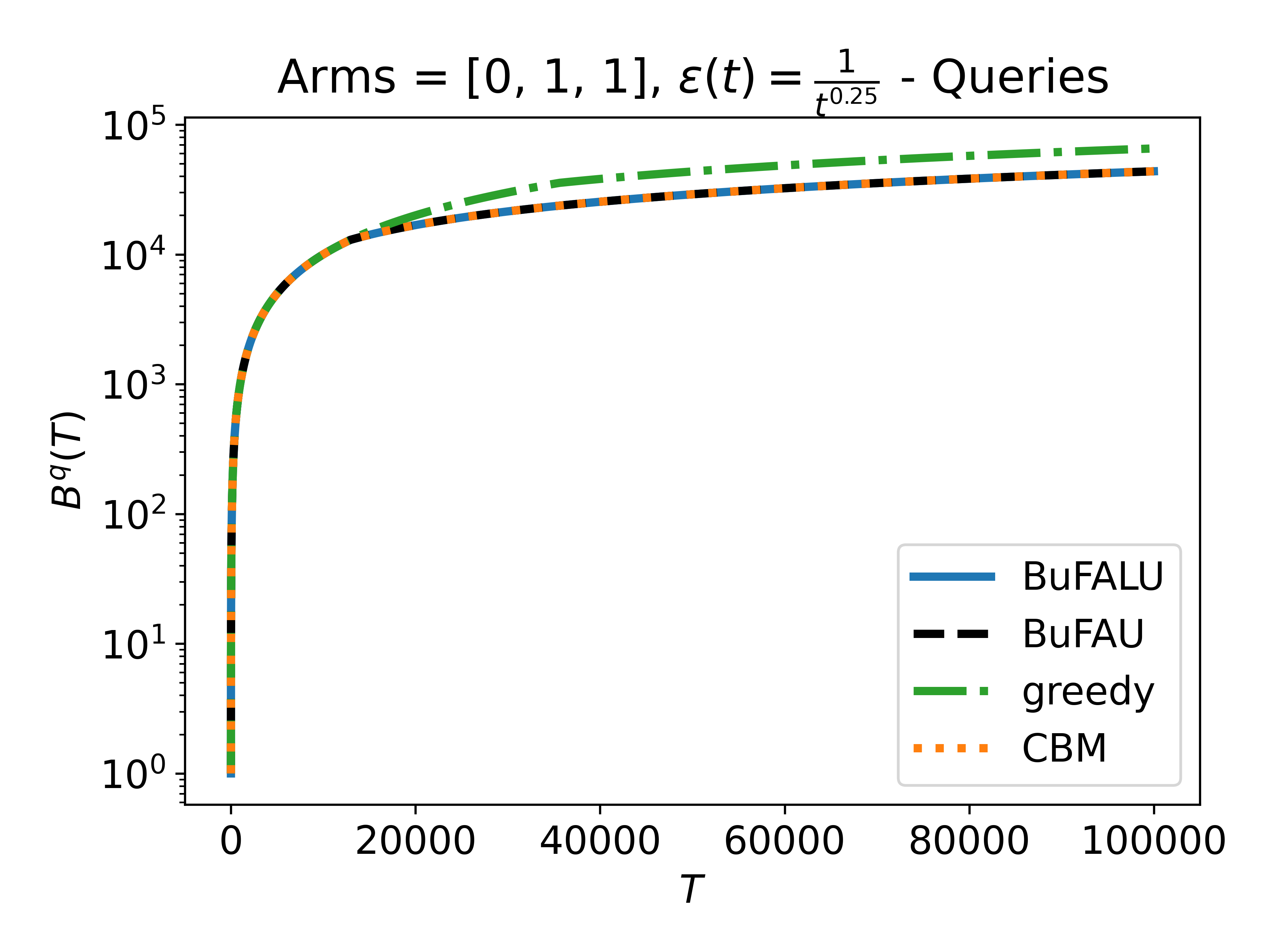}%\vspace{-0.55cm}
}\vspace{-0.1cm}
\caption{Performance in two deterministic instances. Left column: regret, right column: number of queries. In the top row, there is a unique optimal arm, while in the bottom one there are two. All evaluations used $\epsilon(t)=t^{-1/4}$ and tested one seed (deterministic problems).}
\label{figure:experiments}
\end{figure}

In this section, we present a simple numerical illustration of the behavior of BuFALU in the presence of a unique or multiple optimal arms. To best capture the difference between the scenarios, we evaluate the algorithm on two \emph{deterministic} MAB instances. In the first instance, the optimal arm is unique (with $\rOpt=1$) and there exists a single suboptimal arm (with $\mu_1=0$). The second instance is the same, except for an additional optimal arm. We compare BuFALU to a few natural baselines. The first uses the same querying mechanism as BuFALU but sets $c_t\in\argmax_aUCB_t(a)$. This baseline, called BuFAU, will allow us to understand the contribution of the action choice vs. the querying rule. 
The second baseline is CBM-UCB \citep{efroni2021confidence}, whose problem-independent guarantees are similar to BuFALU. The final baseline is a greedy algorithm that receives a total budget as guaranteed by \Cref{theorem:upper bound} (i.e., $\frac{6\Narms\ln T}{\epsilon^2(t)}+\Narms$). If the budget is not exhausted, it plays (and queries) the maximal UCB; otherwise, it plays (without querying) the arm with the maximal empirical mean. All algorithms are evaluated with $\epsilon(t)=t^{-1/4}$ for $100,000$ steps. Then, \Cref{theorem:upper bound} guarantees that BuFALU cannot query any arm more than $\sim20,000$ times. We refer to \Cref{appendix:baselines} for a full description of the baselines and remark that evaluations in stochastic 5-armed problems yield similar insights (see \Cref{appendix:experiments}). The results are depicted in \Cref{figure:experiments}.

The simulated behavior of BuFALU validates the characterization of \Cref{theorem:upper bound}: when there are multiple optimal arms, BuFALU uses all available budget to query them and achieves the same regret and querying performance as the baselines. In contrast, when the optimal arm is unique, BuFALU only sparingly asks for feedback -- only $\sim 1/150$ fraction of the cumulative number of queries of other baselines. Its regret, however, is larger by a factor of $4$, the same factor that we get by replacing $\bar{N}(T,\dr{a})$ with $\bar{N}(T,\dr{a}/2)$. 
Notably, if queries have a cost $c$, the query reduction is clearly worth this degradation. In fact, for any per-query cost of $c>0.0025$ (namely, $0.25\%$ of the optimal rewards), BuFALU outperforms all baselines. 
Lastly, comparing BuFALU to BuFAU, we conclude that the key part in BuFALU is the choice of $c_t$, which allows separating a unique optimal arm from all other arms.

% One immediate conclusion is that in these simple instances, all baseline algorithms behave roughly the same, where the only difference is that the greedy baseline completely exhausts its querying budget while all other baselines only exhaust it for optimal arms. Moreover, the simulated behavior of BuFALU validates the characterization of \Cref{theorem:upper bound}: when there are multiple optimal arms, BuFALU uses all available budget to query them and achieves the same regret and querying performance as the baselines. In contrast, when the optimal arm is unique, BuFALU only sparingly asks for feedback ($\sim\frac{1}{150}$, compared to the baselines), but its regret is larger by a factor of $4$ -- the same factor that we get by replacing $\bar{N}(T,\dr{a})$ with $\bar{N}(T,\dr{a}/2)$. 
% % We emphasize that in the presence of soft querying costs, the query reduction is clearly worth this degradation. In fact, one can easily see that for any per-query cost of $c>0.0025$ (namely, $0.25\%$ of the optimal rewards), BuFALU outperforms all baselines. 
% Notably, it is immediate that if queries have costs, the query reduction is clearly worth this degradation. In fact, for any per-query cost of $c>0.0025$ (namely, $0.25\%$ of the optimal rewards), BuFALU outperforms all baselines. 
% Lastly, comparing BuFALU to BuFAU, we conclude that the key part in BuFALU is the choice of $c_t$, which allows separating a unique optimal arm from all other arms.

%%%%%%%%%%%%%%%%%%%%%%%%%%%%%%%%%%%%%%%%%%%%%%%%
% Summary and Discussion %
%%%%%%%%%%%%%%%%%%%%%%%%%%%%%%%%%%%%%%%%%%%%%%%%
\section{Summary and Future Work}
\label{section:summary}
In this work, we analyzed MAB problems where rewards must be queried to be observed. We proved lower bounds in this setting and highlighted the fundamental difference between problems with a unique and multiple optimal arms. We also presented BuFALU, to which we proved problem-dependent regret and querying bounds and showed its adaptivity to the number of optimal arms.

In the standard MAB setting, there are a few interesting directions for improving our results. First, although our analysis supports arbitrary sequences of $\epsilon(t)$, the maximization in $\bar{N}$ might be loose when $\epsilon(t)$ briefly drops. We believe that better characterizing these cases is imperative when working with general (possibly decreasing) budgets. 
Second, when the optimal arm is unique, our regret bounds degrade by a constant factor of $4$. Yet, the lower bounds hint that this factor does not have to affect all arms (and might, for example, be present mainly in the queries of the optimal arm). Improving this factor might require changes to the algorithm and/or confidence intervals and is left for future work. Finally, while we limited the number of queries from played arms, \citealt{yun2018multi} allowed budget-limited observations from unplayed arms. Although combining the settings is natural, the individual solutions are very different, and we leave this for future work.

Moreover, we only tackled the standard MAB setting. Extending our work to other settings might lead to nontrivial challenges. In large or continuous problem, e.g., combinatorial bandits, \citep{chen2016combinatorialA}, linear bandits \citep{abbasi2011improved} and reinforcement learning  \citep[RL,][]{jaksch2010near,azar2013minimax,simchowitz2019non}, the dichotomy to unique and multiple optimal arms might be too coarse. Then, it might be more relevant to characterize the behavior using the size or structure of the set of optimal actions. Specifically in RL, recent studies show that the presence of multiple optimal arms greatly affects the problem-dependent regret even without querying constraints \citep{xu2021fine,tirinzoni2021fully}, and it is worthwhile to further study it when feedback is limited. 
% Also, adapting BuFALU to different domains is not always straightforward, and alternative approaches should be considered. 
Alternatively, one could extend our setting to nonstationary environments, where feedback is required to track environmental changes, and to settings with some adversity in the rewards (e.g., adversarial corruptions \cite{lykouris2018stochastic}).
%(e.g., \cite{lykouris2018stochastic}).

Finally, our work raises important questions in the low-budget regime (e.g., logarithmic budget). There, minimizing the regret seems hopeless in the presence of multiple optimal arms, and weaker optimality notions can be considered. One option is lenient regret criteria \citep{merlis2020lenient}, which do not incur regret when playing near-optimal arms. Then, it might be possible to perform well even when $\epsilon(t)$ does not decrease to zero, but this warrants further study.

% \section*{Acknowledgments}
% This work was partially funded by the Israel Science Foundation under ISF grant number 2199/20. Nadav Merlis is partially supported by the Gutwirth Scholarship. YE is partially supported by the Viterbi scholarship, Technion.

\bibliographystyle{plainnat}
\bibliography{references}

\clearpage

\appendix

\section{Lower bounds}
\label{appendix:dependent lower bound}

We start by stating a variant of the fundamental inequality of \citep{garivier2019explore}, proved by \cite{efroni2021confidence} for the case where feedback is not always observed.
\begin{lemma}
\label{lemma: kl counts inequality}
For any $T\ge1$, any $\sigma(I_T)$-measurable random variable $Z$ with values in $[0,1]$ and any two bandit instances $\unu$ and $\unu'$, it holds that
\begin{equation}
    \label{eq: kl counts inequality}
    \sum_{a=1}^{\Narms}\E_\unu\brs*{\nq{T}{a}}\kl(\nu_a,\nu'_a) \ge \klBin\br*{\E_\unu\brs*{Z},\E_{\unu'}\brs*{Z}}
\end{equation}
\end{lemma}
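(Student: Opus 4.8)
The plan is to run the change-of-measure (log-likelihood-ratio) argument of \citet{garivier2019explore}, being careful that information about the bandit instance is revealed \emph{only} on rounds where the reward is queried. Fix $T\ge1$ and two instances $\unu,\unu'$. If $\kl(\nu_a,\nu'_a)=\infty$ for some arm that is played and queried with positive probability under $\unu$, the left-hand side of \eqref{eq: kl counts inequality} is $+\infty$ and there is nothing to prove; so we may assume all the needed absolute-continuity holds and write $\ell_a(r)=\log\tfrac{d\nu_a}{d\nu'_a}(r)$. Let $P_\unu,P_{\unu'}$ be the laws of the observed history $I_T$ under the two instances, the randomness being the i.i.d.\ $U_t$'s and the generated rewards. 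Since the strategy maps $I_t\mapsto(a_{t+1},q_{t+1})$ and the law of the $U_t$'s are identical under both instances, their factors cancel in the likelihood ratio, and the only surviving contributions come from \emph{observed} rewards, i.e.\ rounds with $q_s=1$. This gives
\begin{equation*}
  \log\frac{dP_\unu}{dP_{\unu'}}(I_T)=\sum_{s=1}^T q_s\,\ell_{a_s}(\rOb{s}).
\end{equation*}

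Next I would take expectations under $P_\unu$ and evaluate term by term. Conditioning on $I_{s-1}$ together with $U_{s-1}$ — which determine $a_s$ and $q_s$ — and using that $\rOb{s}\sim\nu_{a_s}$ independently of the past, the tower rule and the identity $\E_{\nu_{a_s}}\brs*{\ell_{a_s}}=\kl(\nu_{a_s},\nu'_{a_s})$ yield
\begin{equation*}
  \E_\unu\!\brs*{q_s\,\ell_{a_s}(\rOb{s})}=\E_\unu\!\brs*{q_s\,\kl(\nu_{a_s},\nu'_{a_s})}=\sum_{a=1}^{\Narms}\kl(\nu_a,\nu'_a)\,\E_\unu\!\brs*{\Ind{a_s=a,q_s=1}}.
\end{equation*}
Summing over $s=1,\dots,T$ and recalling $\nq{T}{a}=\sum_{s=1}^T\Ind{a_s=a,q_s=1}$ identifies the expected log-likelihood ratio with $\kl(P_\unu,P_{\unu'})$ and produces
\begin{equation*}
  \kl(P_\unu,P_{\unu'})=\E_\unu\!\brs*{\log\frac{dP_\unu}{dP_{\unu'}}(I_T)}=\sum_{a=1}^{\Narms}\E_\unu\!\brs*{\nq{T}{a}}\,\kl(\nu_a,\nu'_a).
\end{equation*}

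It then remains to lower bound $\kl(P_\unu,P_{\unu'})$ by $\klBin\br*{\E_\unu\brs*{Z},\E_{\unu'}\brs*{Z}}$. This is the contraction/data-processing step: for a $\sigma(I_T)$-measurable $Z\in[0,1]$, the randomized map sending $I_T$ to a Bernoulli variable of mean $Z(I_T)$ pushes $P_\unu,P_{\unu'}$ forward to $\mathrm{Ber}\br*{\E_\unu\brs*{Z}}$ and $\mathrm{Ber}\br*{\E_{\unu'}\brs*{Z}}$, so the data-processing inequality for the KL divergence gives the claimed bound (this is exactly Lemma~1 of \citet{garivier2019explore}). Combining the last display with this inequality proves \eqref{eq: kl counts inequality}.

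I expect the main obstacle to be the rigorous justification of the likelihood-ratio identity — cleanly arguing that the policy and internal-randomization factors cancel, that unqueried rounds contribute nothing, and handling the measure-theoretic bookkeeping (common dominating measures, the degenerate non-absolutely-continuous case). Everything downstream (the conditioning argument and the data-processing step) is routine. Since this lemma was already established by \citet{efroni2021confidence} in precisely this querying model, one may also simply cite their result; the sketch above is the self-contained route.
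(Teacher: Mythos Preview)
Your proposal is correct. The paper itself does not prove this lemma: it simply states it as a known variant of the fundamental inequality of \citet{garivier2019explore}, attributing the extension to the querying model to \citet{efroni2021confidence}. You have reconstructed the argument that underlies that citation --- the likelihood-ratio decomposition (noting that only queried rounds contribute), the tower-rule computation identifying $\kl(P_\unu,P_{\unu'})$ with $\sum_a \E_\unu[\nq{T}{a}]\kl(\nu_a,\nu'_a)$, and the data-processing step --- and you correctly flag that one may alternatively just cite \citet{efroni2021confidence}, which is exactly what the paper does. So your write-up is strictly more informative than the paper's treatment, and there is no gap.
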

As was shown in \citep{garivier2019explore}, a similar inequality allows elegantly deriving lower bounds under various regularity assumptions on the bandit strategy. 
\subsection{Proof of Theorem~\ref{theorem:lower bound asymptotic}}
\lowerBoundAsymptotic*
\begin{proof}
Parts of the proof rely on techniques from Theorem 1 in \citep{garivier2019explore}. First notice that for any $p,q\in\brs*{0,1}$, it holds that 
\begin{align}
    \klBin(p,q) &= \underbrace{p\ln\frac{1}{q}}_{\ge0} + \underbrace{(1-p)\ln\frac{1}{1-q}}_{\ge0} + \underbrace{p\ln p + (1-p)\ln(1-p)}_{\ge -\ln2} \nonumber \\
    & \ge \max\brc*{p\ln\frac{1}{q}, (1-p)\ln\frac{1}{1-q}} - \ln 2 \label{eq:klBin log lower bound}
\end{align}

%%%%%%%%%%%%%%%%%%%%%%%%%%%
%%%%%%%%%%%%%%%%%%%%%%%%%%%
\paragraph{Proof of Part 1.}  

Let $a\notin\Acal_*(\unu)$ be some suboptimal arm. Furthermore, let $\unu'$ be a modified bandit instance such that $\nu'_k=\nu_k$ for all $k\ne a$ and $\nu'_a\in\D$ is some distribution with $\E\brs{\nu'_a}>\rOpt$ (if such a distribution does not exist, then $\Kinf{+}(\nu_a,\rOpt,\D)=\infty$ and the bound trivially holds).  Then, by \Cref{lemma: kl counts inequality} with $Z= \frac{\np{T}{a}}{T}\in\brs*{0,1}$ and \Cref{eq:klBin log lower bound}, it holds that 
\begin{align}
    \label{eq: lower bound suboptimal 1}
    \E_\unu\brs*{\nq{T}{a}}\kl(\nu_a,\nu'_a)
    &\ge \klBin\br*{\frac{\E_{\unu}\brs*{\np{T}{a}}}{T},\frac{\E_{\unu'}\brs*{\np{T}{a}}}{T}} \nonumber\\
    &\ge \br*{1-\frac{\E_{\unu}\brs*{\np{T}{a}}}{T}}\ln\frac{1}{1-\frac{\E_{\unu'}\brs*{\np{T}{a}}}{T}} - \ln2\enspace.
\end{align}
Since the bandit strategy is consistent and $a$ is suboptimal for instance $\unu$, it holds that $\lim_{T\to\infty}\frac{\E_{\unu}\brs*{\np{T}{a}}}{T} = 0$. Moreover, $a$ is the unique optimal arm of bandit instance $\unu'$. Therefore, the consistency implies that for any $\alpha\in(0,1]$ and for sufficiently large $T$,
\begin{align*}
    1-\frac{\E_{\unu'}\brs*{\np{T}{a}}}{T} 
    = \frac{T - \E_{\unu'}\brs*{\np{T}{a}}}{T}
    = \frac{\sum_{a'\ne a}\E_{\unu'}\brs*{\np{T}{a'}}}{T} \le \frac{T^{\alpha}}{T} 
    = T^{\alpha-1}\enspace.
\end{align*}
Combining both into \cref{eq: lower bound suboptimal 1}, we get that for any $\alpha\in(0,1]$ and any $\nu'_a$ with $\E\brs{\nu'_a}>\rOpt$,
\begin{align*}
    \liminf_{T\to\infty}\frac{\E_\unu\brs*{\nq{T}{a}}}{\ln T}
    &\ge \frac{1}{\kl(\nu_a,\nu'_a)}\liminf_{T\to\infty}\frac{1}{\ln T}\br*{\br*{1-\frac{\E_{\unu}\brs*{\np{T}{a}}}{T}}\ln\frac{1}{1-\frac{\E_{\unu'}\brs*{\np{T}{a}}}{T}} - \ln2}\\
    & \ge \frac{1}{\kl(\nu_a,\nu'_a)}\liminf_{T\to\infty}\frac{1}{\ln T}\ln\frac{1}{T^{\alpha-1}}\\
    & = \frac{1-\alpha}{\kl(\nu_a,\nu'_a)}\enspace,
\end{align*}
and since it holds for any $\alpha\in(0,1]$, we have that 
\begin{align*}
    \liminf_{T\to\infty}\frac{\E_\unu\brs*{\nq{T}{a}}}{\ln T}\ge \frac{1}{\kl(\nu_a,\nu'_a)}\enspace.
\end{align*}
By taking the supremum in the right-hand side over all distributions $\nu'_a\in\D$ with $\E\brs{\nu'_a}>\rOpt$, we conclude this part of the proof.

%%%%%%%%%%%%%%%%%%%%%%%%%%%
%%%%%%%%%%%%%%%%%%%%%%%%%%%

\paragraph{Proof of Part 2.} 

Let $\mu\in\brs*{\mu^s,\rOpt}$ and assume that there exist two distributions $\nu'_a,\nu'_{a^*}\in\D$ such that $\E\brs{\nu'_a}\ge\mu$ and $\E\brs{\nu'_{a^*}}<\mu$ (otherwise, either $\Kinf{+}(\nu_a,\mu,\D)=\infty$ or $\Kinf{-}(\nu_{a^*},\mu,\D)=\infty$ and the bound trivially holds). Also, define a new bandit instance $\unu'$ for which $\nu'_k=\nu_k$ for all $k\ne a,a^*$ and arms $a,a^*$ are distributed according to $\nu'_a$ and $\nu'_{a^*}$, respectively. By \Cref{lemma: kl counts inequality} with $Z= \frac{\np{T}{a^*}}{T}\in\brs*{0,1}$ and \Cref{eq:klBin log lower bound}, it holds that 
\begin{align}
    \label{eq: lower bound optimal 1}
    \E_\unu\brs*{\nq{T}{a}}\kl(\nu_a,\nu'_a) + \E_\unu\brs*{\nq{T}{a^*}}\kl(\nu_{a^*},\nu'_{a^*})
    &\ge \klBin\br*{\frac{\E_{\unu}\brs*{\np{T}{a^*}}}{T},\frac{\E_{\unu'}\brs*{\np{T}{a^*}}}{T}} \nonumber\\
    &\ge \frac{\E_{\unu}\brs*{\np{T}{a^*}}}{T}\ln\frac{1}{\frac{\E_{\unu'}\brs*{\np{T}{a^*}}}{T}} - \ln2\enspace.
\end{align}
Since the bandit strategy is consistent and $a^*$ is the unique optimal arm for instance $\unu$, for any $a'\ne a^*$ we have that $\E_{\unu}\brs*{\np{T}{a'}}/T\to 0$ and thus
\begin{align*}
    \lim_{T\to\infty}\frac{\E_{\unu}\brs*{\np{T}{a^*}}}{T}
    = \lim_{T\to\infty}\frac{T - \sum_{a'\ne a^*}\E_{\unu}\brs*{\np{T}{a'}}}{T}
    = 1 - \sum_{a'\ne a^*}\lim_{T\to\infty}\frac{\E_{\unu}\brs*{\np{T}{a'}}}{T}
    =1\enspace.
\end{align*}
Moreover, $a^*$ is strictly suboptimal in bandit instance $\unu'$. Therefore, the consistency implies that for any $\alpha\in(0,1]$ and for sufficiently large $T$,
\begin{align*}
    \frac{\E_{\unu'}\brs*{\np{T}{a^*}}}{T} 
    \le \frac{T^{\alpha}}{T} 
    = T^{\alpha-1}\enspace.
\end{align*}
Therefore, for any $\alpha\in(0,1]$, we can bound
\begin{align}
    \liminf_{T\to\infty}\frac{1}{\ln T}\br*{ \frac{\E_{\unu}\brs*{\np{T}{a^*}}}{T}\ln\frac{1}{\frac{\E_{\unu'}\brs*{\np{T}{a^*}}}{T}} - \ln2}
     \ge \liminf_{T\to\infty}\frac{1}{\ln T}\ln\frac{1}{T^{\alpha-1}}
     = 1-\alpha\enspace, \label{eq: lower bound optimal 2}
\end{align}
and since it holds for any $\alpha\in(0,1]$, the same result holds for $\alpha=0$.

For the l.h.s., we divide into two cases:

\underline{Case I:} Letting $a\in \argmax_{a'\ne a^*}\rwd{a'}$ be an arm such that $\rwd{a}=\mu^s$ while choosing $\mu=\mu^s$ and $\nu'_a=\nu_a$, we get that $\kl(\nu_a,\nu'_a)=0$. Furthermore, since $a$ is strictly suboptimal and $\E\brs{\nu'_{a^*}}<\mu^s<\rOpt$, we have that $\kl(\nu_{a^*},\nu'_{a^*})>0$. Dividing by it in \eqref{eq: lower bound optimal 1} and combining with \eqref{eq: lower bound optimal 2}, we get that 
\begin{align*}
    \liminf_{T\to\infty}\frac{\E_\unu\brs*{\nq{T}{a^*}}}{\ln T}\ge \frac{1}{\kl(\nu_{a^*},\nu'_{a^*})}\enspace.
\end{align*}
By taking the supremum in the right-hand side over all distributions $\nu'_{a^*}\in\D$ with $\E\brs{\nu'_{a^*}}<\mu^s$, we get the first desired result.
    
\underline{Case II:} For any $\mu\in\brs*{\mu^s,\rOpt}$, we apply H\"older's inequality and bound the l.h.s. of \Cref{eq: lower bound optimal 1} by
\begin{align*}
    \E_\unu\brs*{\nq{T}{a}}\kl(\nu_a,\nu'_a)& + \E_\unu\brs*{\nq{T}{a^*}}\kl(\nu_{a^*},\nu'_{a^*})\\
    &\le \br*{\E_\unu\brs*{\nq{T}{a}} + \E_\unu\brs*{\nq{T}{a^*}}}\max\brc*{\kl(\nu_a,\nu'_a),\kl(\nu_{a^*},\nu'_{a^*})}\enspace.
\end{align*}
Importantly, notice that $\kl(\nu_{a^*},\nu'_{a^*})>0$ (since $\E\brs*{\nu'_{a^*}}<\mu\le\rOpt$)  and we can divide by the maximum. Combining with \eqref{eq: lower bound optimal 2}, we get 
\begin{align*}
\liminf_{T\to\infty}\frac{\E_\unu\brs*{\nq{T}{a}}+\E_\unu\brs*{\nq{T}{a^*}}}{\ln T}
&\ge \frac{1}{\max\brc*{\kl(\nu_a,\nu'_a),\kl(\nu_{a^*},\nu'_{a^*})}}\enspace.
\end{align*}
Taking the supremum over all distributions $\nu'_a,\nu'_{a^*}\in\D$ with expectations $\E\brs*{\nu'_a}>\mu$ and $\E\brs*{\nu'_{a^*}}<\mu$ leads to the second stated result and concludes this part of the proof.

We remark that a more general lower bound can be written without applying H\"older's inequality:
\begin{align} 
    \label{eq: lower bound unique optimal improved}
    \Kinf{+}(\nu_a,\mu,\D)\nq{\infty}{a} + \Kinf{-}(\nu_{a^*},\mu,\D)\nq{\infty}{a^*} \ge 1 \enspace
\end{align}
for any $\mu\in\brs*{\mu^s,\rOpt}$, where we define $0\cdot (+\infty)\ge1$. Notably, this definition makes sure that the bound holds whenever any of the quantities at its l.h.s. are infinite, so it is only needed to be proven when all quantities are finite. Then, starting from \eqref{eq: lower bound optimal 1}, dividing by $\ln T$, taking $\liminf$ and using \eqref{eq: lower bound optimal 2}, we get that
\begin{align*}
    \kl(\nu_a,\nu'_a)\nq{\infty}{a}  +\kl(\nu_{a^*},\nu'_{a^*})\nq{\infty}{a^*}\ge1 \enspace.
\end{align*}
Next, if both $\Kinf{+}(\nu_a,\mu,\D),\Kinf{-}(\nu_{a^*},\mu,\D)<+\infty$, we can take the infimum at the l.h.s. knowing that it is not over an empty set. Moreover, for $\nq{\infty}{a},\nq{\infty}{a^*}<+\infty$, the inequality is preserved after the infimum, which leads to \Cref{eq: lower bound unique optimal improved}.
%%%%%%%%%%%%%%%%%%%%%%%%%%%
%%%%%%%%%%%%%%%%%%%%%%%%%%%

\paragraph{Proof of Part 3. } 

\underline{Assume that condition (1) holds}. For any $a\in\Acal_*(\unu)$, define a bandit instance $\unu'$ such that $\nu'_k=\nu_k$ for all $k\ne a$ and $\nu'_a\in\D$ is some distribution such that $\E\brs*{\nu'_a}<\rOpt$ (such a distribution must exist, as otherwise, $\Kinf{-}(\nu_a,\rOpt,\D)=\infty$ and condition (1) does not hold). Then, by \Cref{lemma: kl counts inequality} with $Z= \frac{\np{T}{a}}{T}\in\brs*{0,1}$ and \Cref{eq:klBin log lower bound}, it holds that 
\begin{align*}
    \E_\unu\brs*{\nq{T}{a}}\kl(\nu_a,\nu'_a)
    \ge \klBin\br*{\frac{\E_{\unu}\brs*{\np{T}{a}}}{T},\frac{\E_{\unu'}\brs*{\np{T}{a}}}{T}}
    \ge \frac{\E_{\unu}\brs*{\np{T}{a}}}{T}\ln\frac{1}{\frac{\E_{\unu'}\brs*{\np{T}{a}}}{T}} - \ln2\enspace.
\end{align*}
As $a$ is strictly suboptimal in bandit instance $\unu'$, and since the bandit strategy is consistent, for any $\alpha\in(0,1]$ and for sufficiently large $T$, it holds that
\begin{align*}
    \frac{\E_{\unu'}\brs*{\np{T}{a}}}{T} 
    \le \frac{T^{\alpha}}{T} 
    = T^{\alpha-1}\enspace.
\end{align*}
Then, for large enough $T$, we have that 
\begin{align*}
    \E_\unu\brs*{\nq{T}{a}}\kl(\nu_a,\nu'_a)
    \ge \frac{\E_{\unu}\brs*{\np{T}{a}}}{T}\ln\frac{1}{T^{\alpha-1}} - \ln2
    = (1-\alpha)\frac{\E_{\unu}\brs*{\np{T}{a}}}{T}\ln T - \ln2\enspace.
\end{align*}
Since the number of arms is finite, for large enough $T$, the inequality holds for all $a\in\Acal_*(\unu)$. Then, summing over all inequalities yields
\begin{align}
    \label{eq: multiple optimal lower bound lower shifts 1}
    \sum_{a\in\Acal_*(\unu)}\E_\unu\brs*{\nq{T}{a}}\kl(\nu_a,\nu'_a)
    \ge (1-\alpha)\frac{\sum_{a\in\Acal_*(\unu)}\E_{\unu}\brs*{\np{T}{a}}}{T}\ln T - \Narms\ln2\enspace.
\end{align}
To further analyze the r.h.s. of the inequality, recall that the bandit strategy is consistent; therefore, the set of the optimal arm is sampled linearly, i.e.,
\begin{align*}
   \lim_{T\to\infty}\frac{\sum_{a\in\Acal_*(\unu)}\E_{\unu}\brs*{\np{T}{a}}}{T}
    &= \lim_{T\to\infty}\frac{T - \sum_{a'\notin \Acal_*(\unu)}\E_{\unu}\brs*{\np{T}{a'}}}{T}
    = 1 - \sum_{a'\notin \Acal_*(\unu)}\lim_{T\to\infty}\frac{\E_{\unu}\brs*{\np{T}{a'}}}{T}
    =1\,,
\end{align*}
where the last equality is by the consistency, which implies that $\E_{\unu}\brs*{\np{T}{a'}}/T\to0$ for any $a\notin\Acal_*(\unu)$. For the l.h.s., we apply H\"older's inequality and get 
\begin{align*}
    \sum_{a\in\Acal_*(\unu)}\E_\unu\brs*{\nq{T}{a}}\kl(\nu_a,\nu'_a) \le \max_{a\in\Acal_*(\unu)}\kl(\nu_a,\nu'_a)\sum_{a\in\Acal_*(\unu)}\E_\unu\brs*{\nq{T}{a}}\enspace.
\end{align*}
Substituting both into \Cref{eq: multiple optimal lower bound lower shifts 1}, reorganizing and taking the limit, we get
\begin{align*}
    \liminf_{T\to\infty}\frac{\sum_{a\in\Acal_*}\E_{\unu}\brs*{\nq{T}{a}}}{\ln T} \ge (1-\alpha)\frac{1}{\max_{a\in\Acal_*(\unu)}\kl(\nu_a,\nu'_a)}\enspace.
\end{align*}
Taking the supremum over all distributions $\brc*{\nu'_a}_{a\in\Acal_*(\unu)}$ in $\D$ such that $\E\brs*{\nu'_a}<\rOpt$  and noting that the bound holds for any $\alpha\in(0,1]$ leads to the following bound:
\begin{align*}
    \liminf_{T\to\infty}\frac{\sum_{a\in\Acal_*}\E_{\unu}\brs*{\nq{T}{a}}}{\ln T} \ge \frac{1}{\max_{a\in\Acal_*}\Kinf{-}(\nu_a,\rOpt,\D)}\enspace,
\end{align*}
Importantly, this bound implies that if $\Kinf{-}(\nu_a,\rOpt,\D)=0$ for all $a\in\Acal_*(\unu)$, then the r.h.s. is infinite and there exists at least on optimal arm $a\in\Acal_*(\unu)$ for which $\nq{\infty}{a}=\liminf_{T\to\infty}\frac{\E_{\unu}\brs*{\nq{T}{a}}}{\ln T}=\infty$. Moreover, one can easily verify that the derivation did not actually require condition (1); therefore, this might serve as a lower bound when the condition does not hold. Finally, as in \eqref{eq: lower bound unique optimal improved}, if we define $0\cdot(+\infty)\ge1$, then using the exact same arguments, a more general version of the bound would be 
\begin{align} 
    \label{eq: lower bound multiple optimal below improved}
    \sum_{a\in\Acal_*(\unu)}\Kinf{-}(\nu_a,\rOpt,\D)\nq{\infty}{a} \ge 1 \enspace.
\end{align}

%%%%%%%%%%%%%%%%%%%%%%%%%%%
%%%%%%%%%%%%%%%%%%%%%%%%%%%

\underline{Assume that condition (2) holds}. For any $a\in\Acal_*(\unu)$, define a bandit instance $\unu'$ such that $\nu'_k=\nu_k$ for all $k\ne a$ and $\nu'_a\in\D$ is some distribution such that $\E\brs*{\nu'_a}>\rOpt$ (we will later take an infiumum over all such distributions, and if such a distribution does not exists, then the value $\Kinf{+}(\nu_a,\rOpt,\D)=\infty$ will lead to a trivial bound). Then, by \Cref{lemma: kl counts inequality} with $Z= \frac{\np{T}{a}}{T}\in\brs*{0,1}$ and \Cref{eq:klBin log lower bound}, it holds that 
\begin{align*}
    \E_\unu\brs*{\nq{T}{a}}\kl(\nu_a,\nu'_a)
    &\ge \klBin\br*{\frac{\E_{\unu}\brs*{\np{T}{a}}}{T},\frac{\E_{\unu'}\brs*{\np{T}{a}}}{T}}\\
    &\ge \br*{1-\frac{\E_{\unu}\brs*{\np{T}{a}}}{T}}\ln\frac{1}{1-\frac{\E_{\unu'}\brs*{\np{T}{a}}}{T}} - \ln2\enspace.
\end{align*}
As $a$ is the unique optimal arm for bandit instance $\unu'$, and since the bandit strategy is consistent, for any $\alpha\in(0,1]$ and for sufficiently large $T$ it holds that
\begin{align*}
    1-\frac{\E_{\unu'}\brs*{\np{T}{a}}}{T} 
    = \frac{T - \E_{\unu'}\brs*{\np{T}{a}}}{T}
    = \frac{\sum_{a'\ne a}\E_{\unu'}\brs*{\np{T}{a'}}}{T} \le \frac{T^{\alpha}}{T} 
    = T^{\alpha-1}\enspace.
\end{align*}
Then, for large enough $T$, we have that 
\begin{align*}
    \E_\unu\brs*{\nq{T}{a}}\kl(\nu_a,\nu'_a)
    &\ge \br*{1-\frac{\E_{\unu}\brs*{\np{T}{a}}}{T}}\ln\frac{1}{T^{\alpha-1}} - \ln2\\
    &= (1-\alpha)\br*{1-\frac{\E_{\unu}\brs*{\np{T}{a}}}{T}}\ln T - \ln2\enspace.
\end{align*}
Next, let $a,b\in\Acal_*(\unu)$ be two optimal arms. For large enough $T$, the inequality holds for both arms, and summing over their respective inequalities yields
\begin{align}
    \label{eq: multiple optimal lower bound upper shifts 1}
    \E_\unu\brs*{\nq{T}{a}}\kl(\nu_a,\nu'_a) &+ \E_\unu\brs*{\nq{T}{b}}\kl(\nu_b,\nu'_b)\nonumber\\
    &\ge (1-\alpha)\br*{2-\frac{\E_{\unu}\brs*{\np{T}{a}}}{T}-\frac{\E_{\unu}\brs*{\np{T}{b}}}{T}}\ln T - 2\ln2\enspace.
\end{align}
To further analyze the r.h.s. of the inequality, notice that 
\begin{align*}
    2-\frac{\E_{\unu}\brs*{\np{T}{a}}}{T}-\frac{\E_{\unu}\brs*{\np{T}{b}}}{T}
    \ge 2 - \frac{\sum_{a'} \E_{\unu}\brs*{\np{T}{a'}}}{T}
     = 2 - \frac{T}{T} 
     =1
\end{align*}
For the l.h.s., we bound using H\"older's inequality:
\begin{align*}
    \E_\unu\brs*{\nq{T}{a}}\kl(\nu_a,\nu'_a) &+ \E_\unu\brs*{\nq{T}{b}}\kl(\nu_b,\nu'_b)\\
    &\le \max\brc*{\kl(\nu_a,\nu'_a),\kl(\nu_b,\nu'_b)}\br*{\E_\unu\brs*{\nq{T}{a}} + \E_\unu\brs*{\nq{T}{b}}}\enspace.
\end{align*}
Substituting both into \Cref{eq: multiple optimal lower bound upper shifts 1}, reorganizing and taking the limit, we get
\begin{align*}
    \liminf_{T\to\infty}\frac{\E_\unu\brs*{\nq{T}{a}} + \E_\unu\brs*{\nq{T}{b}}}{\ln T} \ge (1-\alpha)\frac{1}{\max\brc*{\kl(\nu_a,\nu'_a),\kl(\nu_b,\nu'_b)}}\enspace.
\end{align*}
Taking the supremum over all distributions $\nu'_a,\nu'_b\in\D$ such that $\E\brs*{\nu'_a}>\rOpt, \E\brs*{\nu'_b}>\rOpt$  and noting that the bound holds for any $\alpha\in(0,1]$ leads to
\begin{align*}
    \liminf_{T\to\infty}\frac{\E_{\unu}\brs*{\nq{T}{a}}+\E_{\unu}\brs*{\nq{T}{b}}}{\ln T} \ge \frac{1}{\max\brc*{\Kinf{+}(\nu_a,\rOpt,\D),\Kinf{+}(\nu_{b},\rOpt,\D)}}\enspace.
\end{align*}
As we previously remarked, if one of the infimums is over an empty set, the r.h.s. equals zero and the bound trivially holds. 
Finally, as in the case of condition (1), if there exist two optimal arms $a,b\in\Acal_*(\unu)$ for which $\Kinf{+}(\nu_{a},\rOpt,\D)=\Kinf{+}(\nu_{b},\rOpt,\D)=0$, then the r.h.s. of this bound equals infinity, and for at least one of these arms $a'\in\brc*{a,b}$, it holds that $\nq{\infty}{a'}=\liminf_{T\to\infty}\frac{\E_{\unu}\brs*{\nq{T}{a'}}}{\ln T}=\infty$. Furthermore, this bound can serve as a lower bound even when condition (2) does not hold. Finally, as in \eqref{eq: lower bound unique optimal improved} and \eqref{eq: lower bound multiple optimal below improved}, if we define $0\cdot(+\infty)\ge1$, a more general version of the bound would be 
\begin{align} 
    \label{eq: lower bound multiple optimal above improved}
    \Kinf{+}(\nu_a,\rOpt,\D)\nq{\infty}{a} + \Kinf{+}(\nu_b,\rOpt,\D)\nq{\infty}{b}  \ge 1 \enspace.
\end{align}
\end{proof}

\clearpage

%%%%%%%%%%%%%%%%%%%%%%%%%%%%%%%%%%%
%%%%%%%%%%%%%%%%%%%%%%%%%%%%%%%%%%%

\subsection{Additional Intuition Behind the Lower Bound for multiple optimal arms}
\label{appendix:lower bound example}

The following example illustrates the intuition behind the conditions for the third part of \Cref{theorem:lower bound asymptotic}:
\begin{example}
\label{example:lower bound asymptotic}
Let $x\in(0,1)$. Define $\D_1$ as the set of all distributions with the discrete support $\brc*{0,1}$ and expectations in $[0,x]$. Also, define $\D_2$ as the set of all distributions with the discrete support $\brc*{x,1}$. Particularly, for any $\nu\in\D_1$, $\E\brs*{\nu}\le x$, and for any $\nu\in\D_2$, $\E\brs*{\nu}\ge x$. Finally, let $\D=\D_1\cup\D_2$. Now let $\unu$ be a bandit instance with arm distributions in $\D$ such that $\rOpt=x$, i.e., the optimal reward is $x$. Then, the distribution of optimal arms is either $Ber(x)$ (if $\nu_{a^*}\in\D_1$) or outputs $x$ w.p. 1  (if $\nu_{a^*}\in\D_2$), and suboptimal arms are in $\D_1$. Consider the following instances:
\begin{enumerate}
    \item \textbf{All optimal arms are in $\D_1$.} 
    Recall that $\Omega\br*{\frac{\ln{1}/{\delta}}{\epsilon^2}}$ samples are required to distinguish between $Ber(x)$ and $Ber(x-\epsilon)$ w.p. $1-\delta$. Then, any fixed logarithmic budget cannot identify whether an arm is optimal or suboptimal with arbitrarily close mean $x-\epsilon$, and at least one optimal arm must be queried 
    super-logarithmically
    % more than a logarithmic number of times to 
    identify with high certainty that $\rOpt=x$. 
    Notice that in this case, $\Kinf{-}(\nu_a,x,\D)=0$ for all optimal arms. 
    \item \textbf{One optimal arm belongs to $\D_2$ and all other optimal arms are in $\D_1$.} 
    In this case, an agent can identify the optimal arm in $\D_2$ by a single sample, since it outputs $x$ w.p. 1 and this value is not supported by any distribution in $\D_1$. Similarly, an optimal arm can be related to $\D_1$ when outputs $0$, which only requires $\Ocal\br*{\frac{\ln{1}/{\delta}}{\ln{1}/\br{1-x}}}$ samples with certainty $1-\delta$. All suboptimal arms can be similarly identified. Then, with a logarithmic number of queries, agents can identify that only one arm belongs to $\D_2$, and as $\E\brs*{\nu}\le\E\brs*{\nu'}$ for any $\nu\in\D_1$ and $\nu'\in\D_2$, exploiting this arm is always optimal. Thus, logarithmic queries suffice.
    \item \textbf{At least two optimal arms belong to $\D_2$.} 
    In this case, agents must identify that the optimal mean $\rOpt$ is not higher than $x$, and as in the case where all optimal arms belong to $\D_1$, 
    a logarithmic number of queries will not suffice, as it only allows identifying the mean of an arm with a fixed accuracy. Then, all optimal arms in $\D_2$, except for a single arm, must be sampled more than a logarithmic number of times to identify that their mean is exactly $x$.
    In contrast, the remaining arm in $\D_2$ can be queried once to identify that it is in $\D_2$. Afterwards, as $\E\brs*{\nu}\ge x$ for any $\nu\in\D_2$ and $\rOpt=x$, this arm can be safely exploited. This corresponds to the case where $\Kinf{+}(\nu_a,\rOpt,\D)=0$ for at least two optimal arms.
\end{enumerate}
\end{example}

While the example illustrates that the conditions in the Theorem are sufficient, we actually believe that they are also necessary. In particular, when both conditions do not hold, we believe that an action-elimination algorithm that uses KL-based confidence interval \citep{garivier2018kl} should allow logarithmically querying the optimal arms. To do so, the algorithm will have to prioritize the optimal arm that might have similar distributions with higher expectations,  if exists (e.g., the optimal arm in $\D_2$ in the second part of the example). However, this is not a formal proof, which we leave for future work.

\clearpage

%%%%%%%%%%%%%%%%%%%%%%%%%%%%%%%%%%%
%%%%%%%%%%%%%%%%%%%%%%%%%%%%%%%%%%%

\subsection{Implications of the Problem-Dependent Lower Bounds to Querying Costs}
\label{appendix:querying costs}

We now discuss the implications of the asymptotic lower bounds when querying rewards incur a cost $c>0$. The cost might be known (e.g., payment for labeling) or unknown (e.g., user irritation in recommender systems). In this setting, it is natural to modify the regret to be \emph{query-aware}:
\begin{align*}
    \Regret^q(T)
    = \underbrace{\E\brs*{\sum_{t=1}^T \br*{\rOpt - \rwd{a_t}}}}_{\text{reward suboptimality}} + \underbrace{\E\brs*{\sum_{t=1}^T c\Ind{q_t=1}}}_{\text{cost penalty}} 
    &= \sum_{a=1}^\Narms \br*{\dr{a}\E\brs*{\np{T}{a}}+c\E\brs*{\nq{T}{a}}} \\[-.4cm]
    & \ge \sum_{a=1}^\Narms \br*{\dr{a}+c}\E\brs*{\nq{T}{a}}\enspace,
\end{align*}
where the inequality is since actions must be played to be queried. % ($\nq{t}{a}\le \np{t}{a}$).
% \textbf{Asymptotic lower bounds:} 
Assume that the regret is sub-polynomial for any instance in $\D$; in particular, the strategy is consistent and \Cref{theorem:lower bound asymptotic} holds.% and results with the following implications:

Now, if the optimal arm is unique, then strategies must separate the optimal arm from any suboptimal arm (by \Cref{theorem:lower bound asymptotic}, part 2). However, doing so with super-logarithmic queries leads to suboptimal regret due to the querying costs. Thus, optimal strategies should query all arms logarithmically, \emph{including the optimal arm}. The best balance between queries from the optimal arm and increased plays from suboptimal arms depends on the values of $c$ and the gaps. However, \textbf{reward degradation is unavoidable} for any $c>0$.  

On the other hand, if there are multiple optimal arms and the conditions of \Cref{theorem:lower bound asymptotic} (part 3) hold, then optimal arms must be queried super-logarithmically for a strategy to be consistent. Thus, \textbf{no consistent strategy can achieve logarithmic query-aware regret}, and every strategy must either suffer from high querying costs or low rewards.

\clearpage

%%%%%%%%%%%%%%%%%%%%%%%%%%%%%%%%%%%
%%%%%%%%%%%%%%%%%%%%%%%%%%%%%%%%%%%

\subsection{Proofs of Proposition~\ref{prop:lower bound linear} and Corollary~\ref{corollary:lower bound finite regret}}
\label{appendix:lower bound linear}

\lowerBoundLinear*
\begin{proof}
We follow the proof of Theorem 2 of \citep{garivier2019explore}. Let $a\notin\Acal_*(\unu)$ and let $\unu'$ be some modified problem such that $\nu'_k=\nu_k$ for all $k\ne a$ and $\nu'_a\in\D$ is such that $\E\brs{\nu'_a}>\rOpt$ (if no such distribution exists, then $\Kinf{+}(\nu_a,\rOpt,\D)=\infty$ and the r.h.s. of the lower bound is defined as $0$, so the bound trivially holds). Furthermore, notice that the desired lower bound is always smaller than $\frac{T}{\Narms}$; therefore, if $\E_{\unu}\brs*{\np{T}{a}}\ge \frac{T}{\Narms}$ then the bound holds. Thus, we can assume for the rest of the proof that $\E_{\unu}\brs*{\np{T}{a}}< \frac{T}{\Narms}$. Finally, since the strategy is better than uniform on $\D$ and $a$ is the unique optimal arm in $\unu'$, it holds that $\E_{\unu'}\brs*{\np{T}{a}}\ge \frac{T}{\Narms}$. Then, by \Cref{lemma: kl counts inequality} with $Z= \frac{\np{T}{a}}{T}\in\brs*{0,1}$, we have that
\begin{align}
    \label{eq:lower bound linear 1}
    \E_\unu\brs*{\nq{T}{a}}\kl(\nu_a,\nu'_a)
    \ge \klBin\br*{\frac{\E_{\unu}\brs*{\np{T}{a}}}{T},\frac{\E_{\unu'}\brs*{\np{T}{a}}}{T}}
    \ge \klBin\br*{\frac{\E_{\unu}\brs*{\np{T}{a}}}{T},\frac{1}{\Narms}}\enspace,
\end{align}
where the last inequality is since $\frac{\E_{\unu}\brs*{\np{T}{a}}}{T}< \frac{1}{\Narms}$, $\frac{\E_{\unu'}\brs*{\np{T}{a}}}{T}\ge \frac{1}{\Narms}$ and for $p\le q$, the function $\klBin(p,q')$ is increasing in $q'\in\brs*{p,q}$. Next, recall that by the local refinement of Pinsker's inequality (e.g., Lemma 6 in \citep{garivier2019explore}), for any $0\le p < q \le1$, we have that 
\begin{align*}
    \klBin(p,q) \ge \frac{1}{2q}(q-p)^2
\end{align*}
Substituting into \Cref{eq:lower bound linear 1}, we get 
\begin{align*}
    \E_\unu\brs*{\nq{T}{a}}\kl(\nu_a,\nu'_a)
    \ge \frac{\Narms}{2}\br*{\frac{1}{\Narms} - \frac{\E_{\unu}\brs*{\np{T}{a}}}{T}}^2\enspace,
\end{align*}
which can be reorganized (using $\frac{1}{\Narms} \ge \frac{\E_{\unu}\brs*{\np{T}{a}}}{T}$) to
\begin{align*}
    \E_{\unu}\brs*{\np{T}{a}} \ge \frac{T}{\Narms}\br*{1 - \sqrt{2\Narms\E_\unu\brs*{\nq{T}{a}}\kl(\nu_a,\nu'_a)}}\enspace.
\end{align*}
Moreover, since $\E_{\unu}\brs*{\np{T}{a}}\le \frac{T}{K}$, we also have that $\E_{\unu}\brs*{\nq{T}{a}}\le \frac{T}{K}$. This leads to a bound of 
\begin{align*}
    \E_{\unu}\brs*{\np{T}{a}} \ge \frac{T}{\Narms}\br*{1 - \sqrt{2T\cdot\kl(\nu_a,\nu'_a)}}\enspace.
\end{align*}
Taking the maximum between both previous bounds results with
\begin{align*}
    \E_{\unu}\brs*{\np{T}{a}} \ge \frac{T}{\Narms}\br*{1 - \sqrt{2\min\brc*{\Narms\E_{\unu}\brs*{\nq{T}{a}},T}\kl(\nu_a,\nu'_a)}}\enspace,
\end{align*}
and taking the supremum over all distributions $\nu'_a\in\D$ such that $\E\brs*{\nu'_a}>\rOpt$ leads to the desired result. 
In particular, notice that the result when $\Kinf{+}(\nu_a,\rOpt,\D)<\infty$ and $\E_{\unu}\brs*{\nq{T}{a}}\le \frac{1}{8\Narms \Kinf{+}(\nu_a,\rOpt,\D)}$ directly follows from the general bound.
\end{proof}

\clearpage
\lowerBoundLinearRegret*
\begin{proof}
For generality, we only assume that $\E_{\unu}\brs*{\nq{t}{a}} \le B_a(t)$, for some positive nondecreasing function $\brc*{B_a(t)}_{t\ge1}$. 
First, notice that since we defined $B_a^{-1}(x)=0$ when $B(1)> x$, then $B_a^{-1}(x)$ always exists, but might be equal to $+\infty$ if $B(t)\le x$ for all $t\in\mathbb{N}$. On the other hand, if $B_a^{-1}(x)=+\infty$, then the bound trivially holds (as there is no $T$ for which the result must hold). Thus, w.l.o.g., we assume that $\sup\brc*{t\in\mathbb{N}: B_a(t)\le x}<+\infty$, and thus $B_a^{-1}(x) = \max\brc*{t\in\mathbb{N}: B_a(t)\le x}$.

Moreover, by definition, for any suboptimal arm with $\frac{1}{8\Narms \Kinf{+}(\nu_a,\rOpt,\D)}<B(1)$, it holds that  $B_a^{-1}\br*{\frac{1}{8\Narms \Kinf{+}(\nu_a,\rOpt,\D)}} =0$, and such arms do not affect both the time constraint nor the regret bound. Therefore, we also assume w.l.o.g. that $\frac{1}{8\Narms \Kinf{+}(\nu_a,\rOpt,\D)}\ge B(1)$ for all suboptimal arms. Particularly, since $B(1)>0$, this condition also implies that 
$\Kinf{+}(\nu_a,\rOpt,\D)<\infty$ for all $a\notin\Acal_*(\unu)$.

Denote $T_a = B_a^{-1}\br*{\frac{1}{8\Narms \Kinf{+}(\nu_a,\rOpt,\D)}}$. Then, by definition, we have that 
\begin{align*}
    \E_{\unu}\brs*{\nq{T_a}{a}}\le B_a(T_a)\le \frac{1}{8\Narms \Kinf{+}(\nu_a,\rOpt,\D)}\enspace.
\end{align*} In turn, \Cref{prop:lower bound linear}, leads to the bound $\E_{\unu}\brs*{\np{T_a}{a}} \ge \frac{T_a}{2\Narms}$. Finally, as $\E_{\unu}\brs*{\np{T}{a}}$ is nondecreasing in $T$, for any $T\ge \max_{a\notin\Acal_*(\unu)} T_a$, we have that
\begin{align*}
    \Regret(T) 
    &= \sum_{a\notin\Acal_*(\unu)} \dr{a} \E_{\unu}\brs*{\np{T}{a}}\\
    &\ge \sum_{a\notin\Acal_*(\unu)} \dr{a}\E_{\unu}\brs*{\np{T_a}{a}}\\
    &\ge \sum_{a\notin\Acal_*(\unu)} \dr{a}\frac{T_a}{2\Narms}\\
    &= \sum_{a\notin\Acal_*(\unu)} \frac{\dr{a}}{2\Narms}B_a^{-1}\br*{\frac{1}{8\Narms \Kinf{+}(\nu_a,\rOpt,\D)}}\enspace.
\end{align*}
\end{proof}
\clearpage

%%%%%%%%%%%%%%%%%%%%%%%%%%%%%%%%%%%%%%%%%%
%%%%%%%%%%%%%%%%%%%%%%%%%%%%%%%%%%%%%%%%%%

\section{Upper Bounds}
\label{appendix: upper bounds}
In this part of the appendix, we prove the upper bounds of \Cref{section: upper bounds}. In particular, to make the proof as general as possible, we prove the upper bounds for any confidence intervals that follow some mild regularity assumptions. First recall that we denoted the history of the bandit process by $I_t=\br*{U_0,a_1,q_1,\rOb{1}\cdot q_1,U_1,\dots,a_t,q_t,\rOb{t}\cdot q_t,U_t}$ (see \Cref{section: lower bounds}), and we further define $\F_t = \sigma(I_t)$. Then, we define regular confidence intervals as follows:
\begin{definition}
\label{def:confidence regularity}
Let $T\ge1$ and let $\brc*{\brs*{LCB_t(a),UCB_t(a)}}_{t\in\brs*{T},a\in\brs*{\Narms}}$ be a sequence of confidence intervals such that  $LCB_t(a),UCB_t(a)$ are predictable w.r.t. $\F_t$ for any $t\in\brs*{T}$ and $a\in\brs*{\Narms}$. Then, the confidence intervals are called \emph{regular} if there exists a sequence of events $\brc*{\G_t\in\F_{t-1}}_{t\in\brs*{T}}$ (`good events') such that regardless of the bandit strategy, the following hold:
\begin{enumerate}
    \item For any $t\in\brs*{T}$ and $a\in\brs*{\Narms}$, it holds that $CI_t(a) \triangleq UCB_t(a)-LCB_t(a)\ge0$.
    \item For any $t\in\brs*{T}$, if $\G_t$ holds, then $\rwd{a}\in\brs*{LCB_t(a),UCB_t(a)}$ for all $a\in\brs*{\Narms}$.
    \item The failure probabilities are bounded by $\sum_{t=1}^T \Pr\brc*{\bar{\G}_t}\le C(T)$.
    \item \label{cond:expectation} For any $a\in\brs*{\Narms}$, there exists a function $N_a^g:\brs*{T}\times [0,\dr{a}]\mapsto[0,\infty)$ such that if $\G_t$ holds, then for any $N_a^g(t,\mu)<\nq{t-1}{a}\le t-1$, it holds that $CI_t(a)< \mu$. Moreover, $N_a^g(t,\mu)\le t-1$ for all $a,t$ and $\mu$ and we define $N_a^g(t,0)=t-1$.
    \item \label{cond:almost sure} There exists a function $N^{as}:\brs*{T}\times [0,\infty)\mapsto[0,\infty)$ such that for any $a\in\brs*{\Narms}$ and  $N^{as}(t,\mu)<\nq{t-1}{a}\le t-1$, it holds that $CI_t(a)< \mu$. Moreover, $N_a^g(t,\mu)\le t-1$ for all $a,t$ and $\mu$ and we define $N^{as}(t,0)=t-1$.
\end{enumerate}
Finally, w.l.o.g., we assume that for any $t\ge1$, $a\in\brs*{\Narms}$ and $\mu\in[0,\dr{a}]$, it holds that $N_a^g(t,\mu)\le N^{as}(t,\mu)$, as by definition, we can always replace $N_a^g(t,\mu)$ by $\min\brc*{N_a^g(t,\mu),N^{as}(t,\mu)}$. Equivalently, any bound dependence in $N_a^g(t,\mu)$ can be replaced by $\min\brc*{N_a^g(t,\mu),N^{as}(t,\mu)}$
\end{definition}
Each of the conditions is a reasonable requirement from confidence intervals. The first condition requires that $\brs*{LCB_t(a),UCB_t(a)}$ will be a nonempty interval. The second condition requires it to contain the true mean under some good event, while the third makes sure that the good events hold with sufficiently high probability. The last two conditions characterize the quantities that will affect the performance when using the confidence intervals: condition four quantifies the number of samples $N_a^g(t,\mu)$ required for an arm $a$ to be well-concentrated (up to a confidence level $\mu$) at time $t$ \emph{when the good event holds}. Importantly, it will determine in-expectation regret and querying bounds. We emphasize that $N_a^g$ might depend on the specific arm distribution $\nu_a$ (e.g., be variance-dependent), but cannot depend on any random quantity. The last condition quantifies the number of samples $N^{as}(t,\mu)$ required for the confidence intervals to be smaller than $\mu$ at time $t$ \emph{regardless of the good event}. In particular, this condition must hold for all arms and will determine the \emph{almost-sure} querying guarantees. In practice, all these requirements are extremely mild and hold for most standard confidence intervals, including Hoeffding-based confidence intervals (as in the main paper) and Bernstein-based confidence intervals. We refer the readers to \Cref{appendix:regularity proofs} for the regularity proofs of these confidence intervals. We now state a more general version of \Cref{theorem:upper bound} that characterizes the performance of \Cref{alg: BuFALU} when used with regular confidence intervals:
\begin{theorem}
\label{theorem:upper bound general}
Let $T\ge1$ and let $\brc{\epsilon(t)}_{t\in\brs*{T}}$ be some nonnegative sequence. Also, let $L_\epsilon(T,\dr{}) = \sum_{t=1}^T \Ind{\epsilon(t)\ge\dr{}}$ be the number of times $\epsilon(t)$ exceeds a confidence-level $\dr{}\ge0$ until $T$ and define
\begin{align*}
    \bar{N}_a^g(T,\mu) = \max_{t\in\brs*{T}}N_a^g\br*{t,\max\brc*{\mu,\epsilon(t)}},\quad 
    \bar{N}^{as}(T) = \max_{t\in\brs*{T}}N^{as}(t,\epsilon(t))\enspace.
\end{align*}

Then, when applying \Cref{alg: BuFALU} with regular confidence intervals, the following hold:
\begin{enumerate}
    \item For all $a\in\brs*{\Narms}$, it holds that $\nq{T}{a} \le \bar{N}^{as}(T) +1$.
    \item If there are multiple optimal arms $(\abs{\Acal_*}>1)$, then
    \begin{align*}
        &\Regret(T)\le \sum_{a\notin\Acal_*} \dr{a}\br*{\bar{N}_a^g(T,\dr{a})+L_\epsilon(T,\dr{a})} +\br*{\Narms+C(T)}\dr{\max}, \\
        &\E\brs*{\Bq{T}} \le \sum_{a=1}^{\Narms} \bar{N}_a^g(T,\dr{a}) +  \br*{\Narms+C(T)}\enspace.
    \end{align*}
    \item If the optimal arm $a^*$ is unique, then 
    \begin{align*}
        &\Regret(T)\le \sum_{a\notin\Acal_*} \dr{a}\br*{\bar{N}_a^g\br*{T,\frac{\dr{a}}{2}} + L_\epsilon(T,\dr{a})} +\br*{\Narms+C(T)}\dr{\max}, \\
        &\E\brs*{\Bq{T}} \le \sum_{a\ne a^*} \bar{N}_a^g\br*{T,\frac{\dr{a}}{2}} + \bar{N}_{a^*}^g\br*{T,\frac{\dr{\min}}{2}} +  \br*{\Narms+C(T)}\enspace.
    \end{align*}
\end{enumerate}
\end{theorem}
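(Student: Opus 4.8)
The plan is to track a single ``good event'' $\mathcal{G} = \bigcap_{t=1}^T \mathcal{G}_t$ and argue deterministically on $\mathcal{G}$, then pay the failure probability $C(T)$ separately. First I would record the basic structural facts that hold \emph{unconditionally}: an arm is queried at round $t$ only if it is played and $q_t=1$, which by the algorithm's rule requires $a_t = c_t \in \{l_t, u_t\}$, $UCB_t(u_t) > LCB_t(l_t)$, and $UCB_t(c_t) - LCB_t(l_t) > \epsilon(t)$. Since $c_t \in \argmax_{a\in\{u_t,l_t\}} CI_t(a)$ and $LCB_t(l_t)$ is the largest LCB, one gets $CI_t(c_t) \ge UCB_t(c_t) - LCB_t(l_t) > \epsilon(t)$. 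By condition~\ref{cond:almost sure} of regularity, $CI_t(c_t) > \epsilon(t)$ forces $\nq{t-1}{c_t} \le N^{as}(t,\epsilon(t)) \le \bar{N}^{as}(T)$. Hence once an arm has been queried more than $\bar{N}^{as}(T)$ times (plus the one forced initialization query), it can never be queried again; this gives Part~1 with the ``$+1$'' coming from the initialization loop. The analogous argument with condition~\ref{cond:expectation} on the good event gives the in-expectation query bounds: on $\mathcal{G}$, an arm queried at round $t$ must satisfy $\nq{t-1}{a} \le N_a^g(t,\epsilon(t)) \le \bar{N}_a^g(T,0)$, and I would refine this to $\bar{N}_a^g(T,\dr{a})$ for suboptimal arms by noting that once $CI_t(a) < \dr{a}$ the arm's UCB drops below the optimal LCB so it is never chosen as $c_t$ — I need a ``separation lemma'' here.

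The crux is therefore a deterministic lemma, valid on $\mathcal{G}$: (i) if $UCB_t(c_t) - LCB_t(l_t) \le \epsilon(t)$, then $\dr{l_t} \le \epsilon(t)$ (this is the cited Lemma~\ref{lemma: gap-query}, whose proof uses $\rwd{a} \in [LCB_t(a), UCB_t(a)]$ and the definitions of $l_t, u_t$); and (ii) if $UCB_t(u_t) \le LCB_t(l_t)$ then $l_t$ is optimal. From these I would bound $\np{T}{a}$ for each suboptimal $a$ by splitting the rounds where $a$ is played into three groups: rounds where $a_t = l_t$ with a query blocked by the $UCB \le LCB$ condition (impossible for suboptimal $a$ on $\mathcal{G}$, since $l_t$ would be optimal); rounds where $a_t = l_t$ with the $\epsilon$-condition triggered (then $\dr{a} = \dr{l_t} \le \epsilon(t)$, so $\epsilon(t) \ge \dr{a}$, contributing at most $L_\epsilon(T,\dr{a})$ such rounds); and rounds where $a_t = c_t$ and $a$ is queried (contributing at most the per-arm query bound). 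Summing $\dr{a}$ over these gives the regret bound on $\mathcal{G}$; off $\mathcal{G}$ I bound trivially by $\dr{\max}$ times $\sum_t \Pr\{\bar{\mathcal{G}}_t\} \le C(T)\dr{\max}$, plus the $\Narms \dr{\max}$ from the initialization loop.

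The split between the unique-optimal and multiple-optimal cases enters only in the query bound refinement and in the confidence level appearing in $\bar{N}_a^g$. With a \emph{unique} optimal arm $a^*$, to stop querying $a^*$ we must separate it from \emph{all} suboptimals, i.e.\ drive $LCB_t(a^*) > UCB_t(a)$; on $\mathcal{G}$ this holds once both $CI_t(a^*)$ and $CI_t(a)$ are below $\dr{a}$, so the governing quantity for $a^*$ is the hardest arm to separate, giving the $\bar{N}_{a^*}^g(T, \dr{\min}/2)$ term, and the factor-$\tfrac12$ arises because separating $a^*$ from $a$ when their UCB/LCB intervals both have width $w$ needs $w < \dr{a}/2$ (the midpoint argument: if $CI \ge \dr{a}/2$ for either arm, the intervals can still overlap). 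For suboptimal $a$ in this case the same midpoint consideration replaces $\bar{N}_a^g(T,\dr{a})$ by $\bar{N}_a^g(T,\dr{a}/2)$. With \emph{multiple} optimal arms, one can never separate an optimal arm from its optimal peers, so I only use the crude bound $\nq{T}{a} \le \bar{N}^{as}(T)+1 \le \bar{N}_a^g(T,0)+ (\text{lower-order})$ for each optimal arm — this is where the $\abs{\Acal_*}$-many $\bar N^g_{\cdot}(T,0)$ terms in the query bound come from — and the $\dr{a}/2$ sharpening is unavailable, leaving $\bar{N}_a^g(T,\dr{a})$ for suboptimals. I expect the main obstacle to be the careful bookkeeping in the separation lemma — precisely matching ``$CI_t$ small'' to ``$a$ is never chosen as $c_t$ or $l_t$ after this point'' while handling the interplay with the $\epsilon(t)$-blocking rule — and ensuring the midpoint constants ($\dr{a}$ vs.\ $\dr{a}/2$) are tight in each regime.
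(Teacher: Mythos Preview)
Your plan is essentially the paper's proof: the same key lemma ($q_t=1\Rightarrow CI_t(a_t)>\epsilon(t)$, and on $\G_t$, $q_t=0\Rightarrow \dr{l_t}\le\epsilon(t)$), the same split of plays into queried and $\epsilon$-blocked rounds, and the same midpoint case analysis distinguishing unique from multiple optimal arms.

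Two points need correcting. First, the single-event framing $\G=\bigcap_t\G_t$ is inconsistent with the off-event bound you wrote: arguing on the intersection, the complement contributes $\E\brs*{\sum_t\dr{a_t}\Ind{\bar{\G}}}\le T\dr{\max}\Pr\brc*{\bar{\G}}$, which by a union bound is only $\le T\dr{\max}C(T)$ --- linear in $T$. To obtain the stated $C(T)\dr{\max}$ you must work per round, decomposing $\sum_t\E\brs*{\dr{a_t}\Ind{\G_t}}+\dr{\max}\sum_t\Pr\brc*{\bar{\G}_t}$; the paper does exactly this, and your own expression $\dr{\max}\sum_t\Pr\brc*{\bar{\G}_t}$ already points that way, but then the on-event count analysis must also be done round by round (it only ever uses $\G_t$ at time $t$, so nothing is lost). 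Second, your separation heuristic ``once $CI_t(a)<\dr{a}$, the arm's UCB drops below the optimal LCB'' is not the working mechanism: on $\G_t$ one gets $UCB_t(a)<\rOpt$, but also $LCB_t(a^*)\le\rOpt$, so no separation follows directly. The paper instead exploits the rule $c_t\in\argmax_{\brc*{l_t,u_t}}CI_t$: if a suboptimal $a=l_t$ is queried, some optimal arm is a candidate for $u_t$, giving $CI_t(u_t)\ge\dr{a}$ and hence $CI_t(l_t)\ge CI_t(u_t)\ge\dr{a}$; if $a=u_t$ with multiple optimal arms, at least one optimal arm is $\ne l_t$, forcing $UCB_t(u_t)\ge\rOpt$ and thus $CI_t(u_t)\ge\dr{a}$. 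This case analysis is precisely the ``separation lemma'' you anticipated, and the unique-optimal $a=u_t$ subcase is where the $\dr{a}/2$ enters. Finally, for optimal arms in the multiple-optimal query bound, use your earlier (correct) observation that on $\G_t$ a queried arm satisfies $\nq{t-1}{a}\le N_a^g(t,\epsilon(t))\le\bar N_a^g(T,0)$; the inequality $\bar N^{as}(T)\le\bar N_a^g(T,0)$ you wrote afterwards is backwards.
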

Notably, the bounds hold for any sequence of nonnegative confidence levels $\brc*{\epsilon(t)}_{t\ge1}$. This stands in sharp contrast to the results of~\cite{efroni2021confidence}, which assumed that the budget is nondecreasing, a requirement that limits the valid sequences of $\epsilon(t)$. However the cost for this generaliztion is the maximization over $t\in\brs*{T}$ in $\bar{N}_a^g(T,\mu)$ and $\bar{N}^{as}(T)$. In particular, the maximization ensures that regardless of $t$, these quantities upper bound the number of samples required for the confidence intervals to be small. Nonetheless, for reasonable choices of $\epsilon(t)$, $\bar{N}_a^g(T,\mu)$ and $\bar{N}^{as}(T)$ can be easily calculated. We refer the readers to \Cref{lemma: hoeffding regular} and \Cref{lemma: bernstein regular}, where we present natural choices of $\epsilon$ for Hoeffding and Bernstein confidence intervals, respectively, and explicitly bound $\bar{N}_a^g$ and $\bar{N}^{as}$. Specifically, one favorable choice is to require $\epsilon(t)$ to be nonincreasing. Then, bounding $L_\epsilon$ by $L_\epsilon(T,\dr{})\le\epsilon^{-1}(\dr{}) \triangleq \sup\brc*{t\ge1: \epsilon(t)\ge \dr{}}$ is (asymptotically) tight and much easier to understand and compute. Finally, we remark that for ease of writing, in both aforementioned lemmas, $B(t)$ is the \emph{per-arm querying budget}. Then, results for total querying budget (as we immediately discuss) are simply obtained by replacing $B(t) \to B(t)/\Narms$. 

When applied with Hoeffding-based confidence intervals, the theorem reduces to the results in \Cref{theorem:upper bound}, and in particular, choosing $\epsilon(t) = \sqrt{\frac{6\Narms\ln t}{B(t)}}$ for a nondecreasing positive budget $B(t)$ also provides budget guarantees, namely, $\Bq{T}\le B(T)+\Narms$. Interestingly, a similar choice with Bernstein-based confidence intervals (explicitly, $\epsilon(t) = \sqrt{\frac{6\Narms\ln t}{B(t)-\Narms}}+14\frac{\Narms\ln t}{B(t)-\Narms}$ for $B(t)>\Narms$) provides regret bounds that depend on the variance of arms $\brc*{V_a}_{a\in\brs*{\Narms}}$ and expected per-arm querying bounds of $\Ocal\br*{4V_a B(t)/\Narms}$. Importantly, since $V_a\le\frac{1}{4}$, this improves the Hoeffding-based bounds and might be dramatically lower when the variance is low.

\clearpage
\subsection{Proof of Theorem \ref{theorem:upper bound general}}
\label{appendix: problem dependent proof}
Before proving the theorem, we start by proving a few key properties on the querying mechanism and played actions of \Cref{alg: BuFALU}:
\begin{lemma}
\label{lemma: gap-query}
Let $t\in\brc*{\Narms+1,\dots,T}$ and assume that \Cref{alg: BuFALU} is run with regular confidence intervals and $\epsilon(t)\ge0$. Then, the following hold: (i) If $q_t=1$, then $CI_t(a_t)>\epsilon(t)$. 
(ii) If the good event $\G_t$ holds and $q_t=0$, then $\dr{a_t}\le \epsilon(t)$. 
\end{lemma}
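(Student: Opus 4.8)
The plan is to unpack the two conditions in the \textbf{if} statement of \Cref{alg: BuFALU} and relate them to the quantities $CI_t(a_t)$ and $\dr{a_t}$, using the definitions $l_t\in\argmax_a LCB_t(a)$, $u_t\in\argmax_{a\ne l_t}UCB_t(a)$, $c_t\in\argmax_{a\in\{u_t,l_t\}}CI_t(a)$ and the regularity of the confidence intervals (\Cref{def:confidence regularity}, condition 2 in particular).

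For part (i), I would argue by contradiction (or contrapositive): suppose $CI_t(a_t)\le\epsilon(t)$ and the algorithm queried, i.e., $a_t=c_t$. By the definition of $c_t$ as the arm with the \emph{wider} confidence interval among $\{l_t,u_t\}$, we have $CI_t(l_t)\le CI_t(c_t)\le\epsilon(t)$. Then I would bound $UCB_t(c_t)-LCB_t(l_t) \le UCB_t(c_t)-LCB_t(c_t) + \bigl(LCB_t(c_t)-LCB_t(l_t)\bigr)$; since $l_t$ maximizes $LCB$, the term $LCB_t(c_t)-LCB_t(l_t)\le 0$, so $UCB_t(c_t)-LCB_t(l_t)\le CI_t(c_t)\le\epsilon(t)$. (Actually even simpler: whether $c_t=l_t$ or $c_t=u_t$, one checks $UCB_t(c_t)-LCB_t(l_t)\le\epsilon(t)$ directly — if $c_t=l_t$ it equals $CI_t(l_t)$; if $c_t=u_t$, then $UCB_t(u_t)-LCB_t(l_t)\le UCB_t(u_t)-LCB_t(u_t)=CI_t(u_t)=CI_t(c_t)$ using $LCB_t(l_t)\ge LCB_t(u_t)$.) Either way, the second branch of the \textbf{if} condition, $UCB_t(c_t)-LCB_t(l_t)\le\epsilon(t)$, is satisfied, so the algorithm plays $a_t=l_t$ with $q_t=0$, contradicting $q_t=1$. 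Hence $q_t=1$ forces $CI_t(a_t)=CI_t(c_t)>\epsilon(t)$.

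For part (ii), assume $\G_t$ holds and $q_t=0$; then the algorithm played $a_t=l_t$, and one of the two disjuncts in the \textbf{if} held. I would handle both cases and show each yields $\dr{l_t}\le\epsilon(t)$. Under $\G_t$, condition 2 of \Cref{def:confidence regularity} gives $\rwd{a}\in[LCB_t(a),UCB_t(a)]$ for every $a$, so any optimal arm $a^*$ satisfies $\rOpt=\rwd{a^*}\le UCB_t(a^*)$ and $\rwd{l_t}\ge LCB_t(l_t)$, hence $\dr{l_t}=\rOpt-\rwd{l_t}\le UCB_t(a^*)-LCB_t(l_t)$. The subtlety is that we must control $UCB_t(a^*)$ by a quantity appearing in the \textbf{if} condition; I would split on whether $a^*=l_t$ or $a^*\ne l_t$. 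If $a^*\ne l_t$ then $UCB_t(a^*)\le UCB_t(u_t)$ by the definition of $u_t$, so $\dr{l_t}\le UCB_t(u_t)-LCB_t(l_t)$; if additionally the first disjunct $UCB_t(u_t)\le LCB_t(l_t)$ held this is $\le 0\le\epsilon(t)$, and if the second disjunct $UCB_t(c_t)-LCB_t(l_t)\le\epsilon(t)$ held I'd note $UCB_t(u_t)\le UCB_t(c_t)$ when $c_t=u_t$ and handle $c_t=l_t$ separately (then $UCB_t(u_t)\le LCB_t(l_t)$ need not hold, so I rely on the fact that when $c_t=l_t$ we have $CI_t(l_t)\ge CI_t(u_t)$, and combine with the first disjunct failing to conclude — this is the case requiring a little care). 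If $a^*=l_t$ then $l_t$ is optimal and $\dr{l_t}=0\le\epsilon(t)$ trivially.

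The main obstacle is the bookkeeping in part (ii) across the four-way split (which disjunct of the \textbf{if} holds $\times$ whether $c_t=l_t$ or $c_t=u_t$), making sure that in every case the chain $\dr{l_t}\le UCB_t(a^*)-LCB_t(l_t)$ can be bounded by either $0$ (first disjunct) or $\epsilon(t)$ (second disjunct). I expect the cleanest route is to first establish the inequality $\min\{UCB_t(u_t), UCB_t(c_t)\}\ge UCB_t(a^*)$ does \emph{not} always hold, so instead prove the single clean fact: \emph{if $\G_t$ holds and $q_t=0$, then $\dr{l_t}\le\max\{0,\,UCB_t(c_t)-LCB_t(l_t)\}$ when the second disjunct triggered, and $\dr{l_t}\le 0$ when the first triggered} — and for the second-disjunct case observe that $UCB_t(a^*)\le\max\{UCB_t(l_t),UCB_t(u_t)\}\le LCB_t(l_t)+\max\{CI_t(l_t),UCB_t(u_t)-LCB_t(l_t)\}$, and since $c_t$ maximizes $CI_t$ over $\{l_t,u_t\}$ one has $UCB_t(u_t)-LCB_t(l_t)\le CI_t(u_t)+(LCB_t(u_t)-LCB_t(l_t))\le CI_t(u_t)\le CI_t(c_t)$, while $CI_t(l_t)\le CI_t(c_t)$ too, giving $UCB_t(a^*)-LCB_t(l_t)\le CI_t(c_t)= UCB_t(c_t)-LCB_t(c_t)\le UCB_t(c_t)-LCB_t(l_t)\le\epsilon(t)$. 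That last chain, once written carefully, closes the argument.
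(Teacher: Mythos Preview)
Your approach matches the paper's: part (i) is the same argument in contrapositive form, and your case structure for part (ii) (split on which disjunct of the \textbf{if} holds, then on whether $l_t$ is optimal, then on $c_t\in\{l_t,u_t\}$) is precisely what the paper does. Your earlier sketch already contains all the ingredients for the delicate sub-case $c_t=l_t$ under the second disjunct: from $UCB_t(c_t)-LCB_t(l_t)=CI_t(l_t)\le\epsilon(t)$ and $CI_t(u_t)\le CI_t(l_t)$ you get $\dr{l_t}\le UCB_t(u_t)-LCB_t(l_t)\le UCB_t(u_t)-LCB_t(u_t)=CI_t(u_t)\le\epsilon(t)$, exactly as in the paper.

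The one genuine slip is in your final ``clean'' chain: the step $CI_t(c_t)=UCB_t(c_t)-LCB_t(c_t)\le UCB_t(c_t)-LCB_t(l_t)$ has the inequality reversed. Since $l_t$ maximizes $LCB_t$, you have $LCB_t(c_t)\le LCB_t(l_t)$, hence $UCB_t(c_t)-LCB_t(c_t)\ge UCB_t(c_t)-LCB_t(l_t)$, not $\le$. So your unified chain correctly reaches $\dr{l_t}\le CI_t(c_t)$ but cannot land on $UCB_t(c_t)-LCB_t(l_t)$ from there. The fix is simply to abandon the last link and keep the case split on $c_t$: when $c_t=u_t$ use $\dr{l_t}\le UCB_t(u_t)-LCB_t(l_t)=UCB_t(c_t)-LCB_t(l_t)\le\epsilon(t)$ directly, and when $c_t=l_t$ use $\dr{l_t}\le CI_t(c_t)=CI_t(l_t)=UCB_t(c_t)-LCB_t(l_t)\le\epsilon(t)$. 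With that correction your proof is complete and coincides with the paper's.
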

\begin{proof}

 \hfill
 
\textbf{Part (i).} When $q_t=1$, recall that $a_t=c_t\in\brc*{l_t,u_t}$. We divide into two cases. If $a_t=l_t$, then a necessary condition for querying is that
\begin{align*}
    \epsilon(t) < UCB_t(c_t) - LCB_t(l_t) = UCB_t(l_t)-LCB_t(l_t) = CI_t(l_t)\enspace,
\end{align*}
as required. Otherwise, $a_t=u_t$. Then, by the definition of $l_t$, we have that $LCB_t(u_t)\le LCB_t(l_t)$ and the querying condition implies that 
\begin{align*}
    \epsilon(t) 
    < UCB_t(c_t) - LCB_t(l_t) 
    = UCB_t(u_t)-LCB_t(l_t)
    \le UCB_t(u_t) - LCB_t(u_t)
    = CI_t(u_t),
\end{align*}
which concludes this part of the proof. 

\textbf{Part (ii).} When $q_t=0$, we have that $a_t=l_t$; therefore, we need to show that $\dr{l_t}\le \epsilon(t)$. For $q_t=0$ to hold, at least one of the following two options must occur: 
\begin{enumerate}
    \item First option: $UCB_t(u_t)\le LCB_t(l_t)$, which implies that $UCB_t(a)\le LCB_t(l_t)$ for all $a\ne l_t$. In particular, since the good event $\G_t$ holds, we have that $LCB_t(l_t)\le \rwd{l_t}$ and $UCB_t(a)\ge \rwd{a}$ for all $a\ne l_t$, and thus, $\rwd{a}\le\rwd{l_t}$ for all $a\ne l_t$. Therefore, $l_t$ is an optimal arm and $\dr{l_t}=0\le \epsilon(t)$. 
    \item Second option: $UCB_t(c_t) - LCB_t(l_t)\le\epsilon(t)$. Assume w.l.o.g. that $l_t$ is not an optimal arm, as otherwise $\dr{l_t}=0$ and the claim naturally holds, and let $a^*\ne l_t$ be an optimal arm. Specifically, under the good event $\G_t$, it holds that 
\begin{align*}
    UCB_t(u_t) = \max_{a\ne l_t} UCB_t(a) \ge UCB_t(a^*) \ge\rwd{a^*} = \rOpt\enspace.
\end{align*}
We now divide into the cases where $c_t=u_t$ and $c_t=l_t$. 
If $c_t=u_t$, we use the fact that under $\G_t$, $LCB_t(l_t)\le \rwd{l_t}$, and we get that
\begin{align*}
    \dr{l_t} = \rOpt - \rwd{l_t} \le UCB_t(u_t)-LCB_t(l_t) \le \epsilon(t)\enspace,
\end{align*}
where the last inequality is by the assumption that $UCB_t(c_t) - LCB_t(l_t)\le\epsilon(t)$ for $c_t=u_t$. On the other hand, if $c_t=l_t$, this assumption is equivalent to $CI_t(l_t)\le \epsilon(t)$. In turn, since $l_t=c_t\in\argmax_{a\in\brc*{u_t,l_t}} CI_t(a)$, it also implies that $CI_t(u_t)\le \epsilon(t)$. Finally, under the good event, and since $l_t\in\argmax_a LCB_t(a)$, we have that 
\begin{align*}
    \rwd{l_t} \ge LCB_t(l_t) \ge LCB_t(u_t)
\end{align*}
Then, recalling that $UCB_t(u_t)\ge\rOpt$ and $CI_t(u_t)\le\epsilon(t)$ leads to the result since
\begin{align*}
    \dr{l_t} = \rOpt - \rwd{l_t} \le UCB_t(u_t) - LCB_t(u_t)
    = CI_t(u_t)\le \epsilon(t)\enspace.
\end{align*} 
\end{enumerate}
\end{proof}

\clearpage
%%%%%%%%%%%%%%%%%%%%%%%%%%%%%%%%%%%%%%%
%%%%%%%%%%%%%%%%%%%%%%%%%%%%%%%%%%%%%%%

Given this lemma, we can now prove \Cref{theorem:upper bound general}:
\begin{proof}

We remark that at the first $\Narms$ rounds, each arm is played and queried once. Then, it can be verified that all bounds hold for $T\le \Narms$. Thus, throughout the proof, we assume w.l.o.g. that $T>\Narms$.

\textbf{Almost-sure querying bound:} Assume in contradiction that $\nq{T}{a}>\bar{N}^{as}(T)+1$ for some $a\in\brs*{\Narms}$. Therefore, there exists a time index $t\in\brc*{\Narms+1,\dots,T}$ such that action $a$ was queried ($a_t=a$ and $q_t=1$) and $\nq{t-1}{a}>\bar{N}^{as}(T)\ge N^{as}(t,\epsilon(t))$, where the last inequality is by the definition of $\bar{N}^{as}$. In particular, since $\nq{t-1}{a}\le t-1$, this condition cannot hold if $N^{as}(t)=t-1$. Therefore, $N^{as}(t,\epsilon(t))<t-1$, and by the definition of $N^{as}$, we have that $CI_t(a)<\epsilon(t)$. This come in contradiction to the fact that $a_t=a$ was queried, since part (i) of \Cref{lemma: gap-query} implies that $CI_t(a)=CI_t(a_t)>\epsilon(t)$. This proves that $\nq{T}{a}\le \bar{N}^{as}(T)+1$ for all $a\in\brs*{\Narms}$ and concludes the first part of the proof.

\textbf{Count decomposition:} To derive the expected regret and querying bounds, we start with a general decomposition that will be relevant for all the required results. Specifically, the number of plays of any arm $a$ under the good event $\G_t$ (defined in \Cref{def:confidence regularity}) can be bounded as follows:
\begin{align} 
    \sum_{t=1}^T \Ind{\G_t,a_t=a}
    = \sum_{t=1}^T \Ind{\G_t,a_t=a,q_t=0} + \sum_{t=1}^T \Ind{\G_t,a_t=a,q_t=1}\enspace. \label{eq:count decomp}
\end{align}
By \Cref{lemma: gap-query} (part (i)), the first term can be bounded by 
\begin{align} 
    \sum_{t=1}^T \Ind{\G_t,a_t=a,q_t=0} 
    &= \sum_{t=1}^T \Ind{\G_t,l_t=a,q_t=0} \tag{$a_t=l_t$ when $q_t=0$} \\
    &\le \sum_{t=1}^T \Ind{\G_t,\dr{a}\le \epsilon(t)} \tag{By \Cref{lemma: gap-query}} \\
    & \le L_\epsilon(T,\dr{a})\enspace, \label{eq:count decomp no query}
\end{align}
where the last relation is by the definition of $L_\epsilon$. For the second term of \eqref{eq:count decomp}, let $\dr{}\ge0$ be some parameter that will be determined later. Then, under the good event, we bound
\begin{align}
    \sum_{t=1}^T \Ind{\G_t,a_t=a,q_t=1} 
    &= \underbrace{\sum_{t=1}^T \Ind{\G_t,a_t=a,q_t=1,\nq{t-1}{a} \le \bar{N}_a^g(T,\dr{})}}_{\le \bar{N}_a^g(T,\dr{})+1}  \nonumber\\
    &\quad+ \sum_{t=1}^T \underbrace{\Ind{\G_t,a_t=a,q_t=1,\nq{t-1}{a} > \bar{N}_a^g(T,\dr{})}}_{(*)=0\textrm{ for an appropriate $\dr{}$}} \nonumber\\
    & \le \bar{N}_a^g(T,\dr{})+1\enspace,\label{eq:count decomp query}
\end{align}
which leads to a total bound of 
\begin{align}
    \sum_{t=1}^T \Ind{\G_t,a_t=a}\le \bar{N}_a^g(T,\dr{})+1 + L_\epsilon(T,\dr{a})\enspace. \label{eq: count decomp result}
\end{align}

\textbf{Proving the bound of \Cref{eq:count decomp query}.} The bound on the first term holds since $\nq{t}{a}$ starts from zero and increases by one every time action $a$ was played and queried. We now show that depending on the assumptions and specific arms, $\dr{}$ can be chosen such that under $\G_t$, the events $(*)$ of the second term cannot occur. This is trivially true if $t\le \Narms$, since each arm $a$ is queried at time $t=a$ with $\nq{t-1}{a}=0$ and $N_a^{\dr{a}}(a)\ge0$. Therefore, w.l.o.g., we focus on $t>\Narms$. Moreover, by the definition of $\bar{N}_a^g$, we have that $\bar{N}_a^g(T,\dr{})\ge N_a^g(t,\max\brc*{\dr{},\epsilon(t)})$. Then, it suffices to show that under $\G_t$, action $a$ cannot be queried if $\nq{t-1}{a}>N_a^g(t,\max\brc*{\dr{},\epsilon(t)})$. 

To show this, first recall that $\nq{t-1}{a}\le t-1$; thus, if $N_a^g(T,\max\brc*{\dr{},\epsilon(t)})=t-1$, this condition can never hold. Otherwise, $N_a^g(T,\max\brc*{\dr{},\epsilon(t)})<t-1$, and by the regularity of the confidence interval, when $\G_t$ holds and $\nq{t-1}{a}>N_a^g(T,\max\brc*{\dr{},\epsilon(t)})$ we have that 
\begin{align*}
    CI_t(a_t) = CI_t(a) < \max\brc*{\dr{},\epsilon(t)}\enspace. 
\end{align*}
Importantly, if $\max\brc*{\dr{},\epsilon(t)}=\epsilon(t)$, then the condition of $CI_t(a_t)<\epsilon(t)$ implies that an action cannot be queried (namely, $q_t\ne1$), by the first part of \Cref{lemma: gap-query}. Therefore, w.l.o.g., we assume that $\max\brc*{\dr{},\epsilon(t)}=\dr{}$ and prove by contradiction that for the right choice of $\dr{}$, $a_t=a$ cannot be queried (under $\G_t$) if $CI_t(a)<\dr{}$. This will imply that all indicators in $(*)$ are equal to zero and will conclude the proof of \Cref{eq:count decomp query}. To do so, divide into the cases where $a$ is optimal or suboptimal and problems with unique or multiple optimal arms. In all cases, assume in contradiction that $a$ is queried and recall that it implies that $a=a_t=c_t\in\brc*{l_t,u_t}$.
\begin{enumerate}[label=(\roman*)]
    \item \underline{$a$ is an optimal arm and $\dr{}=0$}. \\
    By the regularity of the confidence intervals, $CI_t(a_t)\ge0$. Therefore, the condition $CI_t(a_t)<0=\dr{}$ can never hold.
    
    \item \underline{$a$ is strictly suboptimal, $a_t=a=l_t$ and $\dr{}\le\dr{a}$}. \\
    Let $a^*$ be any optimal arm $a^*$ (which is different than $l_t=a$ since it is suboptimal). Then, the good event implies that $UCB_t(a^*)\ge\rOpt$, and thus
    \begin{align*}
        UCB_t(u_t) = \max_{a'\ne l_t}UCB_t(a') \ge UCB_t(a^*)\ge \rOpt\enspace.
    \end{align*}
    On the other hand, the good event also implies that $LCB_t(l_t)\le \rwd{l_t}=\rwd{a}$, and by definition of $l_t$, it holds that $LCB_t(u_t)\le LCB_t(l_t)\le \rwd{a}$. Combining both inequalities, we get that
    \begin{align*}
        CI_t(u_t) = UCB_t(u_t) - LCB_t(u_t) \ge \rOpt - \rwd{a}=\dr{a}\enspace.
    \end{align*}
    However, since $a_t=l_t$ was played and queried, by the definition of $c_t$, it must hold that $CI_t(l_t)\ge CI_t(u_t)\ge\dr{a}$, in contradiction to the fact that $CI_t(l_t)=CI_t(a)<\dr{}\le \dr{a}$.
    
    \item \underline{$a$ is strictly suboptimal, $a_t=a=u_t$ and $\dr{}\le\dr{a}$: multiple optimal arms}. \\
    Since there are at least two optimal arm, there exists at least one optimal arm $a^*$ such that $l_t\ne a^*$. Specifically, under the good event, we have that $UCB_t(a^*)\ge \rOpt$, and then
    \begin{align*}
        UCB_t(u_t) = \max_{a'\ne l_t} UCB_t(a') \ge UCB_t(a^*) = \rOpt\enspace.
    \end{align*}
    Moreover, under $\G_t$, we have that $LCB_t(u_t)\le \rwd{u_t}$, and combining both leads to
    \begin{align*}
        CI_t(u_t) = UCB_t(u_t) - LCB_t(u_t) \ge \rOpt - \rwd{u_t} = \dr{u_t}\enspace,
    \end{align*}
    in contradiction to the fact that $u_t=a$ and $CI_t(a)<\dr{}\le\dr{a}$. 
    
    \item \underline{$a$ is strictly suboptimal, $a_t=a=u_t$ and $\dr{}\le\dr{a}/2$: unique optimal arm $a^*$}. \\
    Since $a_t=c_t=u_t$, it holds that  $CI_t(u_t)<\dr{}\le \frac{\dr{u_t}}{2}$. In particular, $\G_t$ implies that
    \begin{align*}
        UCB_t(u_t) = LCB_t(u_t) + CI_t(u_t) \le \rwd{u_t}+CI_t(u_t) < \rwd{u_t} + \frac{\dr{u_t}}{2} = \rOpt-\frac{\dr{u_t}}{2}\enspace.
    \end{align*}
    On the other hand, under $\G_t$, it holds that $UCB_t(a^*)\ge \rOpt>UCB_t(u_t)$. Then, the scenario of $a=u_t$ can only happen if $l_t=a^*$ (as $u_t$ maximizes the UCB only on actions different than $l_t$), and we get that
    \begin{align*}
        LCB_t(l_t) 
        &= UCB_t(l_t)-CI_t(l_t) 
        = UCB_t(a^*) - CI_t(l_t)
        \ge \rOpt - CI_t(l_t)\enspace.
    \end{align*}
    Finally, since $a_t=c_t=u_t$, and by the definition of $c_t$, it holds that $CI_t(l_t) \le CI_t(u_t)<\frac{\dr{u_t}}{2}$, and we have that
    \begin{align*}
        LCB_t(l_t) 
        > \rOpt -\frac{\dr{u_t}}{2}
        > UCB_t(u_t)\enspace,
    \end{align*}
    in contradiction to the querying rule.
    
    \item \underline{$a=a^*$ is a unique optimal arm and $\dr{}\le\dr{\min}/2$}. \\
    In this case, by the good event and the requirement that $CI_t(a)=CI_t(a^*)<\dr{}$,
    \begin{align*}
        LCB_t(a^*) = UCB_t(a^*) - CI_t(a^*) \ge \rOpt - CI_t(a^*) >\rOpt - \dr{}\ge \rOpt - \frac{\dr{\min}}{2}\enspace.
    \end{align*}
    Specifically, $LCB_t(a^*)>\rwd{a}\ge LCB_t(a)$ for all $a\ne a^*$ and thus $l_t=a^*$. Furthermore, by the uniqueness of the optimal arm, $u_t\in\argmax_{a\ne a^*} UCB_t(a)$ is a strictly suboptimal arm. Therefore, under $\G_t$, we have that
    \begin{align*}
        UCB_t(u_t) = LCB_t(u_t) + CI_t(u_t) \le \rwd{u_t} +CI_t(u_t) \le \rOpt-\dr{\min} +CI_t(u_t)
    \end{align*}
    where the last inequality is since $u_t$ is a suboptimal arm (and so, $\dr{u_t}\ge\dr{\min}$). Finally, recall that $q_t=1$ only if $UCB_t(u_t)>LCB_t(l_t)$. However, the previous inequalities imply that this condition can only hold if $CI_t(u_t)>\frac{\dr{\min}}{2}>CI_t(a^*)=CI_t(l_t)$, and then $c_t=\argmax_{a\in\brc*{u_t,l_t}} CI_t(a) =u_t$. Thus, if an arm is indeed queried, it is a strictly suboptimal arm, in contradiction to the requirement that $a=a^*$ is queried.
\end{enumerate}
Overall, when there are multiple optimal arms, cases (i)-(iii) all hold for the choice $\dr{}=\dr{a}$. When the optimal arm is unique, cases (ii), (iv) and (v) hold with $\dr{}=\dr{a}/2$ for suboptimal arms and $\dr{}=\dr{\min}/2$ for the optimal arm.

\textbf{Regret and queries bounds.} We now show how the bounds of \eqref{eq:count decomp query} and \eqref{eq: count decomp result} can be used to derive the desired regret and querying bounds. To bound the expected regret, we use \Cref{eq: count decomp result} with $\dr{}=\mu_a$ for all suboptimal arms $a\notin\Acal_*$, where $\mu_a=\dr{a}$ if there are multiple optimal arms and $\mu_a=\dr{a}/2$ if the optimal arm is unique. Doing so while using the failure probabilities of the good event $\G_t$, we get:
\begin{align*}
    \Regret(T) = \sum_{t=1}^T \E\brs*{\dr{a_t}}
    &\le \sum_{t=1}^T \E\brs*{\dr{a_t}\Ind{\G_t}} + \dr{\max}\sum_{t=1}^T \E\brs*{\Ind{\bar{\G}_t}} \\
    & = \sum_{a\notin\Acal_*} \dr{a}\E\brs*{\sum_{t=1}^T \Ind{\G_t,a_t=a}} + \dr{\max}\underbrace{\sum_{t=1}^T \Pr\brc*{\bar{\G}_t}}_{\le C(T)} \\
    & \le \sum_{a\notin\Acal_*} \dr{a}\br*{\bar{N}_a^g(T,\mu_a)+ L_\epsilon(T,\dr{a}) + 1} + \dr{\max}C(T) \\
    & \le \sum_{a\notin\Acal_*} \dr{a}\br*{\bar{N}_a^g(T,\mu_a)+ L_\epsilon(T,\dr{a})} + \br*{\Narms+C(T)}\dr{\max}\enspace.
\end{align*}
Notice that substituting $\mu_a$ leads to the desired bound, whether there is a unique or multiple optimal arms. We similarly use \Cref{eq:count decomp query} to bound the expected number of queries. We still choose the same values of $\mu_a$ for suboptimal arms, but for optimal arms, we let $\mu_a=0=\dr{a}$ for all $a\in\Acal_*$ when there are multiple optimal arms and $\mu_{a^*}=\dr{\min}/2$ for a unique optimal arm. Then, as in the regret bound, we get
\begin{align*}
    \E\brs*{\Bq{T}} 
    = \sum_{t=1}^T \E\brs*{\Ind{q_t=1}} 
    &\le \sum_{t=1}^T \E\brs*{\Ind{\G_t,q_t=1}} +  \sum_{t=1}^T \E\brs*{\Ind{\bar{\G}_t}}\\
    &= \sum_{a\in\brs*{\Narms}}\E\brs*{\sum_{t=1}^T \Ind{\G_t,a_t=a,q_t=1}} +  \underbrace{\sum_{t=1}^T \Pr\brc*{\bar{\G}_t}}_{\le C(T)}\\
    &\le \sum_{a\in\brs*{\Narms}}(\bar{N}_a^g(T,\mu_a)+1) +  C(T) \\
    & =\sum_{a\in\Acal_*}\bar{N}_a^g(T,\mu_a) + \sum_{a\notin\Acal_*}\bar{N}_a^g(T,\mu_a) + \Narms +  C(T)\enspace.
\end{align*}
and one can easily verify that substituting $\mu_a$ leads to both desired bounds.
\end{proof}
\begin{remark}
\label{remark:adaptive adversary}
Notice that the bounds of \eqref{eq:count decomp query} and \eqref{eq: count decomp result} hold even if $\epsilon(t)$ is $\F_t$ predictable, e.g., if the sequence $\brc*{\epsilon(t)}_{t\ge1}$ is chosen by an adaptive adversary. Specifically, when this is the case, all bounds remain the same, except an expectation that should be taken on $\bar{N}_a^g(T,\mu)$ and $L_\epsilon(T,\dr{})$. Similarly, an expectation can be taken over any stochastic source that affects $\epsilon(t)$ and is independent of $\F_t$.
\end{remark}

%%%%%%%%%%%%%%%%%%%%%%%%%%%%%%%%%%%%%%%%%%
%%%%%%%%%%%%%%%%%%%%%%%%%%%%%%%%%%%%%%%%%%

\clearpage
\subsection{Problem-Independent Upper Bound}
\label{appendix: problem independent proof}
In this section, we generalize \Cref{prop: problem independent upper} to the setting presented in \Cref{appendix: upper bounds}.
\begin{proposition}
\label{prop: problem independent upper general}
Under the notations of \Cref{theorem:upper bound general}, let $T\ge1$ and assume that $\epsilon(t)\ge0$ for all $t\in\brs*{T}$ and that \Cref{alg: BuFALU} is applied with regular confidence intervals. Also, assume that there exist a function $M(t,\dr{0})$ such that for all $\dr{0}>0$ and all arms $a\in\brs*{\Narms}$ with $\dr{a}> \dr{0}$, it holds that $\dr{a}\bar{N}_a^{g}(T,\dr{a}) \le M(T,\dr{0})$. Then, 
\begin{align*}
    \Regret(T)\le \inf_{\dr{0}>0}\brc*{2\Narms M\br*{T,\frac{\dr{0}}{2}} + \dr{0}T} + \sum_{t=1}^T\epsilon(t) + \br*{\Narms+C(T)}\dr{\max}\enspace.
\end{align*}
\end{proposition}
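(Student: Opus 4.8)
The plan is to re-run the proof of \Cref{theorem:upper bound general}, regrouping its per-round contributions so that arms with a small gap are handled crudely while arms with a large gap are controlled through the assumed bound $M$. Exactly as there, I would first split on the good event $\G_t$ and use $\sum_{t=1}^T \Pr\brc*{\bar{\G}_t} \le C(T)$ to obtain
\[
    \Regret(T) = \sum_{t=1}^T \E\brs*{\dr{a_t}} \le \sum_{t=1}^T \E\brs*{\dr{a_t}\Ind{\G_t}} + C(T)\dr{\max}.
\]
Then I would split each surviving round according to whether a reward was queried, $\Ind{\G_t} = \Ind{\G_t, q_t=0} + \Ind{\G_t, q_t=1}$. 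For the non-querying rounds, part (ii) of \Cref{lemma: gap-query} gives $\dr{a_t} = \dr{l_t} \le \epsilon(t)$ whenever $\G_t$ holds and $q_t=0$, hence $\sum_{t=1}^T \E\brs*{\dr{a_t}\Ind{\G_t, q_t=0}} \le \sum_{t=1}^T \epsilon(t)$. The point here is that a non-querying round plays the single arm $l_t$, so this term is $\sum_t \epsilon(t)$ and not the $\Narms\sum_t \epsilon(t)$ that a naive reuse of the $\dr{a}L_\epsilon(T,\dr{a})$ summands of \Cref{theorem:upper bound general} would yield.

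It then remains to bound $\sum_{t=1}^T \E\brs*{\dr{a_t}\Ind{\G_t, q_t=1}} = \sum_{a\notin\Acal_*}\dr{a}\,\E\brs*{\sum_{t=1}^T \Ind{\G_t, a_t=a, q_t=1}}$. Here I would fix $\dr{0}>0$ and split the outer sum into arms with $\dr{a}\le\dr{0}$ and arms with $\dr{a}>\dr{0}$. For the first group, using $\dr{a}\le\dr{0}$ and $\sum_{a\notin\Acal_*}\nq{T}{a} \le \Bq{T} \le T$ bounds its contribution by $\dr{0}T$. For the second group, I would invoke the count decomposition \eqref{eq:count decomp query} from the proof of \Cref{theorem:upper bound general}, which gives $\sum_t \Ind{\G_t,a_t=a,q_t=1} \le \bar{N}_a^g(T,\mu_a) + 1$, with $\mu_a = \dr{a}$ if there are multiple optimal arms and $\mu_a = \dr{a}/2$ if the optimal arm is unique; in either case $\mu_a \ge \dr{a}/2 > \dr{0}/2$, so, using that $\bar{N}_a^g(T,\cdot)$ is non-increasing, the hypothesis applied with $\dr{0}/2$ in place of $\dr{0}$ yields $\dr{a}\bar{N}_a^g(T,\mu_a) \le \dr{a}\bar{N}_a^g\br*{T,\tfrac{\dr{a}}{2}} \le 2M\br*{T,\tfrac{\dr{0}}{2}}$. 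Summing over the at most $\Narms$ arms in this group then gives a contribution of at most $2\Narms M\br*{T,\tfrac{\dr{0}}{2}} + \Narms\dr{\max}$.

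Adding the four pieces — $C(T)\dr{\max}$ from the failure event, $\sum_t \epsilon(t)$ from the non-querying rounds, $\dr{0}T$ from the small-gap querying rounds, and $2\Narms M(T,\dr{0}/2) + \Narms\dr{\max}$ from the large-gap querying rounds — would give, for every $\dr{0}>0$,
\[
    \Regret(T) \le 2\Narms M\br*{T,\tfrac{\dr{0}}{2}} + \dr{0}T + \sum_{t=1}^T \epsilon(t) + \br*{\Narms+C(T)}\dr{\max},
\]
and taking the infimum over $\dr{0}>0$ would complete the argument. The step I expect to need the most care is the bookkeeping of the non-querying rounds — one must argue per round through \Cref{lemma: gap-query} rather than reuse the instance-dependent $\dr{a}L_\epsilon(T,\dr{a})$ terms, which cost a factor $\Narms$ — whereas the rest is a straightforward re-summation of the count decomposition already established in \Cref{theorem:upper bound general}, together with the trivial estimate $\Bq{T}\le T$.
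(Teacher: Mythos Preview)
Your proposal is correct and follows essentially the same route as the paper's proof: split off the bad event, handle non-querying rounds via \Cref{lemma: gap-query}(ii) to get $\sum_t\epsilon(t)$, and on querying rounds split by $\dr{a}\lessgtr\dr{0}$, bounding the large-gap arms through \eqref{eq:count decomp query} and the $M$-hypothesis (treating the two cases $\mu_a=\dr{a}$ and $\mu_a=\dr{a}/2$) and the small-gap arms by $\dr{0}T$. The only cosmetic difference is that you split on $q_t$ before splitting on the gap (using $\Bq{T}\le T$ for the small-gap piece), whereas the paper splits on the gap first and then on $q_t$ only for the large-gap arms; the resulting four pieces and final bound are identical.
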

See that for Hoeffding-based confidence intervals, we have that $\bar{N}_a^g(T,\mu) \le \frac{6\ln T}{\mu^2}$ (by \Cref{lemma: hoeffding regular}), so for any $\dr{a}>\dr{0}$, it holds that $\dr{a}\bar{N}_a^g(T,\dr{a}) = \frac{6\ln T}{\dr{a}} \le \frac{6\ln T}{\dr{0}} \triangleq M\br*{T,\dr{0}}$. Then, the infimum in the bound is achieved for $\dr{0} = \sqrt{\frac{24\Narms\ln T}{T}}$, which leads to the bound in the main paper (\Cref{prop: problem independent upper}). Alternatively, when using Bernstein-type confidence bounds, we can bound the variance of arms by $V_a\le\frac{1}{4}$ and obtain a similar bound (by \Cref{lemma: bernstein regular}): 
\begin{align*}
    \dr{a}\bar{N}_a^g(T,\dr{a}) \le \frac{24V_a\ln T}{\dr{a}}+52\ln T +\dr{a} \le \frac{6\ln T}{\dr{0}}+52\ln T +\dr{\max} \triangleq M\br*{T,\dr{0}}\enspace.
\end{align*}
On the other hand, if the variance of all suboptimal arms is upper bounded by $V_a\le V$, this can be used to achieve an improved variance-dependent regret bound.
\begin{proof}
As in the problem-dependent bound of \Cref{theorem:upper bound general}, we decompose the regret by:
\begin{align*}
    \Regret(T) = \sum_{t=1}^T \E\brs*{\dr{a_t}}
    &\le \sum_{t=1}^T \E\brs*{\dr{a_t}\Ind{\G_t}} + \dr{\max}\sum_{t=1}^T \E\brs*{\Ind{\bar{\G}_t}} \\
    & \le \sum_{a\notin\Acal_*} \dr{a}\E\brs*{\sum_{t=1}^T \Ind{\G_t,a_t=a}} + \dr{\max}C(T)\enspace.
\end{align*}
Next, for any $\dr{0}>0$, we can bound the regret by 
\begin{align}
    \Regret(T) 
    &\le \underbrace{\sum_{a:\dr{a}>\dr{0}} \dr{a}\E\brs*{\sum_{t=1}^T \Ind{\G_t,a_t=a}}}_{(i)} + \underbrace{\sum_{a:\dr{a}\in(0,\dr{0}]} \dr{a}\E\brs*{\sum_{t=1}^T \Ind{\G_t,a_t=a}}}_{(ii)}\nonumber\\
    &\quad +\dr{\max}C(T) \enspace. \label{eq:decomp problem independent}
\end{align}
To bound term $(i)$ we further decompose it to
\begin{align*}
    (i)
    & = \E\brs*{\sum_{a:\dr{a}>\dr{0}} \dr{a}\sum_{t=1}^T \Ind{\G_t,a_t=a,q_t=1}} + \E\brs*{\sum_{a:\dr{a}>\dr{0}}\sum_{t=1}^T  \dr{a}\Ind{\G_t,a_t=a,q_t=0}} \\
    & \le \E\brs*{\underbrace{\sum_{a:\dr{a}>\dr{0}} \dr{a}\sum_{t=1}^T \Ind{\G_t,a_t=a,q_t=1}}_{(*)}} + \E\brs*{\underbrace{\sum_{t=1}^T  \dr{a_t}\Ind{\G_t,q_t=0}}_{(**)}}
\end{align*}

For term $(*)$, if there are multiple optimal arms, then by \Cref{eq:count decomp query} with $\dr{}=\dr{a}$,
\begin{align*}
    \sum_{a:\dr{a}>\dr{0}} \dr{a}\sum_{t=1}^T \Ind{\G_t,a_t=a,q_t=1}
    \le \sum_{a:\dr{a}>\dr{0}} \dr{a}\br*{\bar{N}_a^{g}(T,\dr{a})+1}
    \le \Narms M\br*{T,\frac{\dr{0}}{2}} +\Narms \dr{\max}\enspace,
\end{align*}
where the last inequality is since the gaps of all arms in the summation are larger than $\dr{0}$, and specifically larger than $\dr{0}/2$, and by bounding the number of arms with gaps larger than $\dr{0}$ by $\Narms$. Alternatively, if there is a unique optimal arm, we can bound using \Cref{eq:count decomp query} with $\dr{}=\dr{a}/2$:
\begin{align*}
    \sum_{a:\dr{a}>\dr{0}} \dr{a}\sum_{t=1}^T \Ind{\G_t,a_t=a,q_t=1}
    &\le \sum_{a:\dr{a}>\dr{0}} \dr{a}\br*{\bar{N}_a^{g}\br*{T,\frac{\dr{a}}{2}}+1}\\
    &\le 2\sum_{a:\dr{a}>\dr{0}} \frac{\dr{a}}{2}\bar{N}_a^{g}\br*{T,\frac{\dr{a}}{2}} + \Narms\dr{\max}\\
    &\le 2\Narms M\br*{T,\frac{\dr{0}}{2}} +\Narms \dr{\max}\enspace,
\end{align*}
where the last inequality is again by the definition of $M\br*{T,\frac{\dr{0}}{2}}$. Combining both cases, we get 
\begin{align*}
    (*) \le 2 \Narms M\br*{T,\frac{\dr{0}}{2}} +\Narms \dr{\max}\enspace.
\end{align*}
The remaining summation $(**)$ can be bounded using the fact that if $q_t=0$, then $\dr{a_t}\le\epsilon(t)$ (by \Cref{lemma: gap-query}), and thus
\begin{align*}
    (**)=\sum_{t=1}^T \dr{a_t}\Ind{\G_t,q_t=0}
    &\le \sum_{t=1}^T \dr{a_t}\Ind{\dr{a_t}\le\epsilon(t)} 
    \le \sum_{t=1}^T \epsilon(t)\Ind{\dr{a_t}\le\epsilon(t)}
     \le \sum_{t=1}^T \epsilon(t)\enspace,
\end{align*}
which lead to the bound of 
\begin{align*}
    (i)\le  2\Narms M\br*{T,\frac{\dr{0}}{2}} + \sum_{t=1}^T \epsilon(t) + \Narms \dr{\max}
\end{align*}
Finally, we bound term $(ii)$ as follows:
\begin{align*}
    \sum_{a:\dr{a}\in(0,\dr{0}]} \dr{a}\E\brs*{\sum_{t=1}^T \Ind{\G_t,a_t=a}} 
    &\le \dr{0} \E\brs*{\sum_{a=1}^\Narms\sum_{t=1}^T \Ind{\G_t,a_t=a}} 
    = \dr{0}\E\brs*{\sum_{t=1}^T \Ind{\G_t}} 
    \le \dr{0}T \enspace.
\end{align*}
Substituting $(i)$ and $(ii)$ into \Cref{eq:decomp problem independent} and taking the infimum over all possible choices of $\dr{0}>0$ leads to the desired bound.
\end{proof}
\clearpage

\subsection{Regularity Proofs}
\label{appendix:regularity proofs}
\subsubsection{Hoeffding-Based Confidence Intervals}
\label{appendix:regularity hoeffding}
Hoeffding-based confidence intervals are probably the most commonly used confidence intervals for bounded (and subgaussian) rewards. Specifically, for rewards bounded in $[0,1]$, they are defined by 
\begin{align*}
    UCB_t(a) = \rEst{t-1}{a} + \sqrt{\frac{3\ln t}{2\nq{t-1}{a}}}
    \qquad \textrm{and}\qquad 
    LCB_t(a) = \rEst{t-1}{a} - \sqrt{\frac{3\ln t}{2\nq{t-1}{a}}}\enspace,
\end{align*}
and if $\nq{t-1}{a}=0$, we define $UCB_t(a)=+\infty$ and $LCB_t(a)=-\infty$. In the following, we prove that such confidence intervals are regular:
\begin{lemma}
\label{lemma: hoeffding regular}
Assume that the rewards are bounded in $\brs*{0,1}$. Then, for any $T\ge1$, Hoeffding-based confidence intervals are regular w.r.t. the events $\G_t=\brc{\forall a\in\brs*{\Narms}: \rwd{a}\in\brs*{LCB_t(a),UCB_t(a)}}$ and the functions
\begin{align*}
    C(t) = 2\Narms, \qquad
     N_a^g(t,\mu) = N^{as}(t,\mu) = \min\brc*{\frac{6\ln t}{\mu^2},t-1} \enspace,
\end{align*}
Specifically, $\bar{N}_a^g(T,0) = \bar{N}^{as}(T)$. Moreover, for any $\mu>0$, it holds that $\bar{N}_a^g(T,\mu)\le \frac{6\ln T}{\mu^2}$. Finally, if $B(t)$ is a positive nondecreasing sequence and $\epsilon(t)=\sqrt{\frac{6\ln t}{B(t)}}$, or, alternatively, $\epsilon(t)$ is a nonnegative nonincreasing sequence, then $\bar{N}_a^g(T,\mu) = \min\brc*{\frac{6\ln T}{\max\brc*{\mu^2,\epsilon^2(t)}},T-1}$.
\end{lemma}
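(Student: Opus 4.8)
The plan is to check, in order, the five conditions of \Cref{def:confidence regularity} for the stated $\G_t$, $C(t)$, $N_a^g$ and $N^{as}$, and then to read off the three ``moreover'' statements from the explicit width $CI_t(a)=2\sqrt{3\ln t/(2\nq{t-1}{a})}=\sqrt{6\ln t/\nq{t-1}{a}}$ (with $CI_t(a)=+\infty$ when $\nq{t-1}{a}=0$). Conditions~1 and~2 are immediate: $CI_t(a)\ge0$ always, and $\G_t$ is \emph{defined} to be $\{\forall a:\rwd{a}\in\brs*{LCB_t(a),UCB_t(a)}\}$, which lies in $\F_{t-1}$ since $LCB_t,UCB_t$ are functions of $\nq{t-1}{\cdot}$ and $\rEst{t-1}{\cdot}$ only. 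Conditions~4 and~5 coincide, because $CI_t(a)$ is a deterministic function of the count $\nq{t-1}{a}$ alone, so no good event is needed: if $6\ln t/\mu^2<\nq{t-1}{a}\le t-1$ then $CI_t(a)=\sqrt{6\ln t/\nq{t-1}{a}}<\mu$, and intersecting the threshold $6\ln t/\mu^2$ with $t-1$ (the definition requires $N_a^g(t,\mu)\le t-1$, with value $t-1$ at $\mu=0$) yields exactly $N_a^g(t,\mu)=N^{as}(t,\mu)=\min\{6\ln t/\mu^2,\,t-1\}$; the requirement $N_a^g\le N^{as}$ then holds with equality.

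The only genuinely probabilistic step is Condition~3. Writing $X_{a,1},X_{a,2},\dots$ for the i.i.d.\ $\nu_a$-distributed rewards obtained on successive queries of arm $a$, we have $\rEst{t-1}{a}=\frac1{\nq{t-1}{a}}\sum_{j\le\nq{t-1}{a}}X_{a,j}$; since a $0$-count arm carries an infinite interval, the bad event $\bar{\G}_t$ is contained in the union over $a\in\brs*{\Narms}$ and over every possible value $s\in\{1,\dots,t-1\}$ of the count of the event $\abs*{\tfrac1s\sum_{j\le s}X_{a,j}-\rwd{a}}\ge\sqrt{3\ln t/(2s)}$. Hoeffding's inequality for $[0,1]$ rewards bounds each such event by $2e^{-3\ln t}=2t^{-3}$, whence $\Pr\{\bar{\G}_t\}\le 2\Narms(t-1)t^{-3}$, and summing over $t$ (the $t=1$ term vanishes, all intervals being infinite) gives $\sum_{t=1}^{T}\Pr\{\bar{\G}_t\}\le 2\Narms\sum_{t\ge1}(t-1)t^{-3}\le 2\Narms$, i.e.\ $C(T)=2\Narms$.

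It remains to verify the ``moreover'' claims. Since $N_a^g\equiv N^{as}$ and $\max\{0,\epsilon(t)\}=\epsilon(t)$, we get $\bar{N}_a^g(T,0)=\max_{t\in\brs*{T}}N_a^g(t,\epsilon(t))=\bar{N}^{as}(T)$; and for $\mu>0$, $\bar{N}_a^g(T,\mu)=\max_{t\in\brs*{T}}\min\{6\ln t/\max\{\mu^2,\epsilon^2(t)\},\,t-1\}\le\max_{t\in\brs*{T}}6\ln t/\mu^2=6\ln T/\mu^2$, using $0\le\ln t\le\ln T$. For the two named choices of $\epsilon$, the key is a monotonicity argument collapsing $\max_{t\in\brs*{T}}$ to evaluation at $t=T$: when $\epsilon$ is nonincreasing, $t\mapsto\max\{\mu^2,\epsilon^2(t)\}$ is nonincreasing, so $t\mapsto 6\ln t/\max\{\mu^2,\epsilon^2(t)\}$ is nondecreasing; when $\epsilon(t)=\sqrt{6\ln t/B(t)}$ with $B$ positive nondecreasing, that same quantity equals $\min\{6\ln t/\mu^2,\,B(t)\}$, again nondecreasing. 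In either case $t\mapsto\min\{6\ln t/\max\{\mu^2,\epsilon^2(t)\},\,t-1\}$ is a minimum of two nondecreasing functions, so $\bar{N}_a^g(T,\mu)$ is attained at $t=T$, equal to $\min\{6\ln T/\max\{\mu^2,\epsilon^2(T)\},\,T-1\}$ (reading $\epsilon^2(T)$ for the $\epsilon^2(t)$ in the statement, consistently with the definition of $\bar N$ in the main text). There is no real obstacle here: the proof is bookkeeping, and the only two points needing mild care are checking that the per-count union bound in Condition~3 is genuinely sufficient (it is, since $CI_t(a)$ depends only on $\nq{t-1}{a}\in\{0,\dots,t-1\}$, so no martingale/peeling device is required) and spotting the monotonicity that drives the final display.
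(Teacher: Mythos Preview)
Your proposal is correct and follows essentially the same approach as the paper: verify the five regularity conditions in turn using the explicit width $CI_t(a)=\sqrt{6\ln t/\nq{t-1}{a}}$, handle Condition~3 via a union bound over arms and over the possible values of the count together with Hoeffding's inequality, and establish the final displays by showing that $t\mapsto N_a^g(t,\max\{\mu,\epsilon(t)\})$ is nondecreasing so that the max over $t\in[T]$ collapses to $t=T$. Your bookkeeping in Condition~3 (keeping the factor $2(t-1)t^{-3}$ and summing it directly to at most $1$) is in fact slightly tighter than the paper's, and your observation that $\epsilon^2(t)$ in the final display should read $\epsilon^2(T)$ is correct.
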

\begin{proof}
Notice that $CI_t(a)=\sqrt{\frac{6\ln t}{\nq{t-1}{a}}}$. We now check the conditions by their order:
\begin{enumerate}
    \item The condition that $CI_t(a)\ge0$ trivially holds for any $\nq{t-1}{a}\ge1$ and holds by definition when $\nq{t-1}{a}=0$ (and $CI_t(a)=\infty$).
    \item By definition, for any $t\ge1$, if $\G_t$ holds, then $\rwd{a}\in\brs*{LCB_t(a),UCB_t(a)}$ for all $a\in\brs*{\Narms}$, as required.
    
    \item We now bound $\Pr\brc*{\bar{\G}_t}$. For any $a\in\brs*{\Narms}$, let $X_{a}(1),\dots X_a(T)$ be i.i.d. random variables of the same distribution as arm $a$, and we let the observed reward at the $\nq{t}{a}^{th}$ time the arm was played be $R_t=X_a(\nq{t}{a})$. Then, we can write:
    \begin{align*}
        \Pr\brc*{\bar{\G}_t}
        &= \Pr\brc*{\exists a\in\brs*{\Narms}: \rwd{a}\notin\brs*{LCB_t(a),UCB_t(a)}}\\
        &= \Pr\brc*{\exists a\in\brs*{\Narms}: \abs*{\rEst{t-1}{a}-\rwd{a}}>\sqrt{\frac{3\ln t}{2\nq{t-1}{a}}}}\\
        & \overset{(1)}{\le} \sum_{a=1}^\Narms \Pr\brc*{\abs*{\rEst{t-1}{a}-\rwd{a}}>\sqrt{\frac{3\ln t}{2\nq{t-1}{a}}}} \\
        & \overset{(2)}{\le} \sum_{a=1}^\Narms \sum_{n=1}^{t-1} \Pr\brc*{\abs*{\rEst{t-1}{a}-\rwd{a}}>\sqrt{\frac{3\ln t}{2\nq{t-1}{a}}},\nq{t-1}{a}=n}\\
        & \overset{(3)}{=} \sum_{a=1}^\Narms \sum_{n=1}^{t-1} \Pr\brc*{\abs*{\frac{1}{n}\sum_{k=1}^n X_a(k)-\rwd{a}}>\sqrt{\frac{3\ln t}{2n}},\nq{t-1}{a}=n}\\
        & \le \sum_{a=1}^\Narms \sum_{n=1}^{t-1} \Pr\brc*{\abs*{\frac{1}{n}\sum_{k=1}^n X_a(k)-\rwd{a}}>\sqrt{\frac{3\ln t}{2n}}}\\
        & \overset{(4)}{\le} \sum_{a=1}^\Narms \sum_{n=1}^{t-1} \frac{1}{t^3}\\
        & \le \frac{\Narms}{t^2}\enspace.
    \end{align*}
    Relations $(1)$ and $(2)$ are due to the union bound over actions $a\in\brs*{\Narms}$ and  $\nq{t-1}{a}\in\brc*{0,\dots,t-1}$, respectively. Moreover, notice that the relevant event cannot occur when $\nq{t-1}{a}=0$, so we only treat values larger than $1$. In $(3)$ we substituted $\nq{t-1}{a}=n$ and wrote the empirical means using $X_{a}(t)$. Finally, $(4)$ relies on Hoeffding's inequality for i.i.d. bounded random variables supported by $[0,1]$. Summing over all time indices leads to the desired result:
    \begin{align*}
        \sum_{t=1}^T \Pr\brc*{\bar{\G}_t}
        \le \sum_{t=1}^T \frac{\Narms}{t^2}
        \le \sum_{t=1}^\infty \frac{\Narms}{t^2}
        \le 2\Narms
        =C(T)\enspace.
    \end{align*}
    \item If $N_a^g(t,\mu)=t-1$, then the event that $N_a^g(t,\mu)<\nq{t-1}{a}\le t-1$ can never happen, so assume w.l.o.g. that $N_a^g(t,\mu)<t-1$ and $N_a^g(t,\mu)=\frac{6\ln t}{\mu^2}$. In particular, $\mu>0$ and $t>1$, and it can be easily verified that whether the good event holds or not, if $\nq{t-1}{a}>N_a^g(t,\mu)= \frac{6\ln t}{\mu^2}$, then 
    \begin{align*}
        CI_t(a) =\sqrt{\frac{6\ln t}{\nq{t-1}{a}}}
        < \sqrt{\frac{6\ln t}{N_a^g(t,\mu)}} = \mu\enspace.
    \end{align*}
    \item The proof follows exactly as the previous part, as its result did not depend on the good event.
\end{enumerate}
We now prove the claims at the end of the lemma. First, for any nonnegative sequence $\brc*{\epsilon(t)}_{t\ge1}$, it holds that
\begin{align*}
    \bar{N}_a^g(T,\mu) 
    &= \max_{t\in\brs*{T}} N_a^g(t,\max\brc*{\mu,\epsilon(t)}) \\
    &= \max_{t\in\brs*{T}} \brc*{\min\brc*{\frac{6\ln t}{\max\brc*{\mu^2,\epsilon^2(t)}},t-1}}\\
    &\le \max_{t\in\brs*{T}} \brc*{\min\brc*{\frac{6\ln t}{\mu^2},t-1}} \\
    & \le \min\brc*{\frac{6\ln T}{\mu^2},T-1}
\end{align*}
where the last inequality is since the logarithmic and constant functions are nondecreasing in $t$, and the minimum of nondecreasing functions is nondecreasing (\Cref{claim:minimum increasing}). Finally, we write
\begin{align*}
    N_a^g(t,\max\brc*{\mu,\epsilon(t)})
     = \min\brc*{\frac{6\ln t}{\mu^2},\frac{6\ln t}{\epsilon^2(t)},t-1}\enspace.
\end{align*}
Moreover, if either $\epsilon(t)=\sqrt{\frac{6\ln t}{B(t)}}$, for nondecreasing positive $B(t)$, or $\epsilon(t)\ge0$ is nonincreasing, then $\frac{6\ln t}{\epsilon^2(t)}$ is nondecreasing in $t$. Thus, $ N_a^g(t,\max\brc*{\mu,\epsilon(t)})$ is a minimum of nondecreasing functions and is nondecreasing by itself (by \Cref{claim:minimum increasing}). In turn, this implies that
\begin{align*}
    \bar{N}_a^g(T,\mu) 
    &= \max_{t\in\brs*{T}} N_a^g(t,\max\brc*{\mu,\epsilon(t)})
    = N_a^g(T,\max\brc*{\mu,\epsilon(T)})\\
    &= \min\brc*{\frac{6\ln T}{\max\brc*{\mu^2,\epsilon^2(t)}},T-1}\enspace.
\end{align*}
\end{proof}

\clearpage
\subsubsection{Bernstein-Based Confidence Intervals}
\label{appendix:regularity bernstein}
Bernstein-based confidence intervals are confidence intervals that depend on the variance on the reward distributions. Specifically, we apply Empirical-Bernstein bounds \citep{maurer2009empirical}, which depend on the empirical variance of arms and obviate the need to know the true variances of arms. Formally, if all arm distributions are bounded in $[0,1]$ and of variances $\brc*{V_a}_{a\in\brs*{\Narms}}$, we denote the unbiased empirical estimator of the variance by
\begin{align*}
    \hat{V}_t(a) = \frac{\nq{t}{a}}{\nq{t}{a}-1}\br*{\frac{1}{\nq{t}{a}}\sum_{s=1}^t R_t^2\Ind{a_t=a} - \br*{\rEst{t}{a}}^2}\enspace,
\end{align*}
where we define $\hat{V}_t(a)=0$ if $\nq{t}{a}<2$. Then, Bernstein-based confidence intervals are defined by
\begin{align*}
    &UCB_t(a) = \rEst{t-1}{a} + \sqrt{\frac{6\hat{V}_{t-1}(a)\ln t}{\nq{t-1}{a}}} + \frac{7\ln t}{\nq{t-1}{a}-1}\enspace, \\
    &LCB_t(a) = \rEst{t-1}{a} - \sqrt{\frac{6\hat{V}_{t-1}(a)\ln t}{\nq{t-1}{a}}} - \frac{7\ln t}{\nq{t-1}{a}-1},
\end{align*}
where if $\nq{t-1}{a}\le 1$, we define $UCB_t(a)=+\infty$ and $LCB_t(a)=-\infty$ (and in general, we let $\frac{\ln t}{\nq{1}{a}}=\frac{\ln t}{\nq{t}{a}-1}=+\infty$ for $\nq{t}{a}\le1$). We now prove that these confidence intervals are regular:
\begin{lemma}
\label{lemma: bernstein regular}
Assume that  the rewards are bounded in $\brs*{0,1}$ and let $T\ge1$. Then, Bernstein-based confidence interval are regular w.r.t. the events
\begin{align*}
    \G_t=\brc*{\forall a\in\brs*{\Narms}: \rwd{a}\in\brs*{LCB_t(a),UCB_t(a)},\; \abs*{\sqrt{\hat{V}_{t-1}(a)} - \sqrt{V_a}}\le \sqrt{\frac{6\ln t}{\nq{t-1}{a}-1}}}
\end{align*}
and the functions
\begin{align*}
    C(t) = 12\Narms, \qquad
     N_a^g(t,\mu) = \min\brc*{\frac{24V_a\ln t}{\mu^2}+\frac{52\ln t}{\mu}+1,t-1}\enspace.
\end{align*}
For $N^{as}(t,\mu)$, we allow two different options:
\begin{align*}
    &N_1^{as}(t,\mu) = \min\brc*{\frac{6\ln t}{\mu^2}+\frac{28\ln t}{\mu}+1,t-1}, 
    \qquad \textrm{or}\\
    &N_2^{as}(t,\mu) = \min\brc*{\br*{\sqrt{\frac{3\ln t}{2\mu^2}}+\sqrt{\frac{3\ln t}{2\mu^2} + \frac{14\ln t}{\mu}}}^2+1,t-1}
\end{align*}
Specifically, we have that $\bar{N}_a^g(T,\mu)\le N_a^g(T,\mu)$. Finally, we suggest two choices for $\epsilon(t)$:
\begin{itemize}
    \item For nonincreasing sequence $\epsilon(t)\ge0$, we have that $\bar{N}_a^g(T,\mu)= N_a^g(T,\max\brc*{\mu,\epsilon(T)})$ and $\bar{N}_i^{as}(T)= N_i^{as}(T,\epsilon(T))$ for $i=1,2$.
    \item If $B(t)$ is a nondecreasing sequence such that $B(1)>1$ and $\epsilon(t) = \sqrt{\frac{6\ln t}{B(t)-1}}+14\frac{\ln t}{B(t)-1}$, we work with $N_2^{as}(t,\epsilon(t))=\min\brc*{B(t),t-1}$, and thus, $\bar{N}_2^{as}(T)=\min\brc*{B(T),T-1}$. Moreover, for this choice, we have that
    \begin{align*}
        \bar{N}_a^g(T,\mu)\le \min\brc*{N_a^g(T,\mu),4V_a(B(T)-1) + \min\brc*{22\sqrt{(B(T)-1)\ln T}, 4(B(T)-1)} + 1}.
    \end{align*}
\end{itemize}
\end{lemma}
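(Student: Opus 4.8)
The plan is to verify, one by one, the five conditions of \Cref{def:confidence regularity} for the stated events $\G_t$ and functions, following the same template as the proof of \Cref{lemma: hoeffding regular} but substituting the empirical-Bernstein toolbox for Hoeffding's inequality. Writing $n=\nq{t-1}{a}$, the width is $CI_t(a)=2\sqrt{6\hat{V}_{t-1}(a)\ln t/n}+14\ln t/(n-1)$, which is $\ge 0$ for $n\ge2$ and $+\infty$ for $n\le1$, so Condition~1 is immediate; Condition~2 holds because the inclusion $\rwd{a}\in\brs*{LCB_t(a),UCB_t(a)}$ is by construction part of the definition of $\G_t$.

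For Condition~3 I would bound $\Pr\brc*{\bar{\G}_t}$ by a union bound over the $\Narms$ arms and over the possible values $n\in\brc*{1,\dots,t-1}$ of $\nq{t-1}{a}$, using, exactly as in \Cref{lemma: hoeffding regular}, the i.i.d.\ coupling $R_t=X_a(\nq{t}{a})$ so that on each slice $\brc*{\nq{t-1}{a}=n}$ both the deviation of the empirical mean and of the empirical standard deviation become fixed-sample-size events. The empirical-Bernstein inequality of \citep{maurer2009empirical} controls $\abs*{\rEst{t-1}{a}-\rwd{a}}$ and the companion standard-deviation bound of \citep{maurer2009empirical} controls $\abs*{\sqrt{\hat{V}_{t-1}(a)}-\sqrt{V_a}}$; with the $6\ln t$ confidence level each of the constantly many tails is at most $t^{-3}$, so summing over $n$ gives $\Pr\brc*{\bar{\G}_t}\le 6\Narms/t^2$, and $\sum_{t\ge1}t^{-2}\le 2$ yields $C(T)=12\Narms$.

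Conditions~4 and~5 are the deterministic sample-complexity statements. For Condition~4 I would use the variance part of $\G_t$, $\sqrt{\hat{V}_{t-1}(a)}\le\sqrt{V_a}+\sqrt{6\ln t/(n-1)}$, to write $\sqrt{6\hat{V}_{t-1}(a)\ln t/n}\le\sqrt{6\ln t/n}\br*{\sqrt{V_a}+\sqrt{6\ln t/(n-1)}}\le\sqrt{6V_a\ln t/(n-1)}+6\ln t/(n-1)$, so that on $\G_t$ one has $CI_t(a)\le 2\sqrt{6V_a\ln t/(n-1)}+26\ln t/(n-1)$; this is decreasing in $n$, and an AM-GM argument (of the form $\sqrt s\le(1+s)/2$) shows it drops below $\mu$ once $n-1\ge 24V_a\ln t/\mu^2+52\ln t/\mu$, i.e.\ once $n>N_a^g(t,\mu)$, with the $\min$ against $t-1$ handled exactly as in the Hoeffding case (the event $N_a^g(t,\mu)<\nq{t-1}{a}\le t-1$ is vacuous when $N_a^g(t,\mu)=t-1$). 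Condition~5 runs the same computation without $\G_t$, using only the deterministic bound $\hat{V}_{t-1}(a)\le\nicefrac{1}{4}$ on the unbiased sample variance of $\brs*{0,1}$-valued data; this gives $CI_t(a)\le\sqrt{6\ln t/n}+14\ln t/(n-1)$, after which the same AM-GM argument produces $N_1^{as}(t,\mu)$, while solving the quadratic inequality in $1/\sqrt{n-1}$ exactly (rationalizing the positive root) produces the nested-square-root form $N_2^{as}(t,\mu)$; retaining $\min\brc*{N_a^g,N^{as}}$ as in \Cref{def:confidence regularity} then gives $\bar{N}_a^g(T,\mu)\le N_a^g(T,\mu)$.

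It remains to check the concluding claims. As in \Cref{lemma: hoeffding regular}, when $\epsilon(t)$ is nonincreasing the maps $t\mapsto N_a^g(t,\max\brc*{\mu,\epsilon(t)})$ and $t\mapsto N_i^{as}(t,\epsilon(t))$ are minima of nondecreasing functions of $t$ (logarithms, powers of $\ln t/\epsilon(t)$, and $t-1$), hence nondecreasing by \Cref{claim:minimum increasing}, so their maxima over $t\in\brs*{T}$ are attained at $t=T$. For the budget choice $\epsilon(t)=\sqrt{6\ln t/(B(t)-1)}+14\ln t/(B(t)-1)$, substituting $\mu=\epsilon(t)$ and $n-1=B(t)-1$ into the $\hat{V}_{t-1}(a)\le\nicefrac{1}{4}$ bound gives $CI_t(a)\le\epsilon(t)$ at $n=B(t)$ and strict inequality for $n>B(t)$, so $N_2^{as}(t,\epsilon(t))=\min\brc*{B(t),t-1}$ is admissible, and its nondecreasingness gives $\bar{N}_2^{as}(T)=\min\brc*{B(T),T-1}$; writing $\epsilon(t)=\nu_1(t)+\nu_2(t)$ with $\nu_1(t)=\sqrt{6\ln t/(B(t)-1)}$ and $\nu_2(t)=14\ln t/(B(t)-1)$ then yields $24V_a\ln t/\epsilon(t)^2\le 24V_a\ln t/\nu_1(t)^2=4V_a(B(t)-1)$ and $52\ln t/\epsilon(t)\le\min\brc*{52\ln t/\nu_1(t),52\ln t/\nu_2(t)}\le\min\brc*{22\sqrt{(B(t)-1)\ln T},4(B(t)-1)}$, which together with $\bar{N}_a^g(T,\mu)\le N_a^g(T,\mu)$ and the monotonicity of $B$ gives the stated two-term minimum. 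The main obstacle I anticipate is Condition~4 together with its $N_2^{as}$ analogue: one must propagate the two-sided control of $\sqrt{\hat{V}_{t-1}(a)}$ through the nonlinear width $CI_t(a)$ while tracking all constants explicitly and handling the $n$-versus-$(n-1)$ bookkeeping and the degenerate cases $n\le1$; once the right empirical-Bernstein concentration for the standard deviation is in hand, the probabilistic step (Condition~3) is routine.
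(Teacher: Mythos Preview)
Your proposal follows the paper's proof essentially step for step: the same union-bound decomposition for Condition~3 (invoking Theorems~4 and~10 of \citep{maurer2009empirical} to get $\Pr\brc*{\bar{\G}_t}\le 6\Narms/t^2$), the same substitution of the $\G_t$-side variance bound for Condition~4, and the same monotonicity arguments via \Cref{claim:minimum increasing} for the concluding claims; the paper packages your ``AM-GM argument'' as \Cref{claim: bernstein to confidence}, whose proof uses exactly the inequality $\sqrt{a(a+b)}\le a+b/2$ you describe.

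One small slip: for Condition~5 you assert the deterministic bound $\hat{V}_{t-1}(a)\le\nicefrac{1}{4}$, but this holds for the \emph{biased} sample variance, not the unbiased one used here (e.g.\ with $n=2$ and data $\brc*{0,1}$ the unbiased variance is $\nicefrac{1}{2}$). The paper instead bounds $\hat{V}_{t-1}(a)\le\frac{n}{4(n-1)}$ via $x-x^2\le\nicefrac{1}{4}$, which is what makes the first term come out as $\sqrt{6\ln t/(n-1)}$ rather than your $\sqrt{6\ln t/n}$ and allows the quadratic for $N_2^{as}$ to be written cleanly in the single variable $1/\sqrt{n-1}$.
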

\begin{proof}
First note that for any $a\in\brs*{\Narms}$ and $t\ge1$, the width of the confidence interval is 
\begin{align*}
    CI_t(a) = 2\sqrt{\frac{6\hat{V}_{t-1}(a)\ln t}{\nq{t-1}{a}}} + 14\frac{\ln t}{\nq{t-1}{a}-1}\enspace.    
\end{align*}
We now verify each of the regularity requirements:
\begin{enumerate}
    \item The condition that $CI_t(a)\ge0$ trivially holds for any $t\ge1$ when $\nq{t-1}{a}\ge2$, as $\hat{V}_{t-1}(a)\ge0$ and $\ln T\ge0$, and hold by definition when $\nq{t-1}{a}\le1$, as then, $CI_t(a)=+\infty$.
    
    \item By definition, under $\G_t$, $\rwd{a}\in\brs*{LCB_t(a),UCB_t(a)}$ for any $t\ge1$ and $a\in\brs*{\Narms}$.
    
    \item We now bound $\Pr\brc*{\bar{\G}_t}$. For any $a\in\brs*{\Narms}$, let $X_{a}(1),\dots X_a(T)$ be i.i.d. random variables of the same distribution as arm $a$, and we let the observed reward at the $\nq{t}{a}^{th}$ time the arm was played be $R_t=X_a(\nq{t}{a})$. We also denote the unbiased empirical estimator of the variance based on the first $n$ samples by 
    \begin{align*}
        \bar{V}_n(a) = \frac{n}{n-1}\br*{\frac{1}{n}\sum_{k=1}^n X_a^2(k) - \br*{\frac{1}{n}\sum_{k=1}^n X_a(k)}^2} \enspace, 
    \end{align*}
    which equals $\hat{V}_{t-1}(a)$ when $\nq{t-1}{a}=n$. Then, we can write:
    \begin{align*}
        \Pr\brc*{\bar{\G}_t}
        &= \Pr\Biggl\{\exists a\in\brs*{\Narms}: \rwd{a}\notin\brs*{LCB_t(a),UCB_t(a)},\quad \textrm{or}\Biggr.\\
        &\hspace{2.65cm}\left. \abs*{\sqrt{\hat{V}_{t-1}(a)} - \sqrt{V_a}}>\sqrt{\frac{6\ln t}{\nq{t-1}{a}-1}}\right\}\\
        &= \Pr\left\{ \exists a\in\brs*{\Narms}: \abs*{\rEst{t-1}{a}-\rwd{a}}>\sqrt{\frac{6\hat{V}_{t-1}(a)\ln t}{\nq{t-1}{a}}} + \frac{7\ln t}{\nq{t-1}{a}-1},\quad \textrm{or} \right. \\
        &\hspace{2.73cm} \left.\abs*{\sqrt{\hat{V}_{t-1}(a)} - \sqrt{V_a}}>\sqrt{\frac{6\ln t}{\nq{t-1}{a}-1}}\right\}\\
        & \overset{(1)}{\le} \sum_{a=1}^\Narms \sum_{n=2}^{t-1}\Pr\brc*{\abs*{\rEst{t-1}{a}-\rwd{a}}>\sqrt{\frac{6\hat{V}_{t-1}(a)\ln t}{\nq{t-1}{a}}} + \frac{7\ln t}{\nq{t-1}{a}-1},\nq{t-1}{a}=n} \\
        & \quad+\sum_{a=1}^\Narms \sum_{n=2}^{t-1}\Pr\brc*{\abs*{\sqrt{\hat{V}_{t-1}(a)} - \sqrt{V_a}}>\sqrt{\frac{6\ln t}{\nq{t-1}{a}-1}},\nq{t-1}{a}=n} \\
        & \overset{(2)}{=} \sum_{a=1}^\Narms \sum_{n=2}^{t-1} \Pr\brc*{\abs*{\frac{1}{n}\sum_{k=1}^n X_a(k)-\rwd{a}}> \sqrt{\frac{6\bar{V}_n(a)\ln t}{n}} + \frac{7\ln t}{n-1},\nq{t-1}{a}=n}\\
        & \quad+\sum_{a=1}^\Narms \sum_{n=2}^{t-1}\Pr\brc*{\abs*{\sqrt{\bar{V}_n(a)} - \sqrt{V_a}}>\sqrt{\frac{6\ln t}{n-1}},\nq{t-1}{a}=n} \\
        & \le \sum_{a=1}^\Narms \sum_{n=2}^{t-1} \Pr\brc*{\abs*{\frac{1}{n}\sum_{k=1}^n X_a(k)-\rwd{a}}> \sqrt{\frac{6\bar{V}_n(a)\ln t}{n}} + \frac{7\ln t}{n-1}}\\
        & \quad+\sum_{a=1}^\Narms \sum_{n=2}^{t-1}\Pr\brc*{\abs*{\sqrt{\bar{V}_n(a)} - \sqrt{V_a}}>\sqrt{\frac{6\ln t}{n-1}}} \\
        & \overset{(3)}{\le} \sum_{a=1}^\Narms \sum_{n=1}^{t-1} \frac{4}{t^3}+ \sum_{a=1}^\Narms \sum_{n=1}^{t-1} \frac{2}{t^3}\\
        & \le \frac{6\Narms}{t^2}\enspace.
    \end{align*}
    Relations $(1)$ is due to the union bound over the  two events, actions $a\in\brs*{\Narms}$ and  $\nq{t-1}{a}\in\brc*{0,\dots,t-1}$, respectively (as in the more detailed proof of \Cref{lemma: hoeffding regular}). Specifically, notice that the relevant events cannot occur when $\nq{t-1}{a}\le 1$, so we only treat values larger than $2$. In $(2)$ we substituted $\nq{t-1}{a}=n$ and wrote the empirical means  and variance using $X_{a}(n)$ and $\bar{V}_a(n)$. Finally, $(3)$ relies on Theorems 4 and 10 of \citep{maurer2009empirical} for i.i.d. bounded random variables supported by $[0,1]$ of variance $V_a$ (for all $a\in\brs*{\Narms}$). 
    
    Summing over all time indices leads to the desired result:
    \begin{align*}
        \sum_{t=1}^T \Pr\brc*{\bar{\G}_t}
        \le \sum_{t=1}^T \frac{6\Narms}{t^2}
        \le \sum_{t=1}^\infty \frac{6\Narms}{t^2}
        \le 12\Narms
        =C(T)\enspace.
    \end{align*}
    
    \item  If $N_a^g(t,\mu)=t-1$, then the event that $N_a^g(t,\mu)<\nq{t-1}{a}\le t-1$ can never happen, so assume w.l.o.g. that $N_a^g(t,\mu)<t-1$ and, in particular, $t>1$, $\mu>0$ and  $N_a^g(t,\mu)=\frac{24V_a\ln t}{\mu^2}+\frac{52\ln t}{\mu}+1$. Notice that in this case,  $N_a^\mu(t)>1$, and for $\nq{t-1}{a}>N_a^\mu(t)$, all bounds are finite and all denominators are positive. Also note that under $\G_t$, we can bound $\sqrt{\hat{V}_{t-1}(a)} \le \sqrt{V_a} + \sqrt{\frac{6\ln t}{\nq{t-1}{a}-1}}$. Then, substituting to $CI_t(a)$, for any $t>1$ and $a\in\brs*{\Narms}$,
    \begin{align*}
        CI_t(a) & = 2\sqrt{\frac{6\hat{V}_{t-1}(a)\ln t}{\nq{t-1}{a}}} + 14\frac{\ln t}{\nq{t-1}{a}-1} \\
        & \le 2\sqrt{\frac{6V_a\ln t}{\nq{t-1}{a}}} + 12\frac{\ln t}{\sqrt{\nq{t-1}{a}(\nq{t-1}{a}-1)}} + 14\frac{\ln t}{\nq{t-1}{a}-1}  \\
        & \le 2\sqrt{\frac{6V_a\ln t}{\nq{t-1}{a}-1}} + 12\frac{\ln t}{\nq{t-1}{a}-1} + 14\frac{\ln t}{\nq{t-1}{a}-1} \\
        & = \frac{\sqrt{24 V_a\ln t}}{\sqrt{\nq{t-1}{a}-1}} + \frac{26\ln t}{\nq{t-1}{a}-1}\enspace.
    \end{align*}
    Finally, by \Cref{claim: bernstein to confidence} (presented below), see that if $\nq{t-1}{a}>\frac{24V_a\ln t}{\mu^2}+\frac{52\ln t}{\mu}+1=N_a^g(t,\mu)$, then $CI_t(a)<\mu$, as required.
    
    \item As in the proof of $N_a^g(t,\mu)$, we focus on the case when $N_i^{as}(t,\mu)<t-1$, which implies that $t>1$ and $\mu>0$ (for both choices of $i\in\brc*{1,2}$). Thus, for $\nq{t-1}{a}>N_i^{as}(t,\mu)>1$, all bounds are finite and all denominators are positive. Also, for rewards bounded in $[0,1]$, we can bound $\hat{V}_t(a)$ by
    \begin{align*}
        \hat{V}_t(a) 
        &= \frac{\nq{t}{a}}{\nq{t}{a}-1}\br*{\frac{1}{\nq{t}{a}}\sum_{s=1}^t R_t^2\Ind{a_t=a} - \br*{\rEst{t}{a}}^2} \\
        &\le \frac{\nq{t}{a}}{\nq{t}{a}-1}\br*{\frac{1}{\nq{t}{a}}\sum_{s=1}^t R_t\Ind{a_t=a} - \br*{\rEst{t}{a}}^2} \\
        & = \frac{\nq{t}{a}}{\nq{t}{a}-1}\br*{\rEst{t}{a} - \br*{\rEst{t}{a}}^2} \\
        & \le \frac{1}{4}\frac{\nq{t}{a}}{\nq{t}{a}-1}\enspace,
    \end{align*}
    where the last inequality is since the function $x-x^2\le\frac{1}{4}$ for $x\in[0,1]$. In turn, for all $\nq{t-1}{a}>1$,
    \begin{align*}
        CI_t(a) & = 2\sqrt{\frac{6\hat{V}_{t-1}(a)\ln t}{\nq{t-1}{a}}} + 14\frac{\ln t}{\nq{t-1}{a}-1} \\
        & \le \sqrt{\frac{6\ln t}{\nq{t-1}{a}-1}} + 14\frac{\ln t}{\nq{t-1}{a}-1}\enspace.
    \end{align*}
    By elementary algebra, if $\nq{t-1}{a} = \br*{\sqrt{\frac{3\ln t}{2\mu^2}}+\sqrt{\frac{3\ln t}{2\mu^2} + \frac{14\ln t}{\mu}}}^2+1=N_2^{as}(t,\mu)$, then the above bound exactly equals to $\mu$. Moreover, the bound strictly decreases in $\nq{t-1}{a}$; therefore, for any $\nq{t-1}{a}>N_2^{as}(t,\mu)$, we get that $CI_t(a)<\mu$, as required of $N_2^{as}$. Alternatively, by \Cref{claim: bernstein to confidence}, if $\nq{t-1}{a} > \frac{6\ln t}{\mu^2}+\frac{28\ln t}{\mu}+1= N_1^{as}(t,\mu)$, then $CI_t(a)<\mu$, as desired from $N_1^{as}$.
    \end{enumerate}
    We now prove the additional results stated at the end of the lemma. First, notice that 
    \begin{align}
        N_a^g(t,\max\brc*{\mu,\epsilon(t)}) 
        &= \min\brc*{\frac{24V_a\ln t}{\max\brc*{\mu^2,\epsilon^2(t)}}+\frac{52\ln t}{\max\brc*{\mu,\epsilon(t)}}+1,t-1} \nonumber\\
        & = \min\brc*{\frac{24V_a\ln t}{\mu^2}+\frac{52\ln t}{\mu}+1,\frac{24V_a\ln t}{\epsilon^2(t)}+\frac{52\ln t}{\epsilon(t)}+1,t-1} \label{eq: bernstein N minimum}\\
        & \le \min\brc*{\frac{24V_a\ln t}{\mu^2}+\frac{52\ln t}{\mu}+1, t-1} \nonumber\\
        & = N_a^g(t,\mu)\enspace.\nonumber
    \end{align}
    Moreover, $N_a^g(t,\mu)$ is nondecreasing in $t$, and thus, 
    \begin{align*}
        \bar{N}_a^g(T,\mu) 
        = \max_{t\in\brs*{T}} N_a^g(t,\max\brc*{\mu,\epsilon(t)}) 
        \le \max_{t\in\brs*{T}} N_a^g(t,\mu) \le N_a^g(T,\mu)\enspace.
    \end{align*}
    Next, if $\epsilon(t)$ is nonincreasing, then \Cref{eq: bernstein N minimum} expresses $N_a^g(t,\max\brc*{\mu,\epsilon(t)})$ as a minimum on nondecreasing functions, which is by itself nondecreasing (see \Cref{claim:minimum increasing}). Thus, in this case, it holds that \begin{align*}
        \bar{N}_a^g(T,\mu) 
        = \max_{t\in\brs*{T}} N_a^g(t,\max\brc*{\mu,\epsilon(t)}) 
        = N_a^g(T,\max\brc*{\mu,\epsilon(T)}) \enspace.
    \end{align*}
    One can similarly verify that when $\epsilon(t)$ is nonincreasing, $N_i^{as}(t,\epsilon(t))$ are minima of nondecreasing functions, and following similar lines implies that $\bar{N}_i^{as}(T)\le N_i^{as}(T,\epsilon(T))$ for $i=1,2$.
    
    Finally, assume that $B(t)>1$ is nondecreasing and let $\epsilon(t) = \sqrt{\frac{6\ln t}{B(t)-1}}+14\frac{\ln t}{B(t)-1}$. Then, a direct calculation results with $N_2^{as}(t,\epsilon(t))=\min\brc*{B(t),t-1}$, which is also nondecreasing, a property that once again implies that $\bar{N}_2^{as}(T) = N_2^{as}(T,\epsilon(T))$. Moreover, returning to \eqref{eq: bernstein N minimum}, we can write
    \begin{align*}
        N_a^g(t,\max\brc*{\mu,\epsilon(t)}) 
        &= \min\brc*{\frac{24V_a\ln t}{\mu^2}+\frac{52\ln t}{\mu}+1,\frac{24V_a\ln t}{\epsilon^2(t)}+\frac{52\ln t}{\epsilon(t)}+1,t-1}\\
        & =\min\brc*{N_a^g(t,\mu),\frac{24V_a\ln t}{\epsilon^2(t)}+\frac{52\ln t}{\epsilon(t)}+1}\enspace.\nonumber
    \end{align*}
    We now substitute $\epsilon(t)$ and bound the second term by
    \begin{align*}
        \frac{24V_a\ln t}{\epsilon^2(t)}+\frac{52\ln t}{\epsilon(t)}  
        &\le \frac{24V_a\ln t}{\br*{\sqrt{6\ln t/(B(t)-1)}}^2}+\frac{52\ln t}{\max\brc*{\sqrt{6\ln t/(B(t)-1)},14\ln t/(B(t)-1}} \\
        & \le %4V_a(B(t)-1) + \min\brc*{52\sqrt{\frac{\ln t}{6}(B(T)-1)}, \frac{52}{14}(B(T)-1)}\enspace.
        4V_a(B(t)-1) + \min\brc*{22\sqrt{(B(t)-1)\ln t}, 4(B(t)-1)}\enspace.
    \end{align*}
    Importantly, as $B(t)$ is nondecreasing, this bound is nondecreasing. In turn, substituting back to $N_a^g(t,\max\brc*{\mu,\epsilon(t)})$ results with a nondecreasing upper bound, whose maximum is achieved at $t=T$. This leads to the desired bound on $\bar{N}_a^g(T,\mu)$
\end{proof}

%%%%%%%%%%%%%%%%%%%%%%%%%%%%%%%%%%%%%%%%%
%%%%%%%%%%%%%%%%%%%%%%%%%%%%%%%%%%%%%%%%%

\subsubsection{Auxiliary Claims}
We now present two extremely simple results that we repeatedly used and are proven for completeness:
\begin{claim}
\label{claim:minimum increasing}
Let $f_1,\dots,f_n:\R\mapsto\R$ be nondecreasing functions. Then, $f(x)=\min_{j\in\brs*{n}}f_j(x)$ is also nondecreasing in $x$.
\end{claim}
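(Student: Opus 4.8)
The plan is to argue directly from the definition of monotonicity: fix arbitrary reals $x \le y$ and show $f(x) \le f(y)$. The natural device is to compare $f(x)$ against a single well-chosen $f_j$ evaluated at $y$ — namely, the one that realizes the minimum defining $f(y)$.

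Concretely, I would first pick an index $j^* \in [n]$ with $f(y) = f_{j^*}(y)$; such an index exists because the minimum is over the finite set $[n]$. Then I would chain three inequalities: $f(x) = \min_{j\in[n]} f_j(x) \le f_{j^*}(x)$ (the minimum is bounded above by any particular term), then $f_{j^*}(x) \le f_{j^*}(y)$ (since $f_{j^*}$ is nondecreasing and $x \le y$), and finally $f_{j^*}(y) = f(y)$ by the choice of $j^*$. Composing these gives $f(x) \le f(y)$, which is exactly what nondecreasing means.

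There is essentially no obstacle here; the only thing to be slightly careful about is ensuring the minimizing index is attained (finiteness of $[n]$ handles this — for an infimum over an infinite family one would instead pass to an $\varepsilon$-argument, but that is not needed). I would present it as a short displayed chain of inequalities.

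\begin{proof}
Let $x,y\in\R$ with $x\le y$. Since the minimum is taken over the finite index set $[n]$, there exists $j^*\in[n]$ such that $f(y)=f_{j^*}(y)$. Then
\begin{align*}
    f(x) = \min_{j\in[n]} f_j(x) \le f_{j^*}(x) \le f_{j^*}(y) = f(y)\enspace,
\end{align*}
where the first inequality holds because the minimum over $[n]$ is at most the $j^*$-th term, and the second holds because $f_{j^*}$ is nondecreasing and $x\le y$. Since $x\le y$ were arbitrary, $f$ is nondecreasing.
\end{proof}
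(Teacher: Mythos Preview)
Your proof is correct and follows essentially the same approach as the paper's: pick the index realizing the minimum at the larger argument, then chain the monotonicity of that single $f_{j^*}$ with the defining property of the minimum. The only differences are cosmetic (the paper writes $x_1>x_2$ and reads the chain as $f(x_1)\ge f(x_2)$, whereas you write $x\le y$ and read it as $f(x)\le f(y)$).
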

\begin{proof}
Let $x_1,x_2\in\R$ such that $x_1>x_2$ and assume w.l.o.g. that $f(x_1) = f_k(x_1)$ for some $k\in\brs*{n}$. Then,
\begin{align*}
    f(x_1) = f_k(x_1) \ge f_k(x_2) \ge  \min_{j\in\brs*{n}}f_j(x_2) = f(x_2)\enspace.
\end{align*}
\end{proof}
\begin{claim}
\label{claim: bernstein to confidence}
If $c_1,c_2\ge0$ and $\mu>0$, then for any $n>n_0\triangleq\frac{c_1^2}{\mu^2} + \frac{2c_2}{\mu}$, it holds that
\begin{align*}
    \frac{c_1}{\sqrt{n}}+\frac{c_2}{n}<\mu\enspace.
\end{align*}
\end{claim}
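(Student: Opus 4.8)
The plan is to clear denominators and reduce the claim to a single quadratic inequality. Since $n>0$, multiplying through by $n$ shows the statement is equivalent to $\mu n - c_1\sqrt{n} - c_2 > 0$ for every $n > n_0$. Substituting $s=\sqrt{n}$ and writing $g(s) = \mu s^2 - c_1 s - c_2$, it suffices to prove $g(s) > 0$ whenever $s > \sqrt{n_0}$.

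First I would locate the sign change of $g$. As $\mu>0$ and the product of the roots of $g$ equals $-c_2/\mu \le 0$, $g$ has a unique nonnegative root $s_* = \tfrac{1}{2\mu}\bigl(c_1 + \sqrt{c_1^2+4\mu c_2}\bigr)$, and $g(s)>0$ precisely for $s>s_*$. Hence it is enough to show $\sqrt{n_0} \ge s_*$, i.e. $g(\sqrt{n_0}) \ge 0$, i.e. $\mu n_0 \ge c_1\sqrt{n_0}+c_2$. Using $\mu n_0 = c_1^2/\mu + 2c_2$, this is the same as $c_1\sqrt{n_0} \le c_1^2/\mu + c_2$, which for $c_1>0$ follows from the one-line estimate $n_0 = \tfrac{c_1^2}{\mu^2} + \tfrac{2c_2}{\mu} \le \bigl(\tfrac{c_1}{\mu} + \tfrac{c_2}{c_1}\bigr)^2$ (expand the square); the degenerate case $c_1=0$ is immediate since then $g(\sqrt{n_0}) = \mu n_0 - c_2 = c_2 \ge 0$.

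To turn $g(\sqrt{n_0})\ge 0$ into the strict bound $g(\sqrt n)>0$ for $n>n_0$, I would invoke monotonicity: $g'(s) = 2\mu s - c_1$, and every $s>\sqrt{n_0}$ satisfies $s > c_1/\mu$ because $n_0 \ge c_1^2/\mu^2$, so $g'(s) > c_1 \ge 0$ — and in fact $g'(s)>0$ on $(\sqrt{n_0},\infty)$ in all cases. Thus $g$ is strictly increasing past $\sqrt{n_0}$, giving $g(\sqrt n) > g(\sqrt{n_0}) \ge 0$; dividing back by $n$ yields $\tfrac{c_1}{\sqrt n} + \tfrac{c_2}{n} < \mu$.

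No step here is genuinely hard; the only points needing a little care are (i) keeping the final inequality strict, which is handled by the strict monotonicity of $g$ on $(\sqrt{n_0},\infty)$, and (ii) the boundary cases $c_1=0$ and $c_2=0$, each dispatched in a line. A more pedestrian alternative that avoids any mention of roots is to split the budget, proving $\tfrac{c_1}{\sqrt n} < \mu_1$ and $\tfrac{c_2}{n} \le \mu_2$ for a suitable decomposition $\mu_1+\mu_2=\mu$, but the quadratic argument is cleaner and uniform.
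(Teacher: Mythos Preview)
Your proof is correct and follows essentially the same route as the paper's: both arguments verify that the bound holds (weakly) at $n=n_0$ via the elementary inequality $c_1\sqrt{c_1^2+2\mu c_2}\le c_1^2+\mu c_2$ (the paper phrases it as $\sqrt{a(a+b)}\le a+b/2$, you phrase it as completing a square), and then upgrade to a strict inequality for $n>n_0$ by monotonicity. Your quadratic reformulation in $s=\sqrt n$ and explicit handling of the edge cases $c_1=0$, $c_2=0$ are cosmetic additions rather than a different strategy.
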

\begin{proof}
Notice that the l.h.s. is strictly increasing in $n$. Therefore, it is sufficient to prove that for $n=n_0$, the l.h.s. is (weakly) smaller than $\mu$:
\begin{align*}
    \frac{c_1}{\sqrt{n_0}}+\frac{c_2}{n_0}
    &= \frac{c_1\sqrt{n_0}+c_2}{n_0}
    = \frac{c_1\sqrt{\frac{c_1^2}{\mu^2} + \frac{2c_2}{\mu}}+c_2}{\frac{c_1^2}{\mu^2} + \frac{2c_2}{\mu}}
    = \mu \cdot\frac{c_1\sqrt{c_1^2 + 2\mu c_2}+\mu c_2}{c_1^2 + 2\mu c_2} \\
    &\overset{(*)}{\le}  \mu\cdot \frac{c_1^2 + \mu c_2+\mu c_2}{c_1^2 + 2\mu c_2}
    = \mu
\end{align*}
where $(*)$ is by the inequality $\sqrt{a(a+b)} \le a + \frac{b}{2}$.
\end{proof}
\clearpage
%%%%%%%%%%%%%%%%%%%%%%%%%%%%%%%%%%%%%%%%%%
%%%%%%%%%%%%%%%%%%%%%%%%%%%%%%%%%%%%%%%%%%

\section{Experimental details}
\subsection{Baseline Algorithms}
\label{appendix:baselines}
\begin{algorithm}[H]
\caption{\underline{Bu}dget-\underline{F}eedback \underline{A}ware \underline{U}pper  Confidence Bound (BuFAU)}  \label{alg: BuFAU}
\begin{algorithmic}[1]
\State \textbf{Define: } $UCB_t(a) = \rEst{t-1}{a} + \sqrt{\frac{3\ln t}{2\nq{t-1}{a}}}$ and $LCB_t(a) = \rEst{t-1}{a} - \sqrt{\frac{3\ln t}{2\nq{t-1}{a}}}$
\For{$t=1,...,\Narms$}
\State Play $a_t = t$ and ask for feedback ($q_t=1$); observe $\rOb{t}$ and update $\nq{t}{a_t}, \rEst{t}{a_t}$
\EndFor
\For{$t=\Narms+1,...,T$}
\State Observe $\epsilon(t)\ge0$ 
\State Set $l_t\in\argmax_a LCB_t(a)$ and $u_t\in\argmax_a UCB_t(a)$
\If{$\max_{a\ne l_t} UCB_t(a)\le LCB_t(l_t)$ or $UCB_t(u_t)-LCB_t(l_t)\le \epsilon(t)$} 
    \State Play $a_t=l_t$ and do not ask for feedback ($q_t=0$) 
\Else
    \State Play $a_t=u_t$ and ask for feedback ($q_t=1$); observe $\rOb{t}$ and update $\nq{t}{a_t}, \rEst{t}{a_t}$
\EndIf
\EndFor
\end{algorithmic}
\end{algorithm}
\vspace{-.5cm}

\begin{algorithm}[H]
\caption{Confidence-Budget Matching Upper Confidence Bound (CBM-UCB)}  \label{alg: CBM}
\begin{algorithmic}[1]
\State \textbf{Define: } $UCB_t(a) = \rEst{t-1}{a} + \sqrt{\frac{3\ln t}{2\nq{t-1}{a}}}$ and $LCB_t(a) = \rEst{t-1}{a} - \sqrt{\frac{3\ln t}{2\nq{t-1}{a}}}$
\For{$t=1,...,\Narms$}
\State Play $a_t = t$ and ask for feedback ($q_t=1$); observe $\rOb{t}$ and update $\nq{t}{a_t}, \rEst{t}{a_t}$
\EndFor
\For{$t=\Narms+1,...,T$}
\State Observe $\epsilon(t)\ge0$ 
\State Play $a_t=u_t\in\argmax_a UCB_t(a)$
\If{$CI_t(u_t)\le \epsilon(t)$} 
    \State Do not ask for feedback ($q_t=0$) 
\Else
    \State Ask for feedback ($q_t=1$); observe $\rOb{t}$ and update $\nq{t}{a_t}, \rEst{t}{a_t}$
\EndIf
\EndFor
\end{algorithmic}
\end{algorithm}
\vspace{-.5cm}

\begin{algorithm}[H]
\caption{Greedy Algorithm}  \label{alg: greedy}
\begin{algorithmic}[1]
\State \textbf{Define: } $UCB_t(a) = \rEst{t-1}{a} + \sqrt{\frac{3\ln t}{2\nq{t-1}{a}}}$, $LCB_t(a) = \rEst{t-1}{a} - \sqrt{\frac{3\ln t}{2\nq{t-1}{a}}}$ and $\Bq{\Narms}=\Narms$
\For{$t=1,...,\Narms$}
\State Play $a_t = t$ and ask for feedback ($q_t=1$) 
\State Observe $\rOb{t}$ and update $\nq{t}{a_t}, \rEst{t}{a_t}$
\EndFor
\For{$t=\Narms+1,...,T$}
\State Observe $\epsilon(t)\ge0$ and calculate effective budget $B(t) = \frac{6\Narms\ln t}{\epsilon^2(t)}+\Narms$
\If{$\Bq{t-1} > B(t) - 1$} 
    \State Play $a_t\in\argmax_a \rEst{t-1}{a}$, do not ask for feedback ($q_t=0$) and set $\Bq{t}=\Bq{t-1}$
\Else
    \State Play $a_t\in\argmax_a UCB_t(a)$, ask for feedback ($q_t=1$) and set $\Bq{t}=\Bq{t-1}+1$
    \State Observe $\rOb{t}$ and update $\nq{t}{a_t}, \rEst{t}{a_t}$
\EndIf
\EndFor
\end{algorithmic}
\end{algorithm}
\clearpage

\subsection{Additional Experiments}
\label{appendix:experiments}
\subsubsection{Experiments in 5-Armed Random Problems}
In this appendix, we present simulation results that demonstrate that the behavior presented in \Cref{subsection:illustrations} still holds when arms are random. Specifically, we experiment on 5-armed problems with either one or two optimal arms, where the optimal mean is $\rOpt=0.5$ and the suboptimal arms are of mean $\rwd{a}=0.25$. All arms are Bernoulli-distributed. As in the main paper, we evaluate the algorithms for $100,000$ time steps using $\epsilon(t)=t^{-1/4}$. The simulation results in \Cref{figure:experiments 5arms} are averaged over $1,000$ seeds, and the statistics at the last time-step are presented in \Cref{table:statistics} (including mean, standard deviation, $90^{th}$ quantile and maximum). The conclusions practically remain the same as in the main paper -- all algorithms perform roughly the same when there are multiple optimal arms (with slight performance degradation and increased querying for the greedy algorithm). On the other hand, when there is a unique optimal arm, BuFALU requires much less feedback but suffers from a regret degradation by a factor of $3$ (similar to the $4$ factor of \Cref{theorem:upper bound}).

\begin{figure}[H]
\centering
\subfigure{
\includegraphics[trim=0 0 0 0,clip,width=0.39\linewidth]{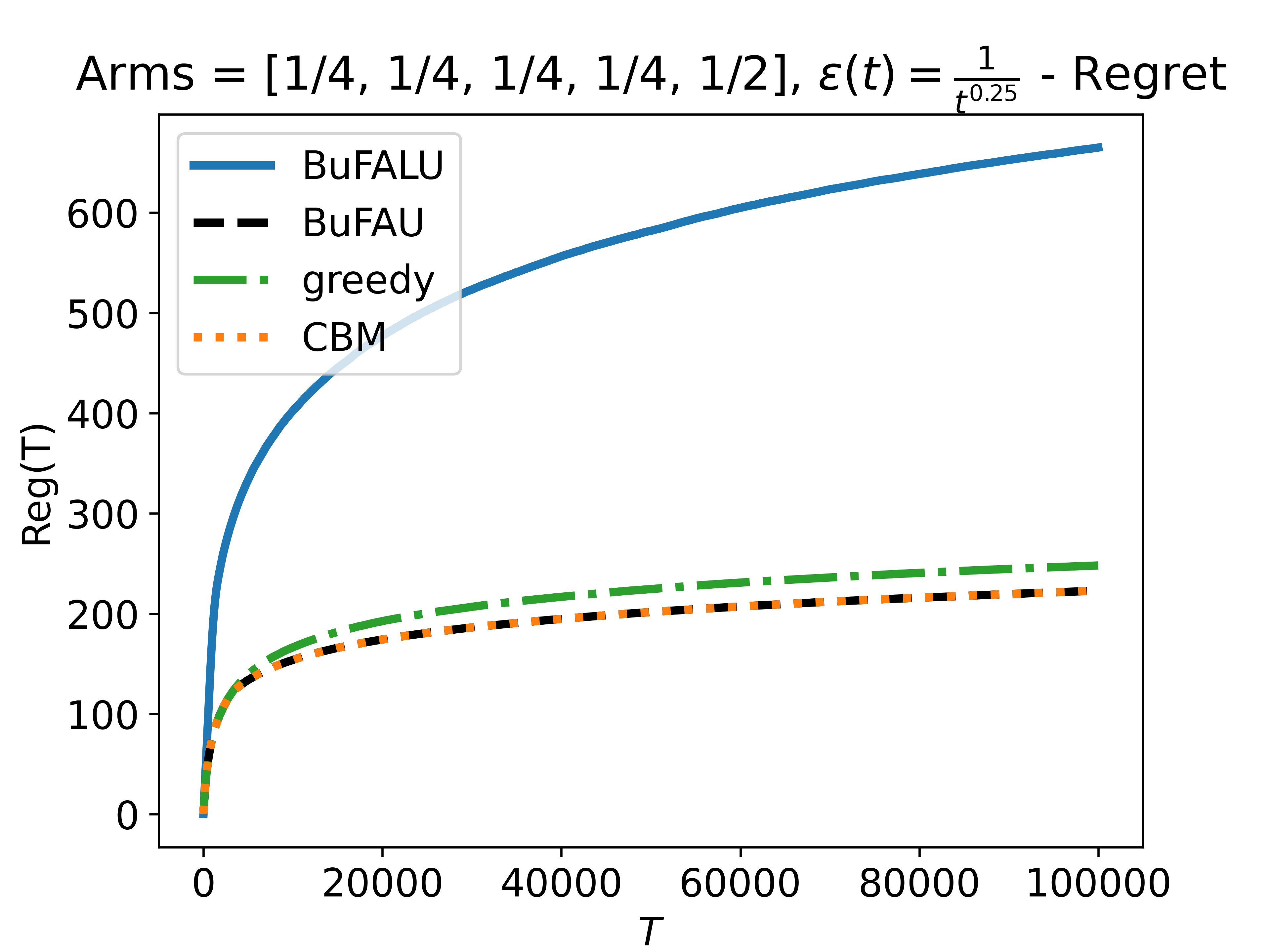}
}
\hspace{0.05\linewidth}
\subfigure{
\includegraphics[trim=0 0 0 0,clip,width=0.39\linewidth]{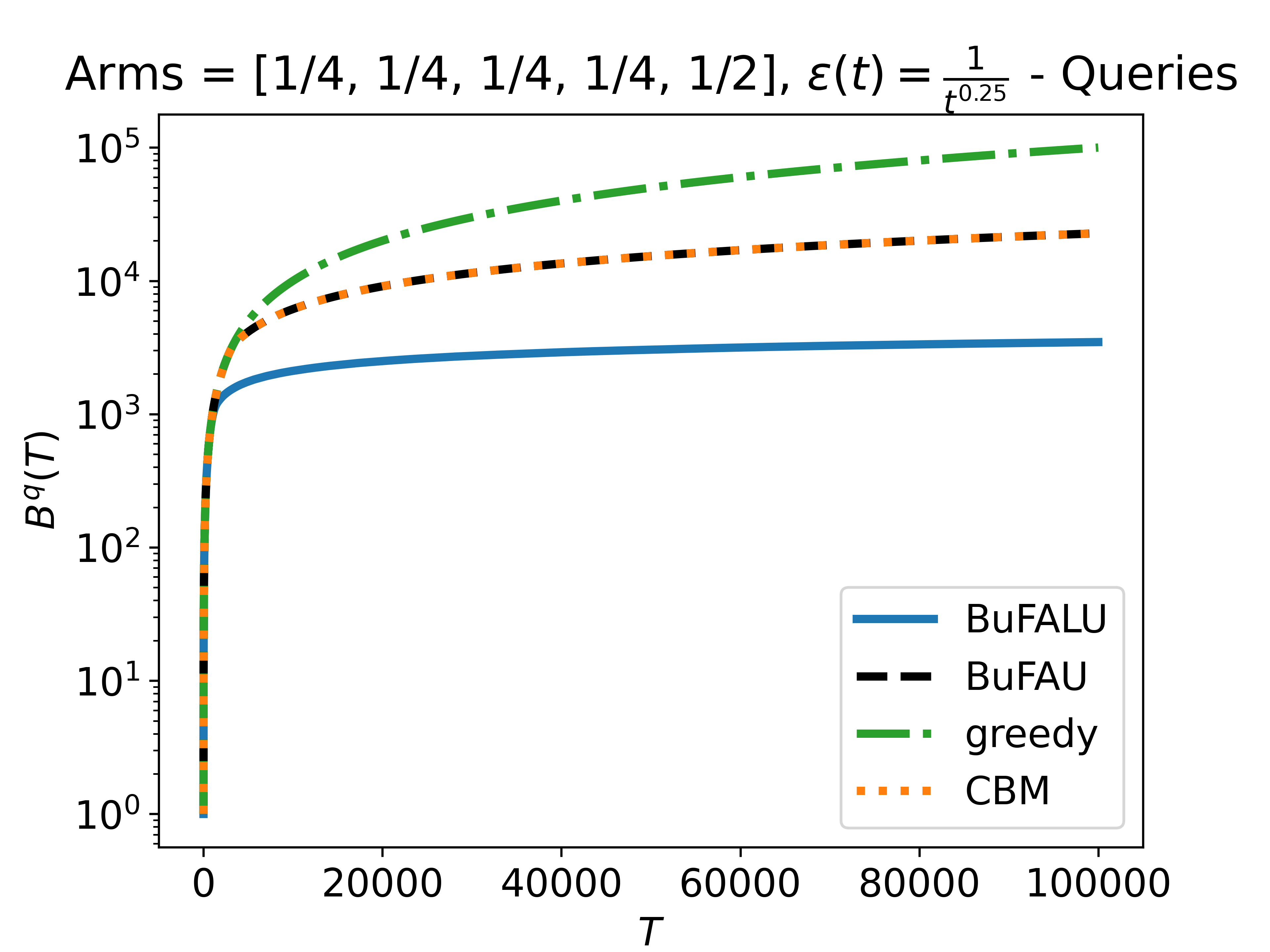}
} %\hspace{0.1cm}
\subfigure{
\includegraphics[trim=0 0 0 0,clip,width=0.39\linewidth]{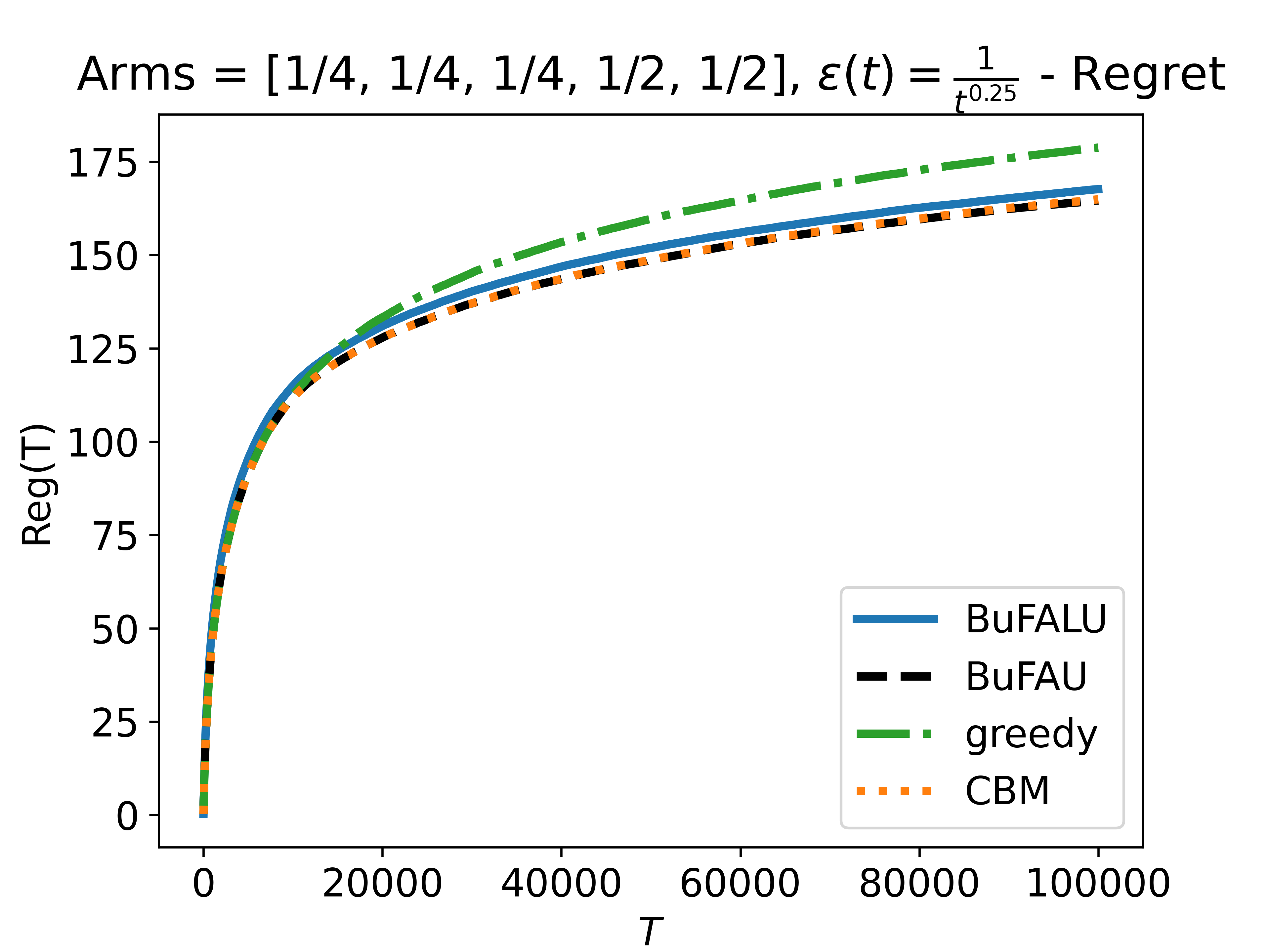}
}
\hspace{0.05\linewidth}
\subfigure{
\includegraphics[trim=0 0 0 0,clip,width=0.39\linewidth]{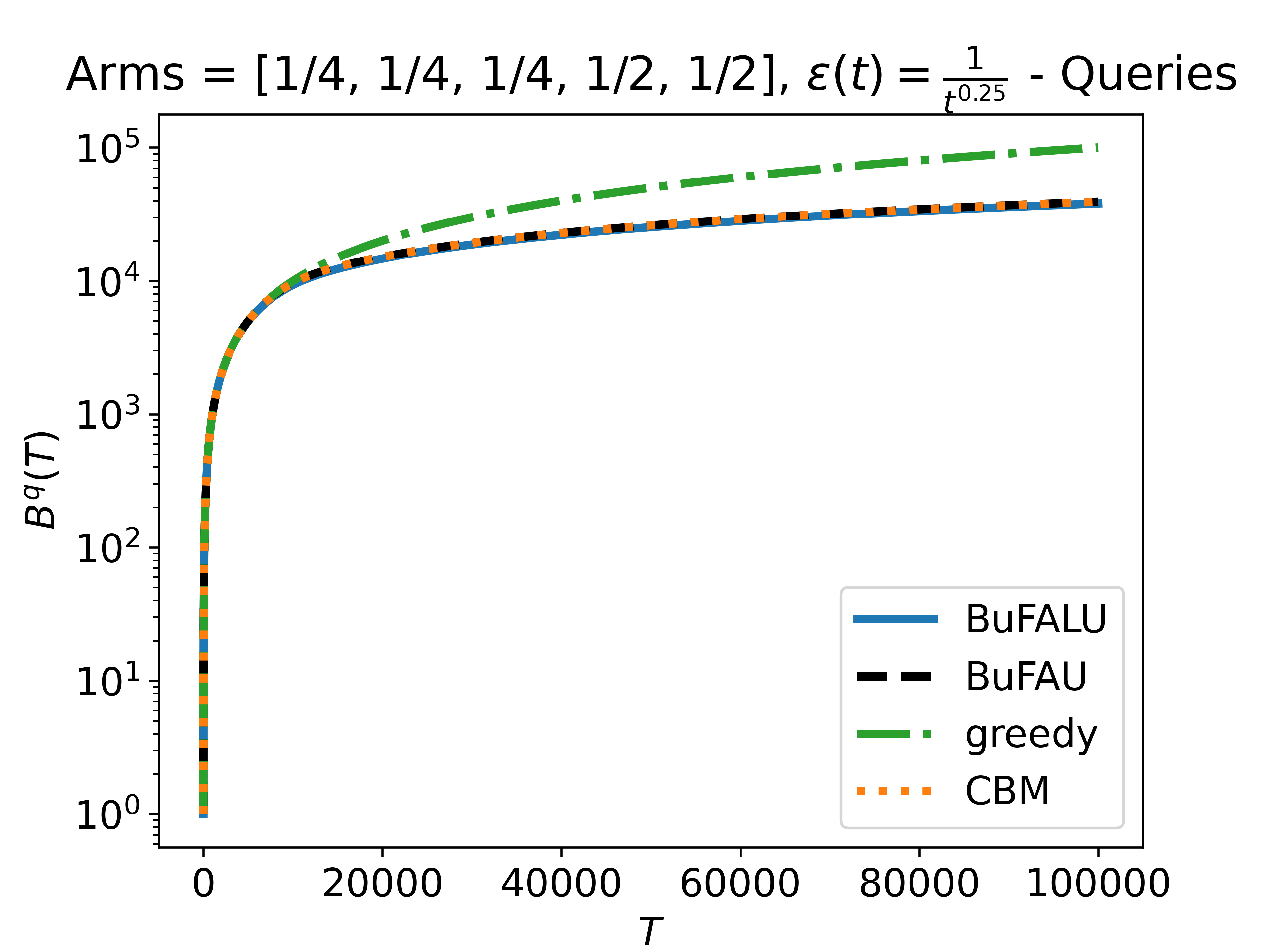}
}%\hspace{0.1cm}
\caption{Evaluation of all algorithms on 5-armed problems over $1,000$ seeds with $\epsilon(t)=t^{-1/4}$.}
\label{figure:experiments 5arms}
\end{figure}

\setlength{\tabcolsep}{3pt}
\begin{table}[ht]
\centering
\caption{Additional statistics of the empirical evaluation in \Cref{figure:experiments 5arms}. All statistics are measured after $T=100,000$ time steps over $1,000$ different seeds. All experiments use $\epsilon(t)=t^{-1/4}$. std is the standard deviation, $90\%$ represents the $90^{th}$ percentile and max is the maximum.}
\label{table:statistics}
\begin{tabular}{|c|c?c|c|c|c?c|c|c|c|}\hline
  &  & \multicolumn{4}{c?}{\textbf{Regret}} &  \multicolumn{4}{c|}{\textbf{Queries}} \\ \hhline {~~--------}
   \multirow{-2}{2.5cm}{\centering\textbf{Bandit Instance}}&  \multirow{-2}{*}{\centering\textbf{Algorithm}} & mean & std & $90\%$ & max & 
    mean & std & $90\%$ & max \\ \hline 
    
  \rowcolor[gray]{0.9} \cellcolor{white}&
  BuFALU & 664.88 & 87.08 & 774.8 & 982.25 & 3471.38 & 437.39 & 4024.1 & 5023 \\ 
  
  \hhline{~>{\arrayrulecolor[gray]{.5}}|*9{-}|}
     & 
   BuFAU & 222.94 & 23.03 & 253.25 & 304.5 & 22736.7 & 92.1 & 22858 & 23063 \\ 
   
   \hhline{~>{\arrayrulecolor[gray]{.5}}|*9{-}|}
   \rowcolor[gray]{0.9} \cellcolor{white} & 
   greedy & 248.08 & 26.18 & 281.77 & 343.75 & 100000 & 0 & 100000 & 100000 \\ 
   
   \hhline{~>{\arrayrulecolor[gray]{.5}}|*9{-}|}
     \multirow{-4}{2.5cm}{\centering 5-arms, unique optimal, values: $\brs*{\frac{1}{4},\frac{1}{4},\frac{1}{4},\frac{1}{4},\frac{1}{2}}$}& 
   CBM & 222.97 & 23.06 & 253.25 & 304.5 & 22736.9 & 92.25 & 22858 & 23063 \\ 
  \arrayrulecolor{black}\hline
  %%%%%%%%%%%%%%%%%%%%%%%%%%%%%%%%%%%%%%%%%%%%
  \rowcolor[gray]{0.9} \cellcolor{white}&
   BuFALU & 167.64 & 20.17 & 194.52 & 230.75 & 38049.3 & 3944.21 & 42593.5 & 44135 \\ 
  
  \hhline{~>{\arrayrulecolor[gray]{.5}}|*9{-}|}
     & 
   BuFAU & 164.62 & 19.75 & 190.02 & 230.25 & 39254.9 & 3292.47 & 43298.1 & 44383 \\ 
   
   \hhline{~>{\arrayrulecolor[gray]{.5}}|*9{-}|}
   \rowcolor[gray]{0.9} \cellcolor{white} & 
   greedy & 178.79 & 21.72 & 207.78 & 256.75 & 100000 & 0 & 100000 & 100000\\ 
   
   \hhline{~>{\arrayrulecolor[gray]{.5}}|*9{-}|}
     \multirow{-4}{2.5cm}{\centering 5-arms, multiple optimal, values: $\brs*{\frac{1}{4},\frac{1}{4},\frac{1}{4},\frac{1}{2},\frac{1}{2}}$}& 
   CBM & 164.92 & 19.95 & 190.5 & 245 & 39217.6 & 3277.71 & 43289.4 & 44383 \\ 
  \arrayrulecolor{black}\hline
  %%%%%%%%%%%%%%%%%%%%%%%%%%%%%%%%%%%%%%%%%%%%
\end{tabular}
\end{table}
\setlength{\tabcolsep}{6pt}

For completeness, we also present simulation results for two additional profiles of $\epsilon(t)$ -- the first is when $\epsilon(t)=0$ (see \Cref{figure:experiments 5arms eps-zero} and statistics in \Cref{table:statistics eps-zero}) and the second when $\epsilon(t) = \frac{1}{\ln t}$ (see \Cref{figure:experiments 5arms eps poly-log} and statistics in \Cref{table:statistics eps poly-log}). When $\epsilon(t)=0$, algorithms are allowed to ask as much feedback as they want. Then, when there are multiple optimal arms, all algorithms always ask for feedback and perform the same. On the other hand, when the optimal arm is unique, BuFALU asks for feedback on $\sim\frac{1}{20}$ of the rounds and suffers a factor of $4$ in its regret. Therefore, in the unlimited budget scenario, we return to the same behavior as in \Cref{subsection:illustrations}. The case of $\epsilon(t) = \frac{1}{\ln t}$, which corresponds to a querying budget of $\Ocal(\Narms \ln^3 t)$, behaves very similar to the choice of $\epsilon(t) = t^{-1/4}$. Interestingly, when the optimal arm is unique, the regret of BuFALU in both limited cases is lower than the regret when $\epsilon(t)=0$. A possible explanation is that when the budget is unlimited, BuFALU aggressively explores to separate the optimal arm from all other arms. When the budget is limited, BuFALU combines exploration with exploitation of $l_t$, which leads to a slightly lower regret.

\begin{figure}[htbp]
\centering
\subfigure{
\includegraphics[trim=0 0 0 0,clip,width=0.38\linewidth]{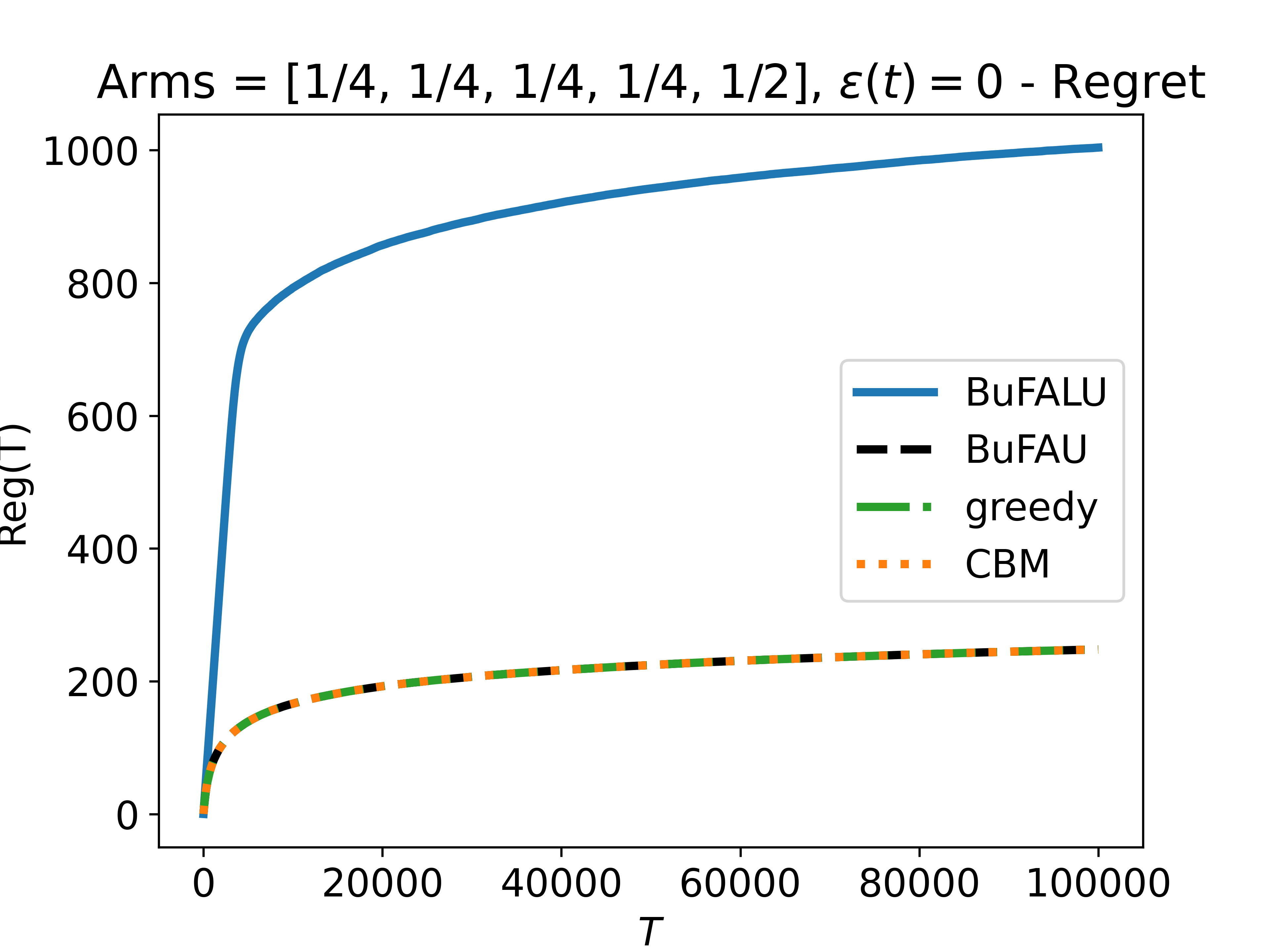}
}
\hspace{0.05\linewidth}
\subfigure{
\includegraphics[trim=0 0 0 0,clip,width=0.38\linewidth]{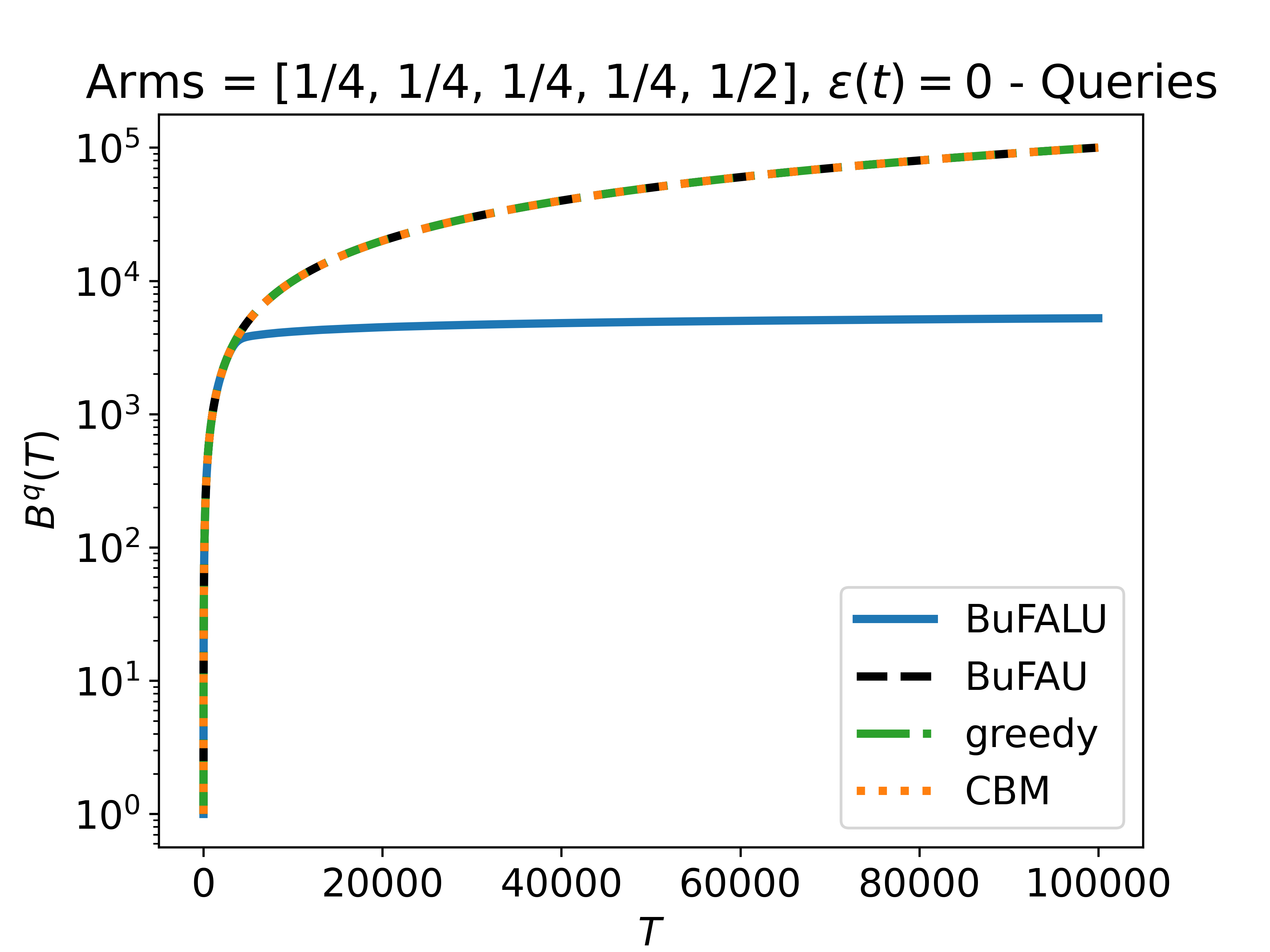}
} %\hspace{0.1cm}
\subfigure{
\includegraphics[trim=0 0 0 0,clip,width=0.38\linewidth]{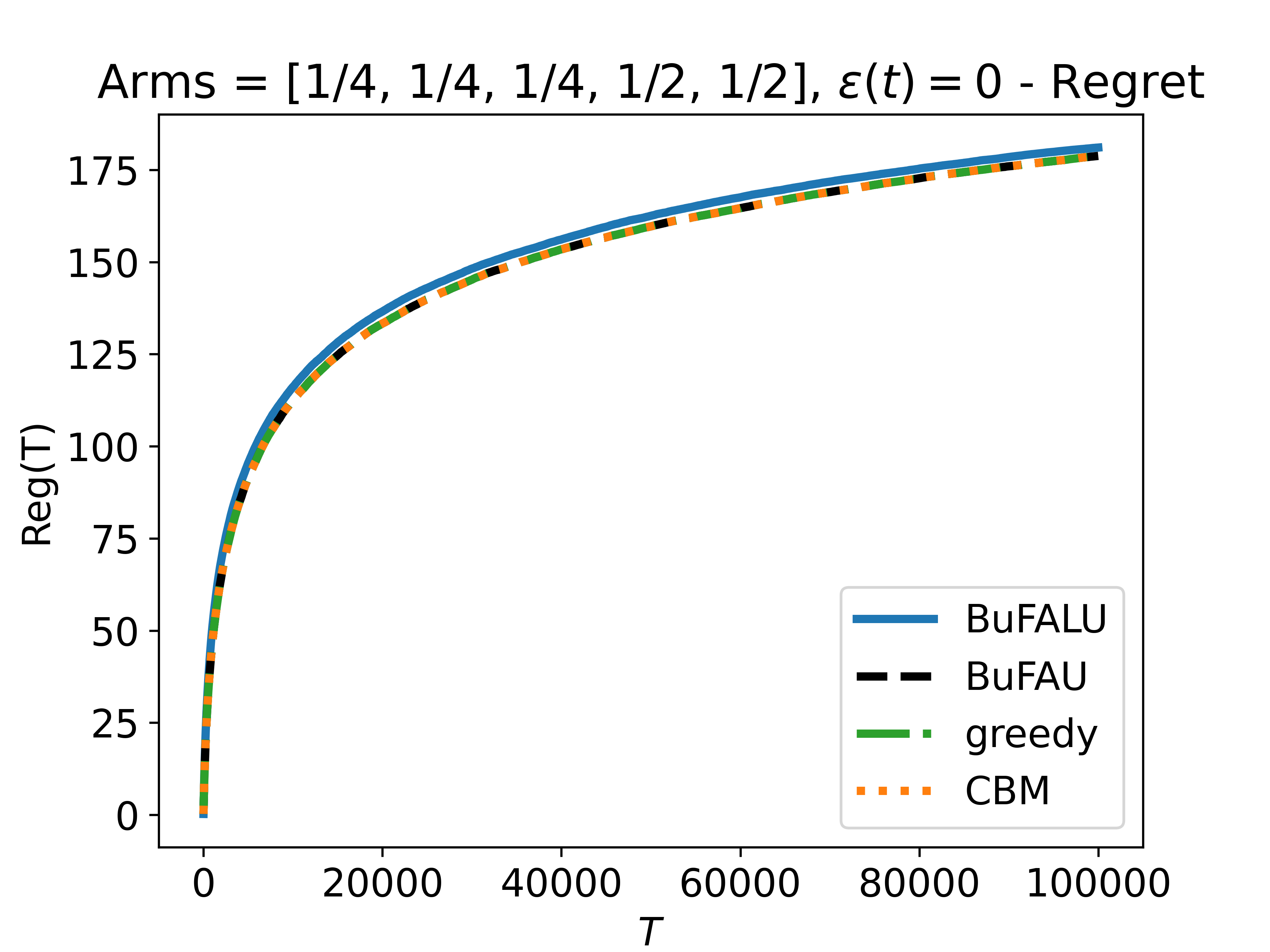}
}
\hspace{0.05\linewidth}
\subfigure{
\includegraphics[trim=0 0 0 0,clip,width=0.38\linewidth]{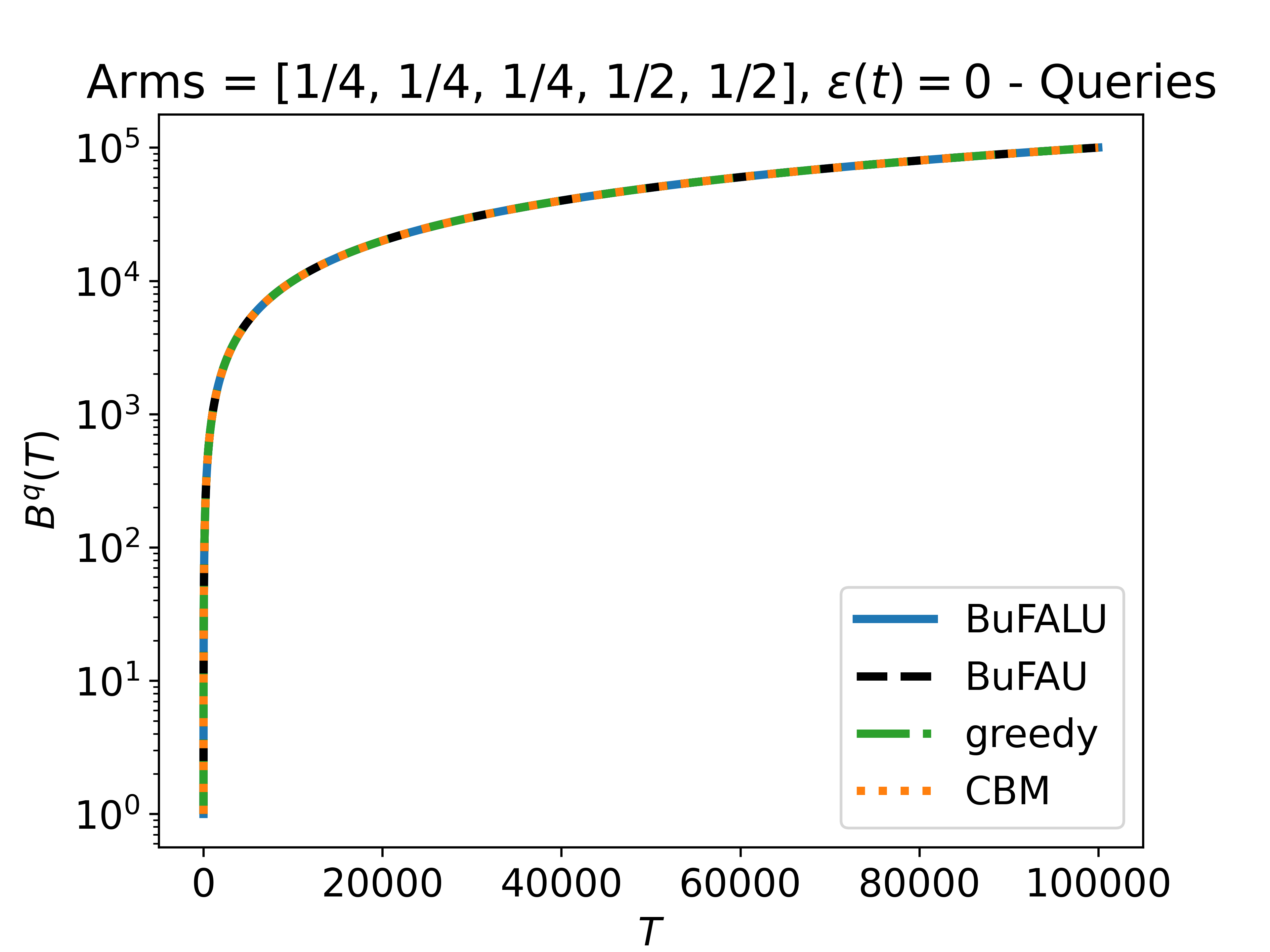}
}%\hspace{0.1cm}
\caption{Evaluation of all algorithms on 5-armed problems over $1,000$ seeds with $\epsilon(t)=0$.}
\label{figure:experiments 5arms eps-zero}
\end{figure}

\setlength{\tabcolsep}{3pt}
\begin{table}[htbp]
\centering
\caption{Additional statistics of the empirical evaluation in \Cref{figure:experiments 5arms eps-zero}. All statistics are measured after $T=100,000$ time steps over $1,000$ different seeds. All experiments use $\epsilon(t)=0$. std is the standard deviation, $90\%$ represents the $90^{th}$ percentile and max is the maximum.}
\label{table:statistics eps-zero}
\begin{tabular}{|c|c?c|c|c|c?c|c|c|c|}\hline
  &  & \multicolumn{4}{c?}{\textbf{Regret}} &  \multicolumn{4}{c|}{\textbf{Queries}} \\ \hhline {~~--------}
   \multirow{-2}{2.5cm}{\centering\textbf{Bandit Instance}}&  \multirow{-2}{*}{\centering\textbf{Algorithm}} & mean & std & $90\%$ & max & 
    mean & std & $90\%$ & max \\ \hline 
    
  \rowcolor[gray]{0.9} \cellcolor{white}&
  BuFALU & 1003.96 & 131.23 & 1180.55 & 1457.25 & 5240.73 & 658.93 & 6130 & 7538 \\ 
  
  \hhline{~>{\arrayrulecolor[gray]{.5}}|*9{-}|}
     & 
   BuFAU & 248.08 & 26.18 & 281.77 & 343.75 & 100000 & 0 & 100000 & 100000 \\ 
   
   \hhline{~>{\arrayrulecolor[gray]{.5}}|*9{-}|}
   \rowcolor[gray]{0.9} \cellcolor{white} & 
   greedy & 248.08 & 26.18 & 281.77 & 343.75 & 100000 & 0 & 100000 & 100000 \\ 
   
   \hhline{~>{\arrayrulecolor[gray]{.5}}|*9{-}|}
     \multirow{-4}{2.5cm}{\centering 5-arms, unique optimal, values: $\brs*{\frac{1}{4},\frac{1}{4},\frac{1}{4},\frac{1}{4},\frac{1}{2}}$}& 
   CBM & 248.08 & 26.18 & 281.77 & 343.75 & 100000 & 0 & 100000 & 100000 \\ 
  \arrayrulecolor{black}\hline
  %%%%%%%%%%%%%%%%%%%%%%%%%%%%%%%%%%%%%%%%%%%%
  \rowcolor[gray]{0.9} \cellcolor{white}&
   BuFALU & 181.05 & 21.6 & 208.02 & 259.5 & 100000 & 0 & 100000 & 100000 \\ 
  
  \hhline{~>{\arrayrulecolor[gray]{.5}}|*9{-}|}
     & 
   BuFAU & 178.79 & 21.72 & 207.78 & 256.75 & 100000 & 0 & 100000 & 100000 \\ 
   
   \hhline{~>{\arrayrulecolor[gray]{.5}}|*9{-}|}
   \rowcolor[gray]{0.9} \cellcolor{white} & 
   greedy & 178.79 & 21.72 & 207.78 & 256.75 & 100000 & 0 & 100000 & 100000\\ 
   
   \hhline{~>{\arrayrulecolor[gray]{.5}}|*9{-}|}
     \multirow{-4}{2.5cm}{\centering 5-arms, multiple optimal, values: $\brs*{\frac{1}{4},\frac{1}{4},\frac{1}{4},\frac{1}{2},\frac{1}{2}}$}& 
   CBM & 178.79 & 21.72 & 207.78 & 256.75 & 100000 & 0 & 100000 & 100000 \\ 
  \arrayrulecolor{black}\hline
  %%%%%%%%%%%%%%%%%%%%%%%%%%%%%%%%%%%%%%%%%%%%
\end{tabular}
\end{table}
\setlength{\tabcolsep}{6pt}

\begin{figure}[htbp]
\centering
\subfigure{
\includegraphics[trim=0 0 0 0,clip,width=0.38\linewidth]{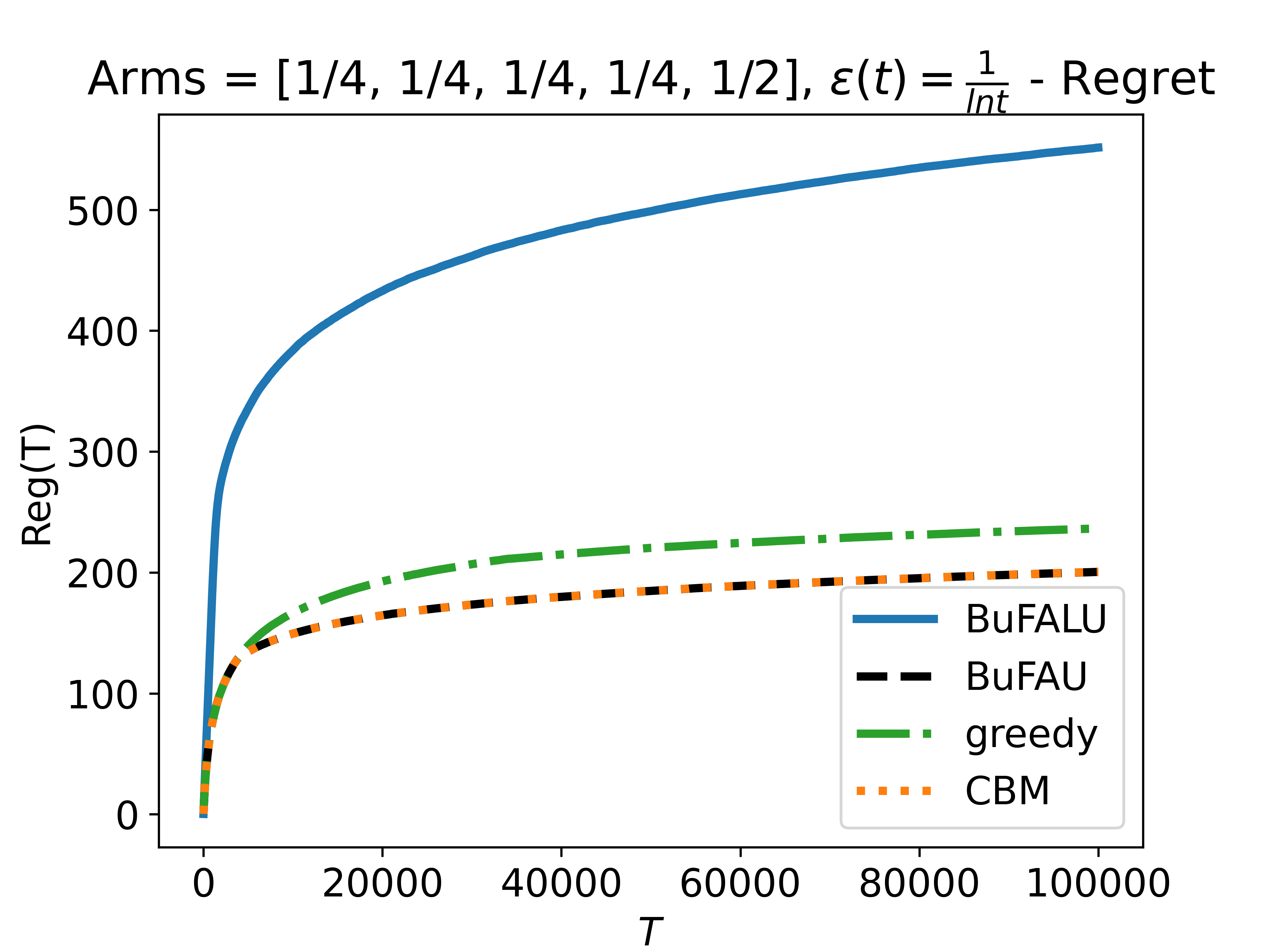}
}
\hspace{0.05\linewidth}
\subfigure{
\includegraphics[trim=0 0 0 0,clip,width=0.38\linewidth]{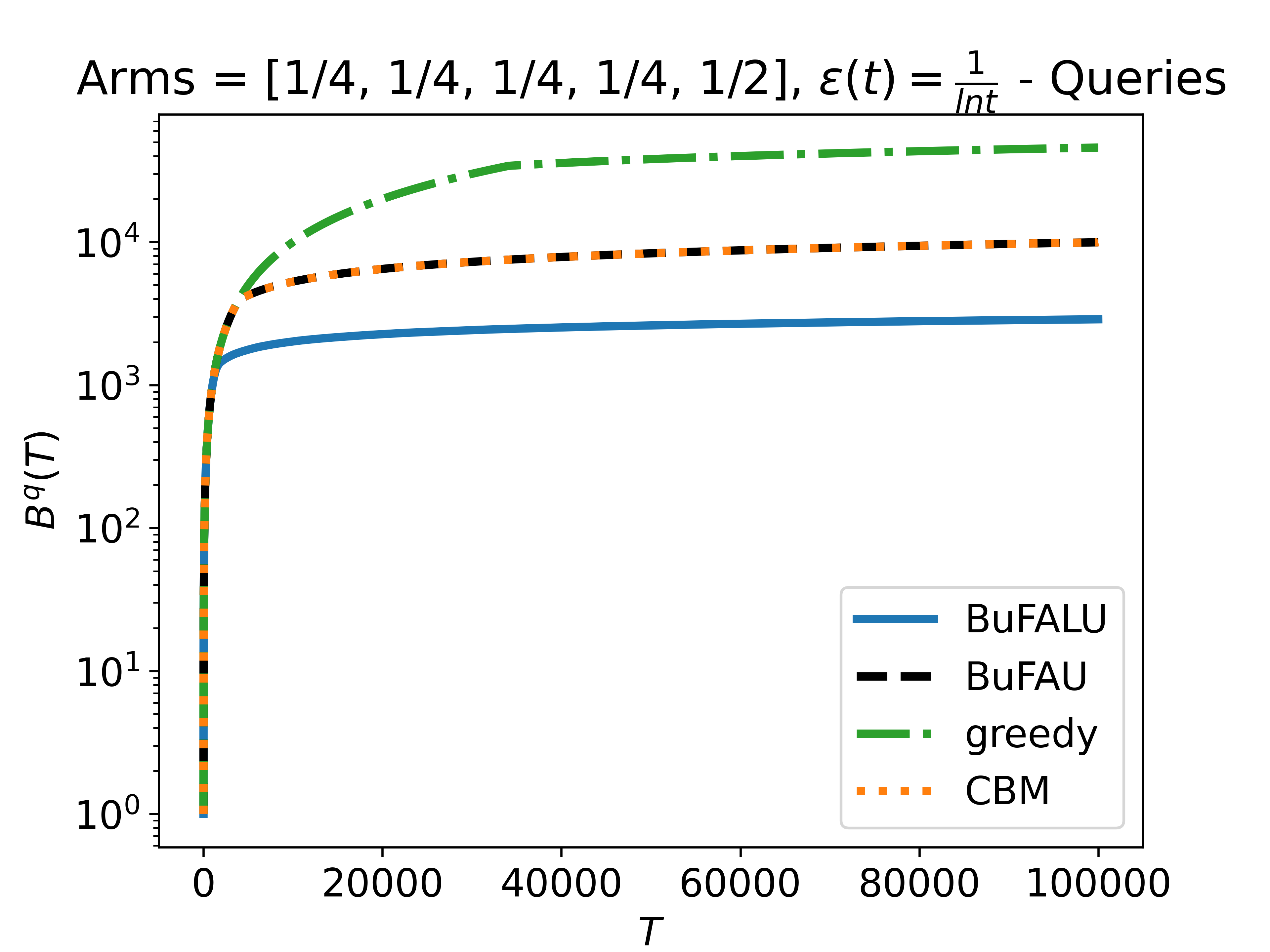}
} %\hspace{0.1cm}
\subfigure{
\includegraphics[trim=0 0 0 0,clip,width=0.38\linewidth]{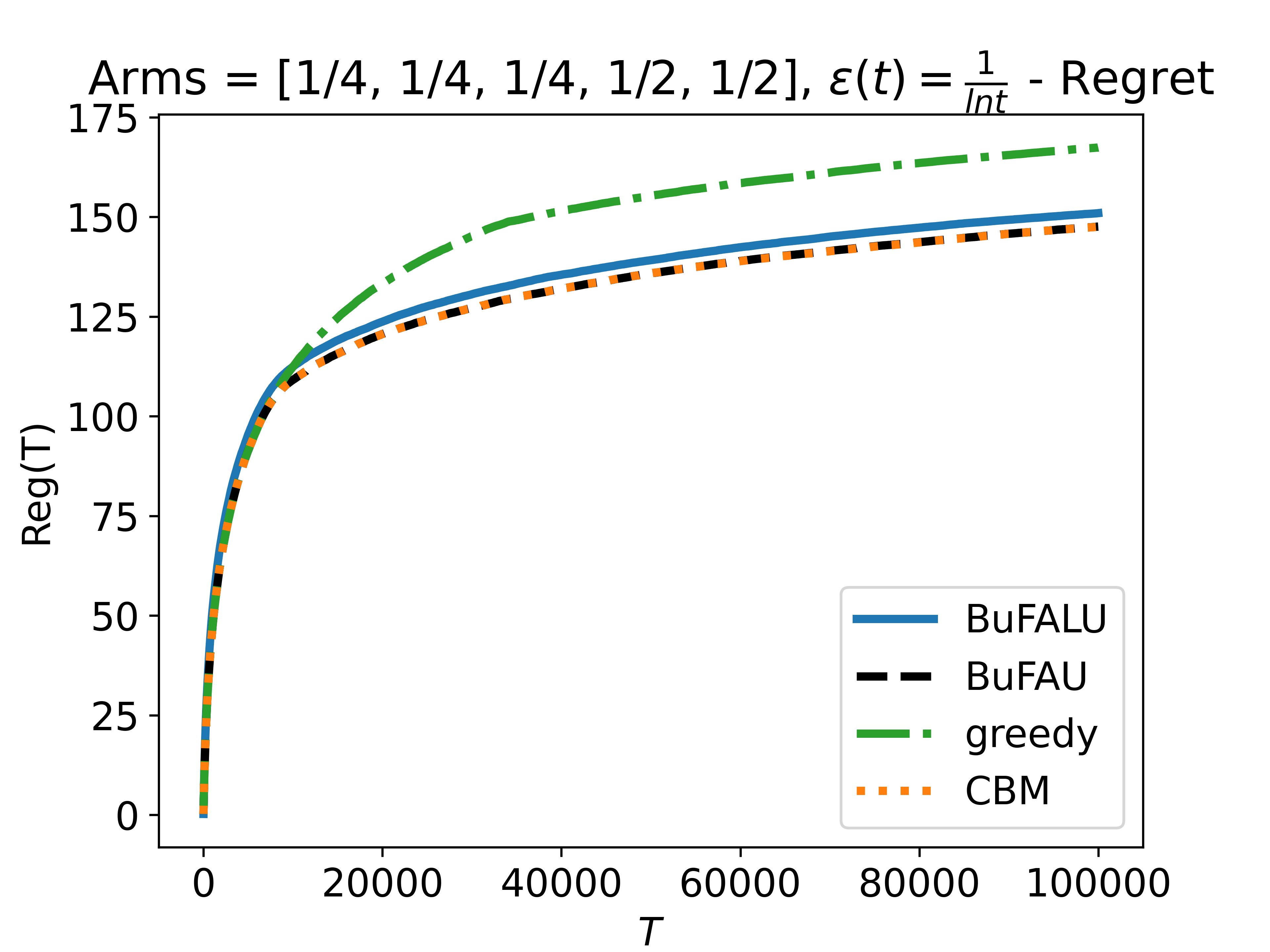}
}
\hspace{0.05\linewidth}
\subfigure{
\includegraphics[trim=0 0 0 0,clip,width=0.38\linewidth]{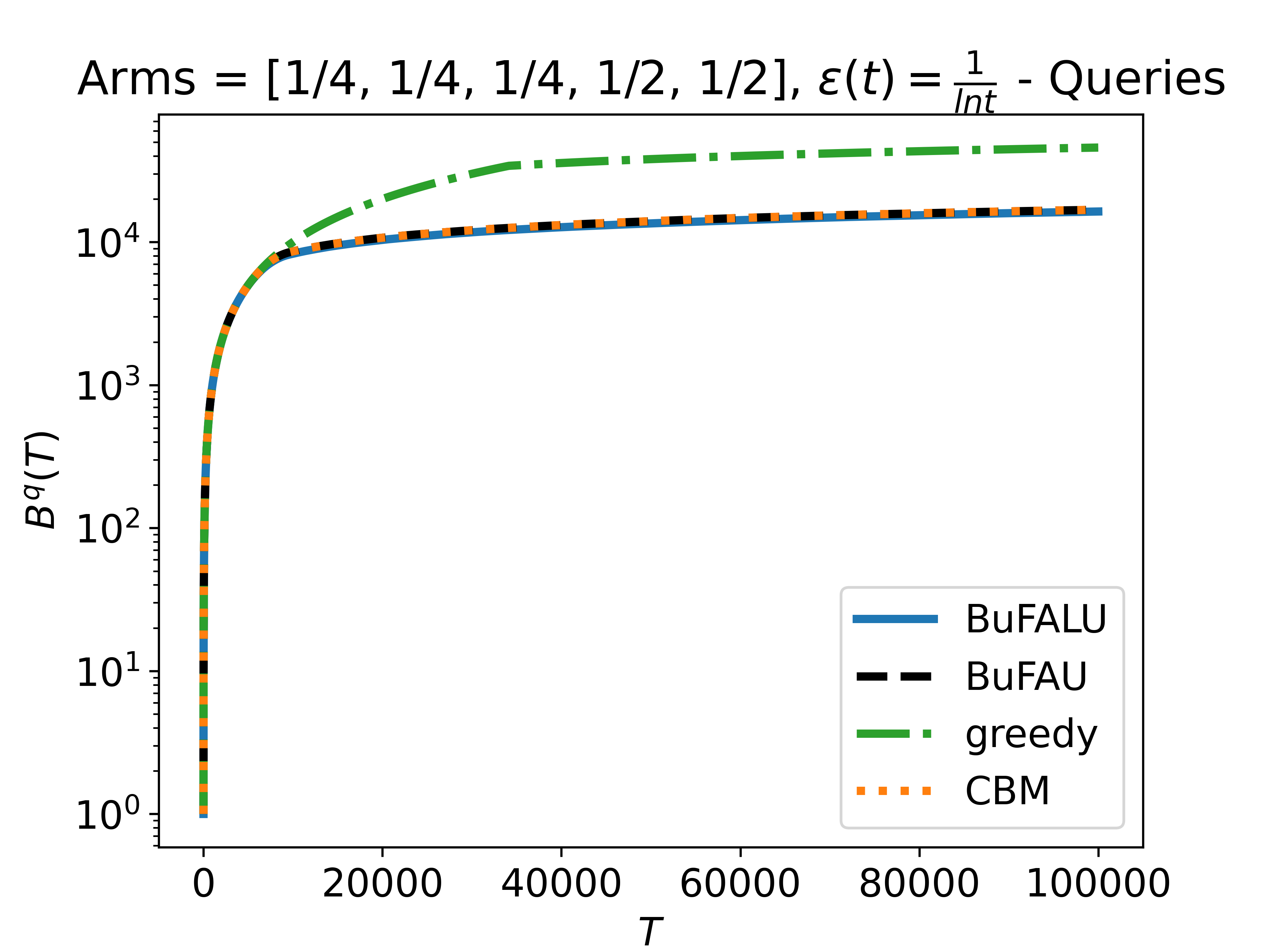}
}%\hspace{0.1cm}
\caption{Evaluation of all algorithms on 5-armed problems over $1,000$ seeds with $\epsilon(t)=\frac{1}{\ln t}$.}
\label{figure:experiments 5arms eps poly-log}
\end{figure}

\setlength{\tabcolsep}{3pt}
\begin{table}[htbp]
\centering
\caption{Additional statistics of the empirical evaluation in \Cref{figure:experiments 5arms eps poly-log}. All statistics are measured after $T=100,000$ time steps over $1,000$ different seeds. All experiments use $\epsilon(t)=\frac{1}{\ln t}$. std is the standard deviation, $90\%$ represents the $90^{th}$ percentile and max is the maximum.}
\label{table:statistics eps poly-log}
\begin{tabular}{|c|c?c|c|c|c?c|c|c|c|}\hline
  &  & \multicolumn{4}{c?}{\textbf{Regret}} &  \multicolumn{4}{c|}{\textbf{Queries}} \\ \hhline {~~--------}
   \multirow{-2}{2.5cm}{\centering\textbf{Bandit Instance}}&  \multirow{-2}{*}{\centering\textbf{Algorithm}} & mean & std & $90\%$ & max & 
    mean & std & $90\%$ & max \\ \hline 
    
  \rowcolor[gray]{0.9} \cellcolor{white}&
  BuFALU & 551.43 & 73.82 & 644.08 & 780.25 & 2881.58 & 373.37 & 3336.5 & 4072 \\ 
  
  \hhline{~>{\arrayrulecolor[gray]{.5}}|*9{-}|}
     & 
   BuFAU & 200.47 & 21.92 & 228.75 & 276.25 & 9958.88 & 87.68 & 10072 & 10262 \\ 
   
   \hhline{~>{\arrayrulecolor[gray]{.5}}|*9{-}|}
   \rowcolor[gray]{0.9} \cellcolor{white} & 
   greedy & 236.37 & 24.63 & 269.02 & 311.5 & 45785 & 0 & 45785 & 45785 \\ 
   
   \hhline{~>{\arrayrulecolor[gray]{.5}}|*9{-}|}
     \multirow{-4}{2.5cm}{\centering 5-arms, unique optimal, values: $\brs*{\frac{1}{4},\frac{1}{4},\frac{1}{4},\frac{1}{4},\frac{1}{2}}$}& 
   CBM & 200.52 & 21.94 & 228.82 & 276.25 & 9959.06 & 87.76 & 10072.3 & 10262 \\ 
  \arrayrulecolor{black}\hline
  %%%%%%%%%%%%%%%%%%%%%%%%%%%%%%%%%%%%%%%%%%%%
  \rowcolor[gray]{0.9} \cellcolor{white}&
   BuFALU & 150.97 & 17.9 & 173.78 & 210 & 16349.9 & 1762.22 & 18277.3 & 18954 \\ 
  
  \hhline{~>{\arrayrulecolor[gray]{.5}}|*9{-}|}
     & 
   BuFAU & 147.59 & 18.04 & 170 & 235.5 & 16811.8 & 1387.87 & 18504.5 & 19022 \\ 
   
   \hhline{~>{\arrayrulecolor[gray]{.5}}|*9{-}|}
   \rowcolor[gray]{0.9} \cellcolor{white} & 
   greedy & 167.42 & 20.59 & 194.02 & 238.25 & 45785 & 0 & 45785 & 45785 \\ 
   
   \hhline{~>{\arrayrulecolor[gray]{.5}}|*9{-}|}
     \multirow{-4}{2.5cm}{\centering 5-arms, multiple optimal, values: $\brs*{\frac{1}{4},\frac{1}{4},\frac{1}{4},\frac{1}{2},\frac{1}{2}}$}& 
   CBM & 147.57 & 17.99 & 170.5 & 235.5 & 16827.3 & 1390.31 & 18523.1 & 19022 \\ 
  \arrayrulecolor{black}\hline
  %%%%%%%%%%%%%%%%%%%%%%%%%%%%%%%%%%%%%%%%%%%%
\end{tabular}
\end{table}
\setlength{\tabcolsep}{6pt}

\clearpage

\subsubsection{Testing the Effect of the Total Budget}
We now test the effect of the total budget on the regret and number of feedback queries. We do so problems with Bernoulli arms, optimal arm $\mu^*=0.5$, a single suboptimal arm $\mu_a=0.25$ and either a unique or two optimal arms (namely, two and three-armed problems, respectively). The time horizon in the simulations is $T=1,000$ and results were averaged over $1,000$ seeds. To simulate the effect of the total budget, we allocated a fixed budget $B$ throughout the interactions (i.e., set $\epsilon(t) = \sqrt{\frac{6\Narms\ln t}{B}}$) and measured both the regret and number of queries at the last time step, as a function of the (fractional) allocated budget $B/T$. The results are depicted in \Cref{figure:budget effect}, and error bars represent a variation of one standard deviation.

Interestingly, the regret first decreases, achieves a minimum when $B/T\approx 0.08$, and then starts increasing. We believe that this is since the standard UCB bounds are a little loose and can be replaced by \citep{garivier2011kl}:
\begin{equation*}
    UCB_t(a) = \rEst{t-1}{a} + \sqrt{\frac{\ln t + 3\ln\ln t}{2\nq{t-1}{a}}}\enspace.
\end{equation*}
Therefore, the algorithm suffers from some over-exploration. Then, the local minimum of the regret allocates enough budget to adequately explore and then forces the algorithm the exploit, which leads to a better exploration-exploitation tradeoff.

Asides from this phenomenon, the graph behaves as could be expected. As we previously saw, all algorithms achieve roughly the same performance in the presence of multiple optimal arms. When the optimal arm is unique, BuFALU is substantially fewer budget queries but suffers from a small regret degradation (up to a factor of $4$). Notably, in the small-budget regime, BuFALU achieves similar regret to all other baselines.

\begin{figure}[h]
\centering
\subfigure{
\includegraphics[trim=0 0 0 0,clip,width=0.38\linewidth]{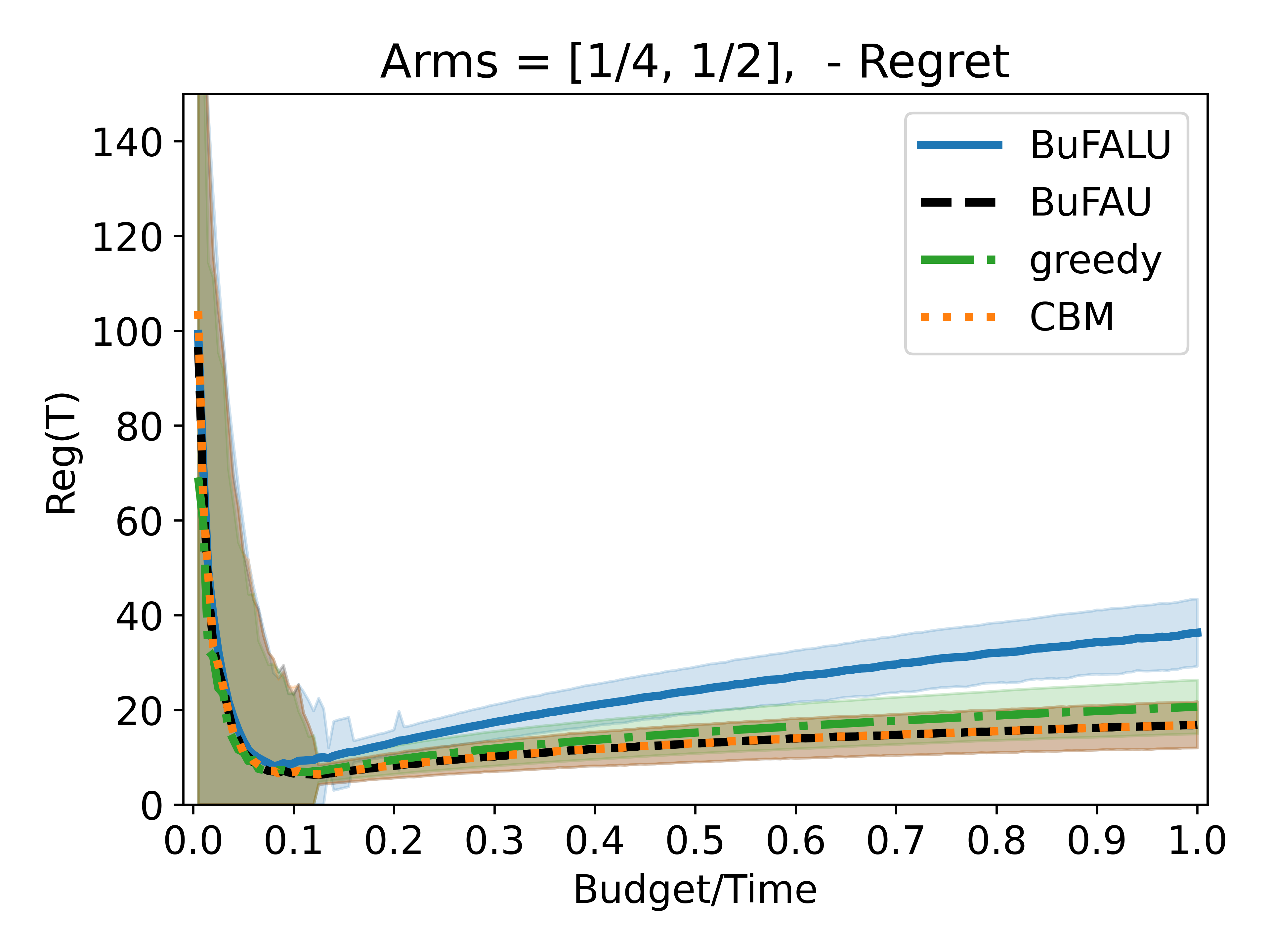}
}
\hspace{0.05\linewidth}
\subfigure{
\includegraphics[trim=0 0 0 0,clip,width=0.38\linewidth]{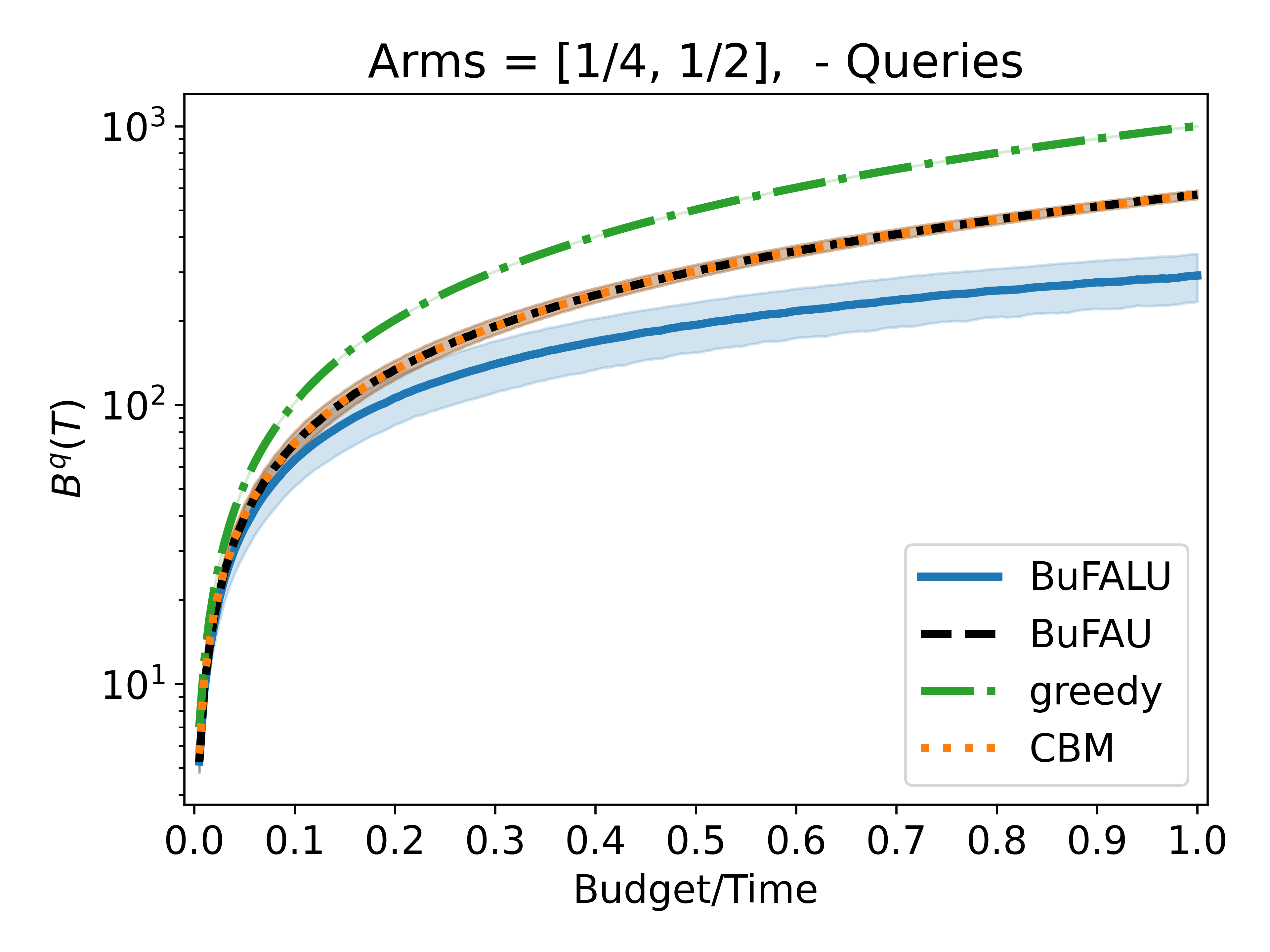}
} %\hspace{0.1cm}
\subfigure{
\includegraphics[trim=0 0 0 0,clip,width=0.38\linewidth]{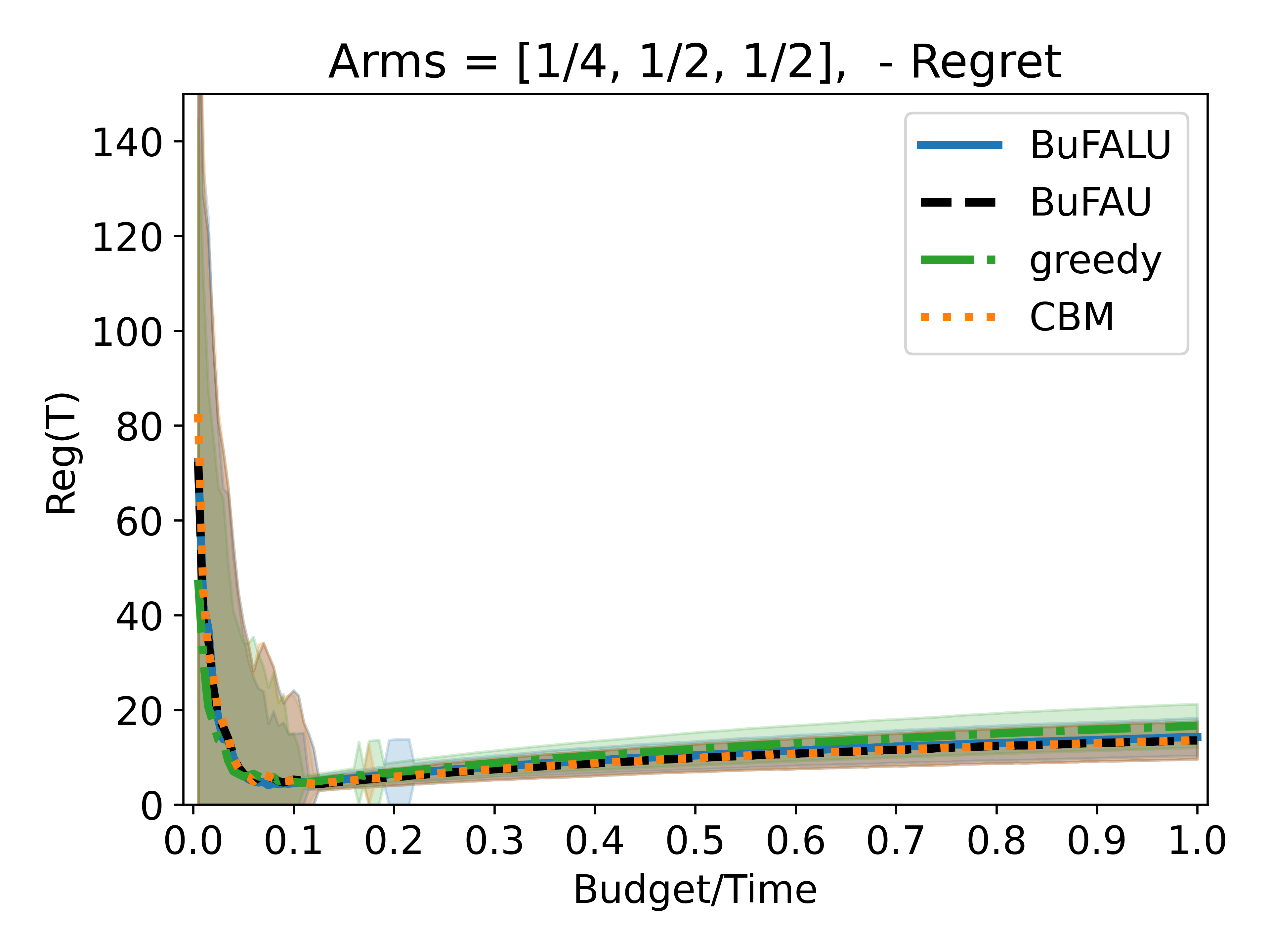}
}
\hspace{0.05\linewidth}
\subfigure{
\includegraphics[trim=0 0 0 0,clip,width=0.38\linewidth]{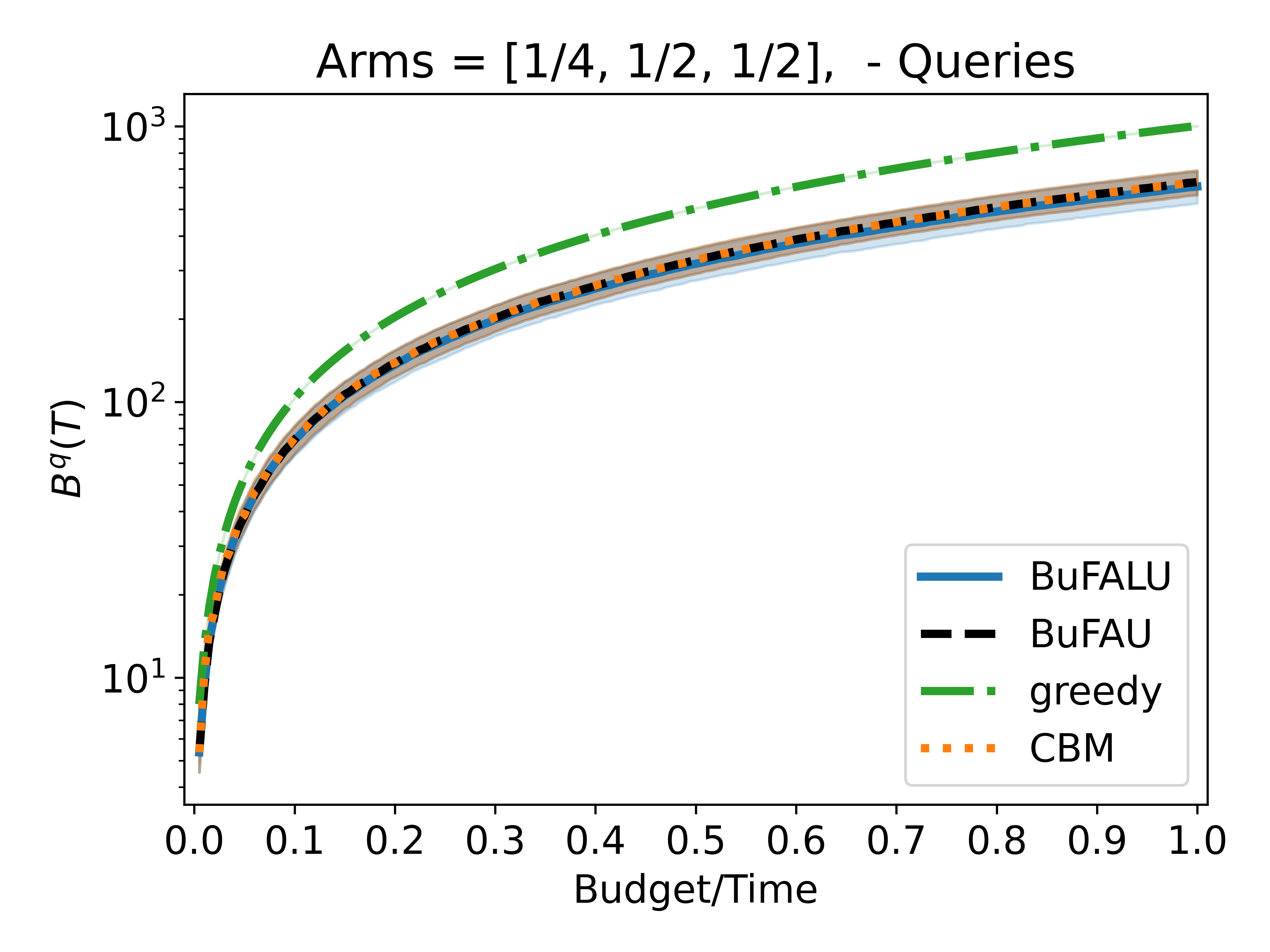}
}%\hspace{0.1cm}
\caption{Regret and Queries as a function of the allocated budget in two and three armed bandit problems.}
\label{figure:budget effect}
\end{figure}

\end{document}